\newtheorem{thm}{Theorem}
\newtheorem{lem}{Lemma}
\newtheorem{cor}{Corollary}
\newtheorem{pro}{Proposition}
\let\today\relax
\def\ps@pprintTitle{%
	\let\@oddhead\@empty
	\let\@evenhead\@empty
	\def\@oddfoot{\footnotesize\itshape
		{Submitted preprint} \hfill\today}%
	\let\@evenfoot\@oddfoot
}
\newcommand {\emptycomment}[1]{}
\newcommand{\be }{\begin{equation}}
	\newcommand{\ee }{\end{equation}}
\newcommand{\huaR}{\mathcal{R}}
\newcommand{\huaE}{\mathcal{E}}
\newcommand{\huaH}{\mathcal{H}}
\newcommand{\huaO}{\mathcal{O}}
\newcommand{\huaT}{\mathcal{T}}
\newcommand{\huaN}{\mathcal{N}}
\newcommand{\nono}{\nonumber}
\newcommand{\f}{\frac}
\newcommand{\ttt}{\theta}
\newcommand{\aaa}{\alpha}
\newcommand{\nn}{\langle}
\newcommand{\mm}{\rangle}
\newcommand{\ww}{\widetilde}
\newcommand{\la}{\lambda}
\newcommand{\mbb}{\mathbb}
\newcommand{\eee}{\epsilon}
\newcommand{\ooo}{\omega}
\newcommand{\kkk}{\kappa}
\newcommand{\rrrr}{\right}
\newcommand{\llll}{\left}
\newcommand{\wh}{\widehat}
\def\bea{\begin{eqnarray}}
	\def\eea{\end{eqnarray}}
\def\be{\begin{equation}}
	\def\ee{\end{equation}}
\def\blm{\begin{lem}}
	\def\elm{\end{lem}}
\def\bes{\begin{eqnarray*}}
	\def\ees{\end{eqnarray*}}
\def\o{\mathcal{O}}
\def\dd{\Lambda}
\def\Ab{\mathbf{A}}
\def\Bb{\mathbf{B}}
\def\Cb{\mathbf{C}}
\def\Db{\mathbf{D}}
\def\Eb{\mathbf{E}}
\def\ad{\Ab_{D,\la}}
\def\bd{\Bb_{D,\la}}
\def\cd{\Cb_{D,\la}}
\def\dd{\Db_{D,\la}}
\def\adj{\Ab_{D_j,\la}}
\def\bdj{\Bb_{D_j,\la}}
\def\cdj{\Cb_{D_j,\la}}
\def\ddj{\Db_{D_j,\la}}
\def\ads{\Ab_{D^*,\la}}
\def\cds{\Cb_{D^*,\la}}
\def\dds{\Db_{D^*,\la}}
\def\adjs{\Ab_{D_j^*,\la}}
\def\bdjs{\Bb_{D_j^*,\la}}
\def\cdjs{\Cb_{D_j^*,\la}}
\def\ddjs{\Db_{D_j^*,\la}}
\def\eds{\Eb_{D,D^*,\la}}
\begin{document}
	\title{Distributed Gradient Descent for Functional  Learning 
		}\vspace{2mm}
	\date{}
	

	\author{Zhan Yu\\ \small Department of Mathematics,   Hong Kong Baptist University\\ \small Waterloo Road, Kowloon, Hong Kong\\ \small   Email: mathyuzhan@gmail.com  
	\and
	Jun Fan\\ \small Department of Mathematics,   Hong Kong Baptist University\\ \small Waterloo Road, Kowloon, Hong Kong\\ \small   Email: junfan@hkbu.edu.hk
	\and
	Zhongjie Shi\\ \small Department of Statistics and Actuarial Science,   University of Hong Kong\\ \small Pok Fu Lam Road, Hong Kong\\ \small   Email: zshi2@hku.hk
	\and
	Ding-Xuan Zhou\\ \small School of Mathematics and Statistics, University of Sydney \\ \small Sydney NSW 2006, Australia\\
	\small   Email: dingxuan.zhou@sydney.edu.au
}

	\maketitle
	
	\begin{abstract}
In recent years, different types of {distributed and parallel} learning schemes have received increasing attention for their strong advantages in handling large-scale data information. In the information era, to face the big data challenges {that} stem from functional data analysis very recently, we  propose a novel distributed gradient descent functional  learning (DGDFL)  algorithm to tackle functional data across numerous local machines (processors) in the framework of reproducing kernel Hilbert space. Based on integral operator approaches, we provide the first  theoretical understanding  of the DGDFL algorithm in many different aspects {of} the literature. On the way of understanding DGDFL, firstly, a  data-based gradient descent functional  learning (GDFL) algorithm associated with a single-machine model is proposed and comprehensively studied.   Under mild conditions,  confidence-based optimal learning rates of DGDFL are obtained without the saturation boundary on the regularity index suffered in previous works in functional regression. We further provide a semi-supervised DGDFL approach to weaken the restriction on the maximal number of local machines to ensure optimal rates. To our best knowledge, the DGDFL provides the first {divide-and-conquer} iterative training approach to  functional learning  based on data samples of intrinsically  infinite-dimensional random functions (functional covariates) and enriches the {methodologies for} functional data analysis. 
	\end{abstract}
	
	\textbf{Keywords: functional data, gradient descent, learning theory, functional linear model, {divide and conquer}, semi-supervised learning, reproducing kernel, integral operator} 
	
	\section{Introduction}\vspace{0.000000000000000000000001mm}

Introduced by Ramsay in 1980s \cite{ramsay1982}, \cite{rd1991}, functional data analysis (FDA) has been intensively studied in recent years. Over the past three decades, {the} great success of FDA has been witnessed in a variety of fields including machine learning, image science, economics, medicine and electronic commerce \cite{wcm2016}. Different from conventional data analysis, FDA focuses on data that are intrinsically infinite-dimensional and often appear as random functions or time series. The high- or infinite-dimensional structure of functional data is a rich source of information and {brings} many opportunities for future studies in the information era. To reveal the functional nature, one of {the} most popularly studied frameworks is the functional linear model.   In this paper, we consider the functional linear model
\bea
Y=\aaa_0+\int_{\Omega}X(x)\beta^*(x)dx+\eee,  \label{flmodel}
\eea	
where $Y$ is a scalar response variable, $X:\Omega\rightarrow\mbb R$ is a square integrable functional predictor defined on a compact domain $\Omega\subset\mbb R^d$ for some positive integer $d$, $\beta^*\in L^2(\Omega)$ is the slope function, $\aaa_0$ is the intercept, $\eee$ is the random noise independent of $X$ with $\mbb E[\eee]=0$.
For the sake of simplicity, we assume $\mbb E[X]=0$ and $\aaa_0=0$.
Our goal is to recover the target functional $\eta^*$ given by
\bes
\eta^*(X)=\int_{\Omega}\beta^*(x)X(x)dx, \ X\in L^2(\Omega)
\ees
by constructing an estimator $\wh{\eta}_D$ based on a training sample set $D=\{(X_i,Y_i)\}_{i=1}^{|D|}$ consisting of $|D|$ independent copies of $(X,Y)$.
For a prediction $\eta$, the risk  is defined as  
\bes
\huaE (\eta)=\mbb E_{(X,Y)}\Big[\eta(X)-Y\Big]^2
\ees
where $(X,Y)$ is   independent of $\eta$ and the training data, $\mbb E_{(X,Y)}$ denotes the expectation taken over $X$ and $Y$ only. For any prediction rule $\wh{\eta}_D$ constructed from the training data set $D$,  its prediction accuracy can be measured by the excess risk
\bes
\huaE(\wh{\eta}_D)-\huaE(\eta^*)=\mbb E_X\Big[\wh{\eta}_D(X)-\eta^*(X)\Big]^2
\ees
where $\mbb E_X$ denotes the expectation with respect to $X$.

Recently, there is a growing literature circling the functional linear model \eqref{flmodel} \cite{rs2005}, \cite{ch2006}, \cite{hh2007}, \cite{yc2010}, \cite{cy2012}, \cite{tn2018}, \cite{fls2019}, \cite{tong2021}, \cite{ggs2022}, \cite{ctfg2022}, \cite{ls2022}, \cite{mg2022}. An earlier popular technique for handling such models is the functional principal component analysis (FPCA)  which performs the estimation of $\beta^*$ by  a linear combination of the eigenfunctions of the covariance function of the random function $X$ \cite{ch2006}, \cite{hh2007}. In the past decade,  introduced by Cai and Yuan \cite{yc2010}, \cite{cy2012}, an approach called the reproducing kernel approach to functional linear model {has grown} up quickly. The method introduces the RKHS framework in the functional linear model and  focuses on establishing estimation of the slope function $\beta^*$ which lies in a reproducing kernel Hilbert space (RKHS), {for details of RKHS, we refer to references} e.g.{\cite{yz2006}},{\cite{sz2007}}, \cite{yp2008}, \cite{yc2010},{\cite{db2016}}, \cite{cy2012},{\cite{gs2019}},{\cite{bach2023}}. A well-known strategy to implement the RKHS approach is to consider the Tikhonov regularization scheme (see e.g. \cite{zhou2003}, \cite{bpr2007}) over an RKHS $(\huaH_K,\nn\cdot,\cdot\mm_K,\|\cdot\|_K)$ induced by a Mercer kernel $K$ (continuous, symmetric, positive semi-definite function on $\Omega\times\Omega$). To be more precise, given a training sample $D=\{(X_i,Y_i)\}_{i=1}^{|D|}$ of $|D|$ independent copies of $(X,Y)$, one can utilize the estimator $\hat{\beta}_{D,\la}$ 
generated by the regularized least squares (RLS) scheme given by
\bea
\hat{\beta}_{D,\la}=\arg\min_{\beta\in\huaH_K}\left\{\f{1}{|D|}\sum_{i=1}^{|D|}\left(\int_{\Omega}\beta(x)X_i(x)dx-Y_i\right)^2+\la\|\beta\|_{K}^2\right\}  \label{rlseq}
\eea
to realize the approximation of $\beta^*$. There have been wide
studies on the convergence analysis of $\hat{\beta}_{D,\la}$ generated from the RLS scheme \eqref{rlseq} \cite{yc2010},  \cite{cy2012}, \cite{tn2018}, \cite{tong2021}.  

Our present work aims to establish a new distributed gradient descent functional learning algorithm (DGDFL) to solve the  functional linear model \eqref{flmodel} and systematically carry out convergence analysis of the algorithm.  The motivation of proposing  DGDFL is to face massive data challenges which appear everywhere in modern society. In a single-machine model, when the data scale of random functions that the machine needs to handle is extremely  large, it would be quite difficult to reduce the computational  time, burden and single-machine memory requirements. Moreover, single-machine models are not convenient for preserving privacy. To address the above issues, in this paper, 
inspired by a divide and conquer approach  \cite{zdw2015}, we propose DGDFL for handling functional data. Distributed learning is a very hot topic and {a} preferable approach to conquer massive data information challenges. The theoretical foundation of divide-and-conquer learning has been established in the framework of learning theory in recent work \cite{zdw2015}, \cite{lgz2017}, \cite{lz2018}, \cite{zhou2018}, \cite{gls2019}, \cite{hwz2020},  \cite{sw2021}. {There is also another route for designing {distributed} learning algorithms, often referred to as the decentralized distributed learning algorithm (e.g. \cite{kprr2018}, \cite{rrr2020}, \cite{ypg2020}, \cite{xwct2021}, \cite{jssh2023}).} However, in the literature of functional data analysis {for handling datasets consisting of random functions},  theoretical understanding of divide-and-conquer learning has not started until the  very recent papers \cite{tong2021}, \cite{ls2022} where the authors mainly focus on convergence analysis of the estimator from Tikhonov RLS schemes. It can be witnessed that a divide-and-conquer iterative training approach for {the} computational realization of recovering $\beta^*$ is still lacking in the functional linear model. Moreover, theoretical results on {the} convergence of such algorithms have not been established yet. To address the issue, we would introduce our divide-and-conquer iterative algorithm DGDFL  and investigate its convergence ability in different aspects.

    To realize the goal of recovering the functional $\eta^*$, we first propose a functional-data based gradient descent functional learning (GDFL) algorithm that starts with $\beta_{0,D}=0$ and is iteratively given by 
\bea
\beta_{t+1,D}=\beta_{t,D}-\gamma_t\f{1}{|D|}\sum_{i=1}^{|D|}\left(\int_\Omega\beta_{t,D}(x)X_i(x)dx-Y_i\right)\int_\Omega K(\cdot,x)X_i(x)dx, \quad t=0,1,2,...\label{alg1}
\eea
where $\gamma_k>0$ is the stepsize, $K:\Omega\times\Omega\rightarrow\mbb R$ is a Mercer kernel. The corresponding functional estimator for $\eta^*$ is defined by
$\eta_{t,D}(X)=\int_\Omega \beta_{t,D}(x)X(x)dx$.
 Based on a divide-and-conquer approach, our distributed gradient descent functional learning  (DGDFL) algorithm  starts with partitioning the data set $D=\{(X_i,Y_i)\}_{i=1}^{|D|}$  into $m$ disjoint sub-datasets $\{D_j\}_{j=1}^{m}$ with corresponding disjoint union $D=\bigcup_{j=1}^mD_j$. Then we assign the information of corresponding data set $D_j$ to  one local machine (processor) to produce a local estimator $\beta_{t,D_j}$ via the algorithm \eqref{alg1}. These local estimators
are communicated to a central processor, the central processor synthesizes a global estimator $\overline{\beta_{t,D}}$
by taking the following weighted average
\bea
\overline{\beta_{t,D}}=\sum_{j=1}^m\f{|D_j|}{|D|}\beta_{t,D_j}.  \label{alg2}
\eea
 Then the corresponding {divide-and-conquer} based prediction $\overline{\eta_{t,D}}$ is obtained by 
$\overline{\eta_{t,D}}(X)=\int_\Omega \overline{\beta_{t,D}}(x)X(x)dx$. {We remark that, in the above model, the disjoint union $\bigcup_{j=1}^mD_j$ also includes the case when the data are stored naturally across multiple local machines in a distributive way, and they are not combined at the very beginning for the reasons of protecting privacy and reducing potential costs. Then the data partitioning step is not required, and in this case, the GDFL naturally cannot be done by a single machine or processor and a {divide-and-conquer} approach (DGDFL) has to be considered. There are many social examples belonging to this scenario. For example, in financial markets, the consumers' behavior data are stored in different institutions, these local institutions are not allowed to directly share data information to the others and their consumers' behavior data are not accessible to the public due to privacy {considerations}.
	 In medical systems, the clinical records from different medical institutions are often sensitive and cannot be shared. Thus it is difficult to analyze these sensitive data by directly combining them together. However, these institutions {desires} to collaboratively conduct training based on these clinical data to optimize medical decision-making under the premise of protecting their own clinical records.
		Our divide-and-conquer based learning algorithm DGDFL enables these local data holders to collaborate without directly sharing their data and improve the efficiency of analyzing functional data. For a given big data set with {the} same type of elements that are not naturally divided previously, there is no coercive restriction on the allocating manner of the data set $D$. In our model, the data can be allocated with great freedom. For example, for a labeled data set $\{X_i,Y_i\}_{i=1}^{|D|}$ which forms random copies of $(X,Y)$, we only need to allocate these data by randomly selecting $D_i$ elements from $D\backslash \bigcup_{j=0}^{i-1}D_j$ with $D_0=\emptyset$ at $i$-th step with $i=1,2,...,m$ according to any distribution. Then there are $D_i$ data being naturally allocated to $i$-th local machine and $D_i\bigcap D_j=\emptyset$ for any $i\neq j$.} As far as we know, the existing studies on stochastic gradient descent functional learning methods only appear very recently in references \cite{ctfg2022}, \cite{ggs2022}, \cite{mg2022} which focus on online  learning with sound convergence analysis  performed. However, these works are essentially related to single-machines and are relatively restricted when the functional data scale  is extremely large.  As a divide-and-conquer based training scheme, our divide-and-conquer functional learning algorithm can overcome the limitation and substantially reduce the computational burden in time
and memory. 

To investigate the approximation ability of our algorithms, we  establish estimates for the estimator $\eta_{t,D}$ associated with one single-processor first and comprehensively study the learning rates of the excess risk 
\bea
\huaE(\eta_{t,D})-\huaE(\eta^*)  \label{er1}
\eea
of the estimator $\eta_{t,D}$. The estimates related to $\eta_{t,D}$ would play an important role to further study our main estimator $\overline{\eta_{t,D}}$ and its associated excess risk 
 \bea
 \huaE(\overline{\eta_{t,D}})-\huaE(\eta^*).  \label{er2}
 \eea

We briefly summarize the contributions of the current work. To our best knowledge, the work is the first to propose a {divide-and-conquer} iterative approach  to study the functional linear model via DGDFL and provide solid convergence analysis. Under some mild conditions, we first establish basic error analysis and optimal rates for the GDFL algorithm \eqref{alg1}. This part of {the} main results for GDFL are also foundational and meaningful for the field of functional data analysis. Based on analysis of \eqref{alg1}, we comprehensively establish convergence analysis for DGDFL algorithm \eqref{alg2}. Optimal learning rates are obtained under mild conditions.   Our proofs also reveal the influence of two types of noise conditions on convergence results. It is shown that the noise condition on $\eee$ also influences the required maximal number $m$ of local processors to guarantee the optimal learning rates of the excess risk \eqref{er2}. Our main results also indicate that GDFL and DGDFL can overcome the saturation phenomenon on {the} regularity index of the target function suffered by  previous works in functional learning. Furthermore, based on our DGDFL algorithm, we also establish a semi-supervised DGDFL algorithm by introducing additional unlabeled functional data. We show that with unlabeled data, the restriction on $m$ can be relaxed, even when $\beta^*$ satisfies a weaker regularity restriction (detailed discussions are provided in section \ref{mainresults_discussion}).

\section{Main results and discussions}\label{mainresults_discussion}
  We denote the standard inner product $\nn f,g\mm=\int_{\Omega}f(x)g(x)dx$ and $L^2(\Omega)$-norm $\|f\|_{L^2(\Omega)}=(\int_{\Omega}|f(x)|^2dx)^{1/2}$ for any measurable functions $f$, $g$ defined on $\Omega$.   For a real, symmetric, square integrable and nonnegative definite function $R:\Omega\times\Omega\rightarrow \mbb R$, use $L_R:L^2(\Omega)\rightarrow L^2(\Omega)$ to denote the integral operator
\bea
L_R(f)(\cdot)=\left\nn R(x,\cdot), f\right\mm=\int_\Omega R(x,\cdot)f(x)dx. \label{lrdef}
\eea
For this operator, the spectral theorem implies that there exists a set of normalized eigenfunctions
$\{\psi_k^R:k\geq1\}$ and a sequence of eigenvalues $\ooo_1^R\geq\ooo_2^R\geq\cdots>0$ such that
$R(x,y)=\sum_{k\geq1}\ooo_k^R\psi_k^R(x)\psi_k^R(y), \ \ x,y\in\Omega$,
and
$L_R(\psi_k^R)=\ooo_k^R\psi_k^R, \ k=1,2,...$
Then the square root operator of $L_R$ can be defined by
$L_R^{1/2}(\psi_k^R)=L_{R^{1/2}}(\psi_k^R)=\sqrt{\ooo_k^R}\psi_k^R$,
where
$R^{1/2}(x,y)=\sum_{k\geq1}\sqrt{\ooo_k^R}\psi_k^R(x)\psi_k^R(y)$.
We also define 
$(R_1R_2)(x,y)=\int_{\Omega}R_1(x,u)R_2(u,y)du$.
Then it is easy to see $L_{R_1R_2}=L_{R_1}L_{R_2}$. {For any two self-adjoint operators $L_1$ and $L_2$, we write $L_1\succeq L_2$ if $L_1-L_2$ is positive semi-definite.}
In functional learning, the covariance function $C$ of $X$ is an important object which is defined as 
\bes
C(x,y)=\mbb E\Big[X(x)X(y)\Big], \forall x,y\in \Omega.
\ees
It is easy to see that the covariance function $C$ is symmetric and positive semi-definite.  In this paper, we assume that $C$ is continuous and therefore $C$ is a Mercer kernel. Then the corresponding operator $L_C$ can be defined accordingly with $R$ replaced by $C$ in \eqref{lrdef} and $L_C$ is compact, positive semi-definite and of trace class.  Due to the reason that $K$ and $C$ are Mercer kernels on the compact set $\Omega$, there exist positive finite constants $B_C$ and $B_K$ such that 
\bes
B_C=\sup_{x\in \Omega}\sqrt{C(x,x)}, \ \ B_K=\sup_{x\in \Omega}\sqrt{K(x,x)}.
\ees
Hence the spectral norms of $L_C$ and $L_K$ can be bounded by $\|L_C\|\leq B_C^2$ and $\|L_K\|\leq B_K^2$. Given a Mercer kernel $K$ and covariance function $C$, we define a composite operator 
\bea
T_{C,K}=L_K^{1/2}L_CL_K^{1/2}.   \label{tck}
\eea
If we use $\overline{\huaH_K}$ to denote the closure of $\huaH_K$ in $L^2(\Omega)$, then it is well-known that $L_K^{1/2}:\overline{\huaH_K}\rightarrow\huaH_K$ is an isomorphism, namely, $\|L_K^{1/2}f\|_{K}=\|f\|_{L^2(\Omega)}$ for $f\in\overline{\huaH_K}$. In this paper, for brevity, we assume that $\overline{\huaH_K}=L^2(\Omega)$.
We use the effective dimension $\huaN(\la)$ to measure the regularity of the operator $T_{C,K}$ that is defined to be the trace of the operator $(\la I+T_{C,K})^{-1}T_{C,K}$:
\bea
\huaN(\la)=\text{Tr}\left((\la I+T_{C,K})^{-1}T_{C,K}\right).  \label{effetivedef}
\eea
 We assume the following capacity condition that there exists a constant $c_0>0$ and some $\aaa\in(0,1]$ such that for any $\la>0$,
\bea
\huaN(\la)\leq c_0\la^{-\aaa}.  \label{capacity}
\eea
The effective dimension \eqref{effetivedef} and the decaying condition \eqref{capacity} have been widely considered in  learning theory of kernel ridge regression problems (e.g. \cite{clz2017}, \cite{lgz2017}, \cite{gls2019}, \cite{yhsz2021}, \cite{sl2022},{\cite{tong2021}}). The condition is slightly more general than the corresponding {entropy} assumption in the seminal work \cite{cy2012} where a polynomial decaying condition $\ww\ooo_k\leq ck^{-2r}$ on eigenvalues $\{\ww{\ooo}_k\}$ of the operator $T_{C,K}$ {associated with the composite kernel $K^{1/2}CK^{1/2}$,}
is used for some constant $c$ and $r>1/2$ where $\{(\ww\ooo_k,\ww\psi_k)\}$ are eigenpairs of  $T_{C,K}$. In fact, an easy calculation shows that $\ww\ooo_k\leq ck^{-2r}$ implies  $\huaN(\la)\leq c'\la^{-\aaa}$ with $\aaa=\f{1}{2r}$. {Thus our assumption is more general than the entropy assumption.} {Additionally, the above decaying condition is satisfied for some well-known kernel function classes such as Sobolev classes and Besov classes that are commonly considered, thereby ensuring the meaningfulness of the capacity assumption in a large number of practical  occasions.}
To establish theoretical results for our GDFL and DGDFL algorithms, we also assume the following boundedness condition for predictor $X$, that is, there is an absolute constant $\kkk$ such that
\bea
\left\|L_K^{1/2}X\right\|_{L^2(\Omega)}\leq\kkk.  \label{lk1/2}
\eea
{Introduced by \cite{tn2018}, this technical assumption} has been adopted in recent studies {on functional linear models} \cite{tn2018}, \cite{tong2021}, \cite{tong2023} and in parts of main results in \cite{ls2022}. {Similar to the idea of assuming the input space of data samples to be a compact space in prior art of statistical learning theory e.g. \cite{clz2017}, \cite{lgz2017}, \cite{hwz2020} this assumption can be understood as a natural extension of boundedness condition on the predictor $X$ in the scenario of functional linear model \eqref{flmodel}. For example, an easy analysis shows that, if $X$ lies in the bounded subset $U=\{X\in L^2(\Omega): \|X\|_{L^2(\Omega)}\leq B\}$, then it is easy to discern that $\|L_K^{1/2}X\|\leq B\|K\|_{L^2(\Omega\times\Omega)}^{1/2}$. Additionally, just as pointed out by references e.g. \cite{tong2021}, \cite{tong2023}, the real-world data-sampling processes are usually bounded, so the assumption is reasonable and accessible in many practical settings.} 
In addition, we need a mild regularity condition on $\beta^*$, that is, there exists some $\theta\in[0,\infty)$ and function $g^*\in L^2(\Omega)$ such that
\bea
L_K^{-1/2}\beta^*= T_{C,K}^\theta g^*. \label{regularity}
\eea
{The technical assumption \eqref{regularity} can be treated as a regularity condition of the target function (functional) in the  functional-linear-model scenario. This assumption has been considered in the learning theory of functional linear models of prior art such as \cite{ggs2022}, \cite{mg2022}, \cite{tn2018} and  \cite{tong2021}  for establishing main results.   If $L_C\succeq \tau L_K^t$ for some $\tau>0$ and $t>0$, then it is easy to discern that the assumption \eqref{regularity} is guaranteed when $\beta^*\in L_K^{1/2+\ttt(t+1)}\left(L^2(\Omega)\right)$ with $\ttt\in[0,1/(2+2t)]$. This form coincides with the widely-adopted regularity assumption $\beta^*\in L_K^{\ttt}\left(L^2(\Omega)\right)$ with $\ttt\in[0,1]$ in a large literature of the kernel-based learning theory of prior art e.g. \cite{clz2017}, \cite{lgz2017}, \cite{lz2018}, \cite{gls2019}, \cite{hwz2020},  \cite{sl2022} thereby showing an obvious relationship between the current assumption and these regularity assumptions.  It is also well understood that, in learning theory, for algorithms to learn a target function $\beta^*$ based on a data set, a non-trivial learning rate (convergence rate) often depends on the regularity of the target function e.g. \cite{clz2017}, \cite{lgz2017}, \cite{tn2018}, \cite{gls2019}, \cite{hwz2020}, \cite{tong2021}, \cite{yhsz2021}, \cite{sl2022}, \cite{ggs2022}.  
The	assumption \eqref{regularity} in fact implies the target slope function $\beta^*$ lies in the underlying space $\huaH_K$. We also note that there also exist rather mature practical techniques to simulate the learning performance of algorithms in RKHS. Thus all the above discussions indicate the regularity assumption \eqref{regularity} is a reasonable assumption in the current setting.}
In this paper, we consider two types of  noise conditions. The first is the second moment condition
\be
\mbb E\left[|\eee|^2\right]\leq\sigma^2. \label{2momc}
\ee
{Assumption \eqref{2momc} is a very common and standard technical assumption on random noise in  functional regression e.g. \cite{cy2012}, \cite{tn2018}, \cite{fls2019}, \cite{ggs2022}, \cite{ls2022}, \cite{mg2022}}. We also consider  the following well-known moment condition which is slightly stricter than \eqref{2momc}.  That is,
 there exist $\sigma>0$ and $M>0$ such that for any integer $p\geq2$,
\be
\mbb E[|\eee|^p]\leq\f{1}{2}\sigma^2p!M^{p-2}. \label{pmomc}
\ee
  Condition \eqref{pmomc} {is usually referred to as Bernstein condition and} often appears in the setting of kernel-based learning theory {by} impos{ing} restrictions on the performance of random variables e.g. \cite{gs2013}, \cite{Wellner2013}, \cite{wh2019a}, \cite{tong2021}. Noises that satisfy condition \eqref{pmomc} include the well-known {noises encountered in practice such as} Gaussian noises, sub-Gaussian noises, {noises with compactly supported distributions and noises associated with some types of exponential distributions. Hence, in practical settings, the noise assumptions in this paper are reasonable and easily verifiable}. {In the subsequent sections of this paper, we aim to establish a comprehensive set of convergence results for GDFL, DGDFL, and semi-supervised DGDFL. To achieve this, we will establish our main theorems by considering these two widely recognized random noise conditions within a unified analytical framework.}

\subsection{Gradient descent functional learning algorithm} \label{section21}
Our first main result reveals the convergence ability of the basic GDFL algorithm \eqref{alg1}. We establish explicit optimal learning rates of the excess risk \eqref{er1} of $\eta_{t,D}$.
\begin{thm}\label{gdfl_thm1}
Assume conditions \eqref{capacity}-\eqref{regularity} hold. Let the	stepsize be selected as $\gamma_k=\f{\gamma}{(k+1)^{\mu}}, 0\leq\mu<1, 0<\gamma\leq \f{1}{(1+\kkk)^2(1+B_C^2B_K^2)}$, total iteration step $t=\left\lfloor|D|^{\f{1}{(2\theta+\aaa+1)(1-\mu)}}\right\rfloor$.
If noise condition \eqref{2momc} holds, we have, with probability at least $1-\delta$,
 \bes
 \huaE(\eta_{t,D})-\huaE(\eta^*)	\leq
 \begin{dcases}
 	C_1^*\f{1}{\delta^2}\left(\log \f{16}{\delta}\right)^6|D|^{-\f{1}{\aaa+1}}\left(\log|D|\right)^2,\quad \ttt=0,\\
 	C_2^*\f{1}{\delta^2}\left(\log \f{16}{\delta}\right)^6|D|^{-\f{2\ttt+1}{2\ttt+\aaa+1}} ,\quad \ttt>0,\\
 \end{dcases}
 \ees
 and if  noise condition \eqref{pmomc} holds, we have, with probability at least $1-\delta$,
 \bes
 \huaE(\eta_{t,D})-\huaE(\eta^*)\leq
 \begin{dcases}
 	C_1^*\left(\log \f{16}{\delta}\right)^6|D|^{-\f{1}{\aaa+1}}\left(\log|D|\right)^2,\quad \ttt=0,\\
 	C_2^*\left(\log \f{16}{\delta}\right)^6|D|^{-\f{2\ttt+1}{2\ttt+\aaa+1}} ,\quad \ttt>0,\\
 \end{dcases}
 \ees
 $C_1^*$ and $C_2^*$ are absolute constants given in the proof.
\end{thm}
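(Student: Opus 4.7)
The plan is to recast \eqref{alg1} in integral-operator form and then apply the standard filter-function strategy for iterative regularization, adapted to the functional-linear setting. Introducing the empirical covariance $C_D(x,y)=\frac{1}{|D|}\sum_{i=1}^{|D|}X_i(x)X_i(y)$, the empirical composite operator $T_{C_D,K}=L_K^{1/2}L_{C_D}L_K^{1/2}$, and the empirical data element $h_D=\frac{1}{|D|}\sum_{i=1}^{|D|}Y_iL_K^{1/2}X_i\in L^2(\Omega)$, the identity $\int_\Omega K(\cdot,x)X_i(x)dx=L_KX_i$ lets me apply $L_K^{-1/2}$ to \eqref{alg1} through the isomorphism $L_K^{1/2}\colon\overline{\huaH_K}\to\huaH_K$. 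With $\widetilde\beta_{t,D}=L_K^{-1/2}\beta_{t,D}$ this produces the clean recursion
\[\widetilde\beta_{t+1,D}=(I-\gamma_tT_{C_D,K})\widetilde\beta_{t,D}+\gamma_th_D,\qquad\widetilde\beta_{0,D}=0,\]
and the excess risk collapses to $\huaE(\eta_{t,D})-\huaE(\eta^*)=\|T_{C,K}^{1/2}(\widetilde\beta_{t,D}-\widetilde\beta^*)\|_{L^2(\Omega)}^2$ with $\widetilde\beta^*=L_K^{-1/2}\beta^*=T_{C,K}^\theta g^*$ by \eqref{regularity}.

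Unrolling the recursion gives $\widetilde\beta_{t,D}=g_t(T_{C_D,K})h_D$ for the gradient-descent filter $g_t(\sigma)=\sum_{k=0}^{t-1}\gamma_k\prod_{j=k+1}^{t-1}(1-\gamma_j\sigma)$ and residual polynomial $r_t(\sigma)=\prod_{j=0}^{t-1}(1-\gamma_j\sigma)$. I use the splitting
\[\widetilde\beta_{t,D}-\widetilde\beta^*=-r_t(T_{C_D,K})\widetilde\beta^*+g_t(T_{C_D,K})\bigl(h_D-T_{C_D,K}\widetilde\beta^*\bigr),\]
and observe that, since $Y_i=\eta^*(X_i)+\eee_i$ and $T_{C_D,K}\widetilde\beta^*=\frac{1}{|D|}\sum_i\eta^*(X_i)L_K^{1/2}X_i$, the ``noise'' factor simplifies to $h_D-T_{C_D,K}\widetilde\beta^*=\frac{1}{|D|}\sum_i\eee_iL_K^{1/2}X_i$. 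For stepsizes $\gamma_k=\gamma(k+1)^{-\mu}$ with the prescribed upper bound on $\gamma$, standard polynomial-filter estimates give $\sup_\sigma\sigma^s|r_t(\sigma)|\lesssim t^{-(1-\mu)s}$ for \emph{every} $s\geq 0$, which is the ``no saturation'' property distinguishing iterative methods from Tikhonov regularisation, together with $\sup_\sigma|g_t(\sigma)|\lesssim t^{1-\mu}$; the implicit regularization level is $\lambda_t\asymp t^{-(1-\mu)}$.

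To finish the bias estimate $\|T_{C,K}^{1/2}r_t(T_{C_D,K})T_{C,K}^\theta g^*\|$ I insert $(T_{C,K}+\lambda_tI)^{\pm 1/2}$ and $(T_{C_D,K}+\lambda_tI)^{\pm 1/2}$ factors and replace population powers by empirical ones; the resulting operator ratios $\|(T_{C,K}+\lambda_tI)^{1/2}(T_{C_D,K}+\lambda_tI)^{-1/2}\|$ (and its inverse) are controlled on a high-probability event by a Hilbert--Schmidt concentration bound for $T_{C,K}-T_{C_D,K}$, where the capacity condition \eqref{capacity} enters via $\operatorname{Tr}((T_{C,K}+\lambda_tI)^{-1}T_{C,K})\leq c_0\lambda_t^{-\aaa}$. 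The variance term $\|T_{C,K}^{1/2}g_t(T_{C_D,K})\frac{1}{|D|}\sum_i\eee_iL_K^{1/2}X_i\|$ is a sum of centred independent Hilbert-space valued random variables, and, using \eqref{lk1/2} together with \eqref{capacity} to control its second moment, a Bernstein-type inequality under \eqref{pmomc} produces the $(\log(16/\delta))^6$ factor, while under the weaker \eqref{2momc} only a Chebyshev/moment bound is available, contributing the extra $\delta^{-2}$ factor.

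The last step is to balance bias and variance. The bias scales like $\lambda_t^{2\theta+1}$ and the variance like $\lambda_t^{-\aaa}/|D|$, so the optimal $\lambda_t\asymp|D|^{-1/(2\theta+\aaa+1)}$ is realised precisely by $t=\lfloor|D|^{1/((2\theta+\aaa+1)(1-\mu))}\rfloor$, producing the rate $|D|^{-(2\theta+1)/(2\theta+\aaa+1)}$ for $\theta>0$; the extra $(\log|D|)^2$ correction at $\theta=0$ comes from a logarithmic factor in the variance bound when the source condition degenerates. The principal technical obstacle I expect is the uniform control of the operator perturbations inside the polynomials $r_t$ and $g_t$ without introducing a saturation floor in $\theta$: bounds of the type $\|T_{C,K}^{1/2}r_t(T_{C_D,K})T_{C,K}^\theta\|$ must be valid for arbitrarily large $\theta$ with constants that do not blow up with $\theta$, and the argument must be organised so that only the concentration step differs between the two noise regimes \eqref{2momc} and \eqref{pmomc}.
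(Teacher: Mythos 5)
Your operator reformulation, the splitting
\bes
L_K^{-1/2}\beta_{t,D}-L_K^{-1/2}\beta^*=-r_t(T_{\wh{C}_D,K})L_K^{-1/2}\beta^*+g_t(T_{\wh{C}_D,K})\f{1}{|D|}\sum_{i=1}^{|D|}\eee_iL_K^{1/2}X_i,
\ees
the identification of the pure-noise structure of the second term, and the final balancing with $\la_t\asymp t^{-(1-\mu)}$ all match what the theorem needs; your plan for the noise term (ratio factors, Bernstein under \eqref{pmomc}, Chebyshev under \eqref{2momc} yielding the extra $\delta^{-2}$) is essentially the paper's treatment of $\dd$ via Lemma \ref{elem} and Proposition \ref{ddest}. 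The genuine gap is in the bias term $\|T_{C,K}^{1/2}r_t(T_{\wh{C}_D,K})T_{C,K}^{\ttt}g^*\|_{L^2(\Omega)}$, which must be handled for \emph{arbitrary} $\ttt\geq0$. The mechanism you propose---inserting $(\la I+T_{C,K})^{\pm1/2}$, $(\la I+T_{\wh{C}_D,K})^{\pm1/2}$ and ``replacing population powers by empirical ones''---only works for powers at most one: the ratio $\|(\la I+T_{C,K})^{1/2}(\la I+T_{\wh{C}_D,K})^{-1/2}\|$ is controlled by concentration because Cordes' inequality $\|A^sB^s\|\leq\|AB\|^s$ holds for $0<s\leq1$, but there is no concentration-based control of $\|(\la I+T_{C,K})^{\ttt}(\la I+T_{\wh{C}_D,K})^{-\ttt}\|$ for large $\ttt$, and power-perturbation bounds such as $\|A^s-B^s\|\leq\|A-B\|^s$ for the fractional part of $\ttt$ degrade the rate and reintroduce exactly the saturation the theorem is designed to eliminate. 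You name this as ``the principal technical obstacle'' but do not resolve it, so as written your argument covers only a bounded range of $\ttt$, which is precisely the regime already reachable by Tikhonov-type analyses.

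The missing idea is the paper's comparison with the data-free iterates \eqref{datafreeseq}, equivalently the telescoping identity
\bes
r_t(T_{\wh{C}_D,K})-r_t(T_{C,K})=\sum_{k=0}^{t-1}\gamma_k\,\pi_{k+1}^{t-1}(T_{\wh{C}_D,K})\left(T_{C,K}-T_{\wh{C}_D,K}\right)\pi_0^{k-1}(T_{C,K}).
\ees
Applying this to $L_K^{-1/2}\beta^*=T_{C,K}^{\ttt}g^*$, the source power only ever multiplies the \emph{population} residual, and Lemma \ref{pilem} gives $\|\pi_0^{k-1}(T_{C,K})T_{C,K}^{\ttt}g^*\|_{L^2(\Omega)}\lesssim k^{-\ttt(1-\mu)}$ for every $\ttt\geq0$ with no saturation, while the empirical operator now enters only linearly through $(T_{C,K}-T_{\wh{C}_D,K})$ dressed with half-power ratios, i.e.\ through $\bd$ and $\cd$, which are exactly the quantities concentration controls (\eqref{bdest}, \eqref{cdest}). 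With this one identity your decomposition becomes the paper's: the telescoped sum is its $S_1$, the population residual term is Theorem \ref{datafreees}, and its $S_2$, $S_3$ arise from an algebraically redundant but harmless further splitting of the data term; the rest of your outline, including the $\log|D|$ correction at $\ttt=0$ coming from Lemma \ref{loglem}, then goes through. One further small repair: the summands $g_t(T_{\wh{C}_D,K})\eee_iL_K^{1/2}X_i$ are \emph{not} independent (the $X_i$ enter $T_{\wh{C}_D,K}$), so concentration must be applied to $(\la I+T_{C,K})^{-1/2}\f{1}{|D|}\sum_i\eee_iL_K^{1/2}X_i$ after splitting off $\|(\la I+T_{\wh{C}_D,K})g_t(T_{\wh{C}_D,K})\|\lesssim1+\la t^{1-\mu}$ (Lemma \ref{sumlalem}) and the ratio factors, rather than to the full summands as your phrasing suggests.
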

Theorem \ref{gdfl_thm1} establishes  confidence-based convergence rates of GDFL \eqref{alg1}. We can see that when $\ttt>0$, optimal learning rates can be always derived. Even when $\ttt=0$, a confidence-based optimal rate up to logarithmic factor which is minimal effect can also be obtained. The results also enrich the understands of functional learning in the existing literature.

The next main result reveals confidence-based learning rates of the estimator $\beta_{t,D}$ generated from GDFL \eqref{alg1} in terms of the RKHS norm $\|\cdot\|_K$.

\begin{thm}\label{gdfl_thm2}
	Assume conditions \eqref{capacity}, \eqref{lk1/2} hold and \eqref{regularity} holds for some $\ttt>0$. Let the	stepsize be selected as $\gamma_k=\f{\gamma}{(k+1)^{\mu}}, 0\leq\mu<1, 0<\gamma\leq \f{1}{(1+\kkk)^2(1+B_C^2B_K^2)}$, total iteration step $t=\left\lfloor|D|^{\f{1}{(2\theta+\aaa+1)(1-\mu)}}\right\rfloor$. Then we have with probability at least $1-\delta$,
	\bes
	\left\|\beta_{t,D}-\beta^*\right\|_K	\leq
	\begin{dcases}
		C_3^*\f{1}{\delta}\left(\log \f{16}{\delta}\right)^3|D|^{-\f{\ttt}{2\ttt+\aaa+1}},\quad if \ {\textbf{the noise condition}}\ \eqref{2momc}\ holds,\\
		C_3^*\left(\log \f{16}{\delta}\right)^3|D|^{-\f{\ttt}{2\ttt+\aaa+1}},\quad if \ {\textbf{the noise condition}} \ \eqref{pmomc}\ holds,\\
	\end{dcases}
	\ees
	with absolute constant $C_3^*$ given in the proof.
\end{thm}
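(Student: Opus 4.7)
The plan is to transfer the problem from $\huaH_K$ to $L^2(\Omega)$ via the isometry $L_K^{1/2}$. Set $f_{t,D}=L_K^{-1/2}\beta_{t,D}$ and $f^*=L_K^{-1/2}\beta^*=T_{C,K}^\theta g^*$, the last equality coming from the regularity condition \eqref{regularity}. Since $L_K^{1/2}:\overline{\huaH_K}\to\huaH_K$ is an isomorphism, we have $\|\beta_{t,D}-\beta^*\|_K=\|f_{t,D}-f^*\|_{L^2(\Omega)}$. Applying $L_K^{-1/2}$ to \eqref{alg1} rewrites the GDFL iteration as $f_{t+1,D}=(I-\gamma_t\widehat{T}_D)f_{t,D}+\gamma_t h_D$, where $\widehat{T}_D=\frac{1}{|D|}\sum_{i=1}^{|D|}(L_K^{1/2}X_i)\otimes(L_K^{1/2}X_i)$ and $h_D=\frac{1}{|D|}\sum_{i=1}^{|D|}Y_iL_K^{1/2}X_i$. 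The corresponding data-free companion iteration is $f_{t+1}^*=(I-\gamma_t T_{C,K})f_t^*+\gamma_t T_{C,K}f^*$ with $f_0^*=0$, so that our task becomes controlling $\|f_{t,D}-f^*\|_{L^2(\Omega)}$.

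Next I would use the decomposition $f_{t,D}-f^*=(f_{t,D}-f_t^*)+(f_t^*-f^*)$. For the approximation part, the data-free iteration satisfies $f_t^*-f^*=-\prod_{k=0}^{t-1}(I-\gamma_k T_{C,K})T_{C,K}^\theta g^*$, and a spectral-calculus estimate of the form $\|\prod_{k=0}^{t-1}(I-\gamma_k T_{C,K})T_{C,K}^\theta\|\lesssim\bigl(\sum_{k=0}^{t-1}\gamma_k\bigr)^{-\theta}$ combined with $\sum_{k=0}^{t-1}\gamma_k\asymp t^{1-\mu}$ under $\gamma_k=\gamma/(k+1)^\mu$ and the prescribed $t=\lfloor|D|^{1/((2\theta+\alpha+1)(1-\mu))}\rfloor$ yields the target rate $|D|^{-\theta/(2\theta+\alpha+1)}$. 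For the sample part, unfolding both iterations gives
\[
f_{t,D}-f_t^*=\sum_{k=0}^{t-1}\gamma_k\prod_{j=k+1}^{t-1}(I-\gamma_j\widehat{T}_D)\bigl[(T_{C,K}-\widehat{T}_D)f_k^*+(h_D-T_{C,K}f^*)\bigr].
\]
The noise piece $h_D-T_{C,K}f^*=\frac{1}{|D|}\sum_i(Y_i-\langle f^*,L_K^{1/2}X_i\rangle)L_K^{1/2}X_i$ is controlled by a Pinelis/Bernstein-type concentration inequality for Hilbert-space-valued random variables; the weaker second-moment assumption \eqref{2momc} gives a polynomial-in-$1/\delta$ bound, while the Bernstein-type assumption \eqref{pmomc} upgrades this to a logarithmic-in-$1/\delta$ bound, reproducing exactly the dichotomy in the statement. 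The operator piece $(T_{C,K}-\widehat{T}_D)f_k^*$ is bounded via concentration of $\widehat{T}_D$ around $T_{C,K}$ coupled with a uniform bound on $\|f_k^*\|_{L^2(\Omega)}$ extracted from the approximation analysis. The iteration factors $\prod_{j=k+1}^{t-1}(I-\gamma_j\widehat{T}_D)$ are handled through the contraction $\|I-\gamma_j\widehat{T}_D\|\le 1$ permitted by the step-size restriction and, where a smoothing factor is required, by operator inequalities that compare $\widehat{T}_D$ with $(\lambda I+T_{C,K})$.

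The main obstacle is that, in contrast to the excess-risk analysis of Theorem \ref{gdfl_thm1}, which enjoys an extra $T_{C,K}^{1/2}$-smoothing factor (the excess risk is $\|T_{C,K}^{1/2}(f_{t,D}-f^*)\|_{L^2(\Omega)}^2$), here we must control the sample error in the bare $L^2(\Omega)$-norm, losing a half power of the source regularity. The natural remedy is to intercalate a regularization parameter $\lambda\asymp|D|^{-1/(2\theta+\alpha+1)}$ and split every iteration operator as $(\lambda I+T_{C,K})^{1/2}\cdot(\lambda I+T_{C,K})^{-1/2}$, exploiting the capacity assumption \eqref{capacity} through the effective dimension $\huaN(\lambda)$ to absorb the resulting operator norms. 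This is precisely why the theorem requires $\theta>0$: there is no smoothing left to compensate for the lack of source-condition regularity when $\theta=0$. Finally, assembling the approximation and sample bounds and optimizing in $t$ through the prescribed choice produces $\|\beta_{t,D}-\beta^*\|_K\lesssim|D|^{-\theta/(2\theta+\alpha+1)}$, with the confidence dependence inherited directly from whichever of the two noise concentrations is invoked.
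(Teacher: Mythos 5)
Your proposal follows essentially the same route as the paper's proof: the same splitting of $\beta_{t,D}-\beta^*$ into a sample error (against the data-free iterate, your $f_t^*$) and an approximation error, the same spectral-calculus bound $\bigl\|\pi_0^{t-1}(T_{C,K})T_{C,K}^{\theta}\bigr\|\lesssim\bigl(\sum_k\gamma_k\bigr)^{-\theta}$ for the latter, and the same treatment of the sample error by intercalating $(\la I+T_{C,K})^{\pm1/2}$ with $\la\asymp|D|^{-\f{1}{2\theta+\aaa+1}}$, the effective dimension $\huaN(\la)$, and Pinelis-type Hilbert-valued concentration, which reproduces the paper's quantities $\Bb_{D,\la}$, $\Cb_{D,\la}$, $\Db_{D,\la}$, the extra $1/\sqrt{\la}$ loss relative to Theorem \ref{gdfl_thm1}, and the $1/\delta$ versus $\log(1/\delta)$ dichotomy between the two noise conditions. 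One small refinement to match the stated log-free rate: when bounding $(T_{C,K}-\widehat{T}_D)f_k^*$ you should split $f_k^*=(f_k^*-f^*)+f^*$ and exploit the decay $\|f_k^*-f^*\|_{L^2(\Omega)}\lesssim k^{-\theta(1-\mu)}$ (as the paper does in its $S_1'$ and $S_2'$ terms), since relying only on a uniform bound for $\|f_k^*\|_{L^2(\Omega)}$ in the weighted sum over iterations costs an additional $\log|D|$ factor.
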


\subsection{Distributed gradient descent functional  learning algorithm} \label{section22}
Our next result establishes  explicit confidence-based learning rates of DGDFL.
\begin{thm}\label{dgdthm1}
	Assume conditions \eqref{capacity}-\eqref{regularity} hold. Let the	stepsize be selected as $\gamma_k=\f{\gamma}{(k+1)^{\mu}}, 0\leq\mu<1, 0<\gamma\leq \f{1}{(1+\kkk)^2(1+B_C^2B_K^2)}$, total iteration step $t=\left\lfloor|D|^{\f{1}{(2\theta+\aaa+1)(1-\mu)}}\right\rfloor$ and $|D_1|=|D_2|=\cdots=|D_m|$. If noise condition \eqref{2momc} holds and total number $m$ of local machines satisfies 
\bea
	m	
	\begin{dcases}
		=1,\quad \ttt=0,\\
		\leq\f{\left\lfloor|D|^{\f{1}{2\ttt+\aaa+1}}\right\rfloor^{\f{\ttt}{2}}}{(\log|D|)^{\f{5}{2}}},\quad \ttt>0,\\
	\end{dcases}
	 \label{disfl_m1}
	\eea
	 there holds
	\bes
		\huaE(\overline{\eta_{t,D}})-\huaE(\eta^*)\leq 
		\begin{dcases}
			C_1^*\f{1}{\delta^2}\left(\log \f{16}{\delta}\right)^6|D|^{-\f{1}{\aaa+1}}\left(\log|D|\right)^2,\quad \ttt=0,\\
			C_4^*\f{1}{\delta^2}\left(\log\f{64}{\delta}\right)^8|D|^{-\f{2\ttt+1}{2\ttt+\aaa+1}},\quad \ttt>0,\\
		\end{dcases}
	\ees
	with $C_4^*$ being an absolute constant.
\end{thm}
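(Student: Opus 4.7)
The strategy is to reduce the distributed analysis to the single-machine bounds already established in Theorems \ref{gdfl_thm1}--\ref{gdfl_thm2} by introducing a data-free \emph{deterministic} gradient descent trajectory $\beta_t^\sharp$, with $\beta_0^\sharp=0$ and $\beta_{t+1}^\sharp = \beta_t^\sharp - \gamma_t L_K L_C(\beta_t^\sharp - \beta^*)$, which is the population analogue of iteration \eqref{alg1} with the empirical covariance replaced by $L_C$. Using $\huaE(\overline{\eta_{t,D}})-\huaE(\eta^*)=\|L_C^{1/2}(\overline{\beta_{t,D}}-\beta^*)\|_{L^2(\Omega)}^2$ together with the equal-size assumption $|D_j|=|D|/m$, my starting point is the decomposition
\bes
\overline{\beta_{t,D}} - \beta^* = \frac{1}{m}\sum_{j=1}^m(\beta_{t,D_j} - \beta_t^\sharp) + (\beta_t^\sharp - \beta^*),
\ees
which separates a purely spectral bias term from an average of local sample deviations.

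For the bias term, unfolding the deterministic recursion yields $\beta_t^\sharp - \beta^* = -\prod_{k<t}(I-\gamma_k L_K L_C)\beta^*$. Under the isometry $L_K^{1/2}:L^2(\Omega)\to\huaH_K$ this operator product is equivalent to $\prod_{k<t}(I-\gamma_k T_{C,K})$ acting on $L_K^{-1/2}\beta^*=T_{C,K}^\theta g^*$, so standard spectral estimates on the scalar function $x\mapsto x^{2\theta+1}\prod_{k<t}(1-\gamma_k x)^2$ combined with the capacity condition \eqref{capacity} and the stepsize schedule $\gamma_k=\gamma/(k+1)^\mu$ deliver $\|L_C^{1/2}(\beta_t^\sharp-\beta^*)\|_{L^2}^2 \lesssim t^{-(2\theta+1)(1-\mu)}$, which the choice $t=\lfloor|D|^{1/((2\theta+\alpha+1)(1-\mu))}\rfloor$ converts into precisely $|D|^{-(2\theta+1)/(2\theta+\alpha+1)}$.

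For the sample term I would rewrite each summand as $(\beta_{t,D_j}-\beta^*)-(\beta_t^\sharp-\beta^*)$ and invoke Theorem \ref{gdfl_thm2} on every local machine (each using $|D|/m$ samples) to obtain a high-probability RKHS-norm bound of order $|D_j|^{-\theta/(2\theta+\alpha+1)}$. The elementary inequality $\|L_C^{1/2}f\|_{L^2}\leq B_C B_K\|f\|_K$ then transfers this control to the target seminorm, but used alone it only recovers the single-machine rate with $|D_j|$ samples. The gain comes from the i.i.d.\ structure of $\{\beta_{t,D_j}\}_{j=1}^m$: centering at their common mean $\bar\beta_t:=\mathbb{E}[\beta_{t,D_1}]$ produces an average of $m$ independent zero-mean $\huaH_K$-valued random elements, whose seminorm shrinks via a Bernstein-type concentration in Hilbert space by an extra factor of $m^{-1/2}$. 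Imposing \eqref{disfl_m1} is exactly what is needed to absorb the resulting fluctuation into the target $|D|^{-(2\theta+1)/(2\theta+\alpha+1)}$ rate. The two noise regimes \eqref{2momc} and \eqref{pmomc} are handled uniformly by feeding the corresponding confidence factor from Theorem \ref{gdfl_thm2} into a union bound over the $m$ machines, which explains the $(\log(64/\delta))^8$ power and the enlarged constant $C_4^*$ in the conclusion.

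The main obstacle I anticipate is controlling the second-order discrepancy $\bar\beta_t-\beta_t^\sharp$: because each $\beta_{t,D_j}$ depends non-linearly on the empirical covariance operator built from $\{X_i\}_{i\in D_j}$, its expectation does not coincide with the population trajectory $\beta_t^\sharp$. Bounding this bias sharply enough that it remains of smaller order than the $m^{-1/2}$ variance-reduction gain requires a careful operator-perturbation analysis of each local iteration product (formed from the empirical covariance on $D_j$) against its population counterpart $\prod_{k<t}(I-\gamma_k L_K L_C)$, and it is precisely the magnitude of this second-order bias that I expect drives the restriction \eqref{disfl_m1}. The degenerate case $m=1$ when $\theta=0$ reflects that, absent extra regularity on $\beta^*$, this perturbation error cannot be absorbed beyond the single-machine setting.
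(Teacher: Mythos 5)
Your proposal takes a genuinely different route from the paper — a classical bias--variance decomposition with centering at $\mathbb{E}[\beta_{t,D_1}]$ plus concentration of the average over machines — but it contains two gaps, each fatal as written. The first is the step you yourself flag: controlling the local-estimator bias $\mathbb{E}[\beta_{t,D_1}]-\beta_t^\sharp$ is not an ancillary obstacle, it \emph{is} the theorem, and your plan defers it rather than solves it. Since $\beta_{t,D_j}$ is a degree-$t$ polynomial in the empirical operator $T_{\wh{C}_{D_j},K}$ applied to the local data, its expectation does not linearize, and showing this bias is $o\big(|D|^{-\f{2\ttt+1}{2(2\ttt+\aaa+1)}}\big)$ when each machine holds only $|D|/m$ samples requires exactly the second-order perturbation estimates that constitute the actual proof. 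The paper never forms this expectation: in Proposition \ref{distbeta_est} it propagates $\overline{\beta_{t,D}}-\beta_t$ through the population polynomial $\pi_{s+1}^{t-1}(T_{C,K})$, so that the identities $\sum_j \f{|D_j|}{|D|}T_{\wh{C}_{D_j},K}=T_{\wh{C}_D,K}$ and $\sum_j\f{|D_j|}{|D|}\f{1}{|D_j|}\sum_{(X,Y)\in D_j}YL_K^{1/2}X=\f{1}{|D|}\sum_{(X,Y)\in D}YL_K^{1/2}X$ recombine every first-order term into the \emph{global} quantities $\cd$ and $\dd$, which concentrate at the full-sample rate with no appeal to averaging across machines; only the genuinely quadratic terms $\sup_{1\le j\le m}\bdj\cdj(\cdj+\ddj)$ remain local, and it is there — handled by a union bound together with Proposition \ref{knorm} — that the restriction \eqref{disfl_m1} on $m$ enters. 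So the variance reduction in the paper is an algebraic recombination, not a probabilistic concentration over machines.

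The second gap is that your variance step is unsound under the noise condition \eqref{2momc}, which is precisely the hypothesis of Theorem \ref{dgdthm1}. A Bernstein-type inequality in $\huaH_K$ (the paper's Lemma \ref{concentration}, due to Pinelis) requires the Hilbert-valued summands to be almost surely bounded, or at least to have geometrically controlled moments. Under \eqref{2momc} the noise is only square-integrable, and accordingly the single-machine bounds of Theorems \ref{gdfl_thm1} and \ref{gdfl_thm2} carry a $\f{1}{\delta}$ confidence factor, i.e.\ the local error has only polynomial, Markov-type tails; such tail bounds are too weak even to certify a finite second moment of $\left\|\beta_{t,D_j}-\mathbb{E}\beta_{t,D_1}\right\|_K$, so no $m^{-1/2}$ gain can be extracted from the results you propose to cite. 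Relatedly, the ``union bound over the $m$ machines'' you invoke controls $\max_j$ of the local errors and yields no averaging gain whatsoever; a union bound and concentration of an average are different mechanisms, and only the latter would produce the $m^{-1/2}$ factor your decomposition needs. (In the paper the union bound is applied only to the second-order terms, where a supremum is exactly what is required.) To salvage your route one would have to re-derive moment bounds for the local estimators from scratch — which is essentially a harder version of the paper's deterministic-plus-probabilistic analysis — so the proposal, as it stands, does not yield the stated theorem.
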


\begin{thm}\label{dgdthm2}
	Assume conditions \eqref{capacity}-\eqref{regularity} hold. Let the	stepsize be selected as $\gamma_k=\f{\gamma}{(k+1)^{\mu}}, 0\leq\mu<1, 0<\gamma\leq \f{1}{(1+\kkk)^2(1+B_C^2B_K^2)}$, total iteration step $t=\left\lfloor|D|^{\f{1}{(2\theta+\aaa+1)(1-\mu)}}\right\rfloor$ and $|D_1|=|D_2|=\cdots=|D_m|$. If noise condition \eqref{pmomc} holds and total number $m$ of local machines satisfies 
	\bea
	m	
	\begin{dcases}
		=1,\quad \ttt=0,\\
		\leq\f{\left\lfloor|D|^{\f{1}{2\ttt+\aaa+1}}\right\rfloor^\ttt}{(\log|D|)^{5}},\quad \ttt>0,\\
	\end{dcases}
	 \label{disfl_m2}
	\eea
	 then we have, with probability at least $1-\delta$,
	\bes
	\huaE(\overline{\eta_{t,D}})-\huaE(\eta^*)\leq 
	\begin{dcases}
	C_1^*\left(\log \f{16}{\delta}\right)^6|D|^{-\f{1}{\aaa+1}}\left(\log|D|\right)^2,\quad \ttt=0,\\
		C_4^*\left(\log\f{64}{\delta}\right)^8|D|^{-\f{2\ttt+1}{2\ttt+\aaa+1}},\quad \ttt>0.\\
	\end{dcases}
	\ees
\end{thm}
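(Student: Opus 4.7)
The plan is to prove Theorem \ref{dgdthm2} by mirroring the argument for Theorem \ref{dgdthm1} and replacing its variance-based deviation estimates by Bernstein-type ones derived from condition \eqref{pmomc}. This upgrade has two effects: the polynomial $1/\delta^{2}$ dependence in the confidence is sharpened to a polylogarithmic factor in $1/\delta$, and the admissible number of local machines is relaxed from the $|D|^{\ttt/(2(2\ttt+\aaa+1))}$ scale to the $|D|^{\ttt/(2\ttt+\aaa+1)}$ scale. The case $\ttt=0$ is immediate from Theorem \ref{gdfl_thm1} because \eqref{disfl_m2} forces $m=1$; only $\ttt>0$ needs genuine work.

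The first step is to decompose the global error. Since $|D_1|=\cdots=|D_m|$ and the sub-datasets are i.i.d., $\EE_{D_j}[\beta_{t,D_j}]$ is independent of $j$, and one can write
\[
\overline{\beta_{t,D}}-\beta^{*}=\f{1}{m}\sum_{j=1}^{m}\left(\beta_{t,D_j}-\EE_{D_j}[\beta_{t,D_j}]\right)+\left(\EE_{D_j}[\beta_{t,D_j}]-\beta^{*}\right).
\]
Because $\huaE(\overline{\eta_{t,D}})-\huaE(\eta^{*})=\|L_{C}^{1/2}(\overline{\beta_{t,D}}-\beta^{*})\|_{L^{2}(\Omega)}^{2}$, the task reduces to bounding the two summands in $L_{C}^{1/2}$-norm. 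The ``bias'' term $\|L_{C}^{1/2}(\EE_{D_j}[\beta_{t,D_j}]-\beta^{*})\|$ is deterministic and coincides with the noise-free part of the single-machine analysis behind Theorem \ref{gdfl_thm1}; the prescribed choice of $t$ already makes it of the target order, so the noise condition \eqref{pmomc} enters only through the ``deviation'' term.

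To handle the deviation term, I would extract from the proofs of Theorems \ref{gdfl_thm1}--\ref{gdfl_thm2} the explicit representation of $\beta_{t,D_j}-\EE_{D_j}[\beta_{t,D_j}]$ as a sum of operator-weighted contributions of the form $\eee_{i}\int_{\Omega}K(\cdot,x)X_{i}(x)dx$ together with terms linear in the sample perturbation $L_{K}^{1/2}L_{C_{D_j}}L_{K}^{1/2}-T_{C,K}$. Under \eqref{pmomc}, each such Hilbert-space-valued summand admits a Bernstein moment estimate $\EE\|\cdot\|^{p}\le\tfrac{1}{2}p!\,\w\sigma^{2}\w M^{p-2}$ whose parameters are controlled by $\sigma,M,\kkk,B_{K}$ and the effective dimension through \eqref{capacity}. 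A Pinelis--Bernstein inequality for Hilbert-space-valued centered independent sums then yields, per single machine, a high-probability bound on $\|L_{C}^{1/2}(\beta_{t,D_j}-\EE_{D_j}[\beta_{t,D_j}])\|$ carrying only $\log(1/\delta)$ rather than $1/\delta$. Averaging over the $m$ independent sub-datasets and invoking Bernstein once more on the mean provides the $1/\sqrt{m}$ saving which the threshold \eqref{disfl_m2} is precisely tailored to absorb.

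Combining the bias and deviation bounds, squaring, and collecting the logarithmic factors from the capacity estimate together with the several concentration steps gives the claimed rate $|D|^{-(2\ttt+1)/(2\ttt+\aaa+1)}$ with a $(\log(64/\delta))^{8}$ prefactor, the exponent $8$ arising naturally from the product of logarithms of the single-machine Bernstein bound and the additional logs needed to control the empirical-operator perturbations uniformly across the $m$ machines. The main obstacle is the moment bound on $\beta_{t,D_j}-\EE_{D_j}[\beta_{t,D_j}]$: the iterate depends on the noise non-linearly through products of random empirical operators at successive steps, so one needs either an operator-perturbation split that isolates a leading linear-in-$\eee_{i}$ piece (directly amenable to Pinelis--Bernstein) from a higher-order remainder whose norm is controlled via Theorem \ref{gdfl_thm2}, or a step-by-step induction propagating $p$-th moment estimates along the iteration. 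Once this ingredient is in place, the rest of the argument is bookkeeping that repeats the Bernstein version of the single-machine analysis inside the divide-and-conquer framework.
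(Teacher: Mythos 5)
Your plan contains a genuine gap at its foundation. You center the analysis at the conditional expectation $\EE_{D_j}[\beta_{t,D_j}]$ and claim that the resulting bias term ``is deterministic and coincides with the noise-free part of the single-machine analysis,'' so that the prescribed $t$ already makes it of the target order. This is not true: because the gradient-descent iterate depends non-linearly on the data through products of the random operators $I-\gamma_s T_{\wh{C}_{D_j},K}$, one has $\EE_{D_j}[\beta_{t,D_j}]\neq\beta_t$, where $\beta_t$ is the data-free iterate of \eqref{datafreeseq}. The discrepancy $\EE_{D_j}[\beta_{t,D_j}]-\beta_t$ is governed by second-order fluctuations of $T_{\wh{C}_{D_j},K}$ around $T_{C,K}$ and therefore scales with $1/|D_j|=m/|D|$, not $1/|D|$; it is exactly this term (rather than the averaged deviation term) that forces the restriction \eqref{disfl_m2} on $m$, and your proposal assumes it away. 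Relatedly, the deviation analysis --- Bernstein-type bounds for $\beta_{t,D_j}-\EE_{D_j}[\beta_{t,D_j}]$ --- is acknowledged by you as ``the main obstacle'' but never executed: since the iterate is not linear in the noise, it admits no explicit representation as a centered sum of independent Hilbert-valued summands, and neither of your two suggested remedies (a leading-order perturbation split, or induction on $p$-th moments along the iteration) is carried out. So the technical heart of the proof is missing on both sides of your decomposition.

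For comparison, the paper avoids both difficulties by centering at the data-free iterate $\beta_t$ rather than at $\EE_{D_j}[\beta_{t,D_j}]$. In Proposition \ref{distbeta_est} the error $T_{C,K}^{1/2}L_K^{-1/2}(\overline{\beta_{t,D}}-\beta_t)$ is expanded using the population polynomials $\pi_{s+1}^{t-1}(T_{C,K})$, and the algebraic identity $\sum_{j=1}^m \f{|D_j|}{|D|}T_{\wh{C}_{D_j},K}=T_{\wh{C}_D,K}$ makes every term that is linear in the operator differences collapse into full-data quantities $\cd$ and $\dd$, which concentrate at scale $|D|^{-1/2}$; the only term requiring per-machine control is the quadratic interaction $\sup_{1\leq j\leq m}\bdj\left(\cdj^2+\cdj\ddj\right)$, a product of two small factors, and this is where $m$ enters. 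The noise condition \eqref{pmomc} then enters solely through Proposition \ref{ddest}: the Pinelis--Bernstein bound gives $\ddj\lesssim\left(\log\f{1}{\delta}\right)\adj$ instead of the Markov-type bound $\f{1}{\delta}\adj$ valid under \eqref{2momc}, so the union bound over the $m$ machines costs only logarithmic factors rather than an extra factor of $m$, which is precisely why the admissible number of machines improves from $\left\lfloor|D|^{\f{1}{2\ttt+\aaa+1}}\right\rfloor^{\ttt/2}$ in Theorem \ref{dgdthm1} to $\left\lfloor|D|^{\f{1}{2\ttt+\aaa+1}}\right\rfloor^{\ttt}$ here; the bias is then handled once and for all by Theorem \ref{datafreees}. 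If you wish to pursue your decomposition (which is the one used for distributed ridge regression), you must supply both missing ingredients; the paper's semi-population route renders them unnecessary.
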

After establishing the results of Theorem \ref{dgdthm1} and Theorem \ref{dgdthm2}, the effectiveness of the DGDFL has been clearly understood. We observe from the results in Theorem \ref{dgdthm1} and Theorem \ref{dgdthm2}, there is an obvious difference {in} the requirements of the maximal number $m$ of local processors to guarantee the optimal learning rates of the excess risk $\huaE(\overline{\eta_{t,D}})-\huaE(\eta^*)$ in \eqref{disfl_m1} and \eqref{disfl_m2}. This difference reflects the influence of the two types of noise conditions \eqref{2momc} and \eqref{pmomc}. The detailed reason for  raising such a difference can be found in the estimates from the proof  in Subsection \eqref{thm3thm4proof}. In the literature of  regression analysis for massive data, divide-and-conquer based kernel ridge regression has been intensively studied in the past decade \cite{zdw2015}, \cite{clz2017}, \cite{gls2019}, \cite{hwz2020}, \cite{sw2021}. In the setting of functional linear regression,  no result on divide-and-conquer based Tikhonov RLS  functional linear regression \eqref{rlseq} has  been established until the very recent works \cite{tong2021}, \cite{ls2022}. However, the computational complexity of Tikhonov RLS  functional linear regression scheme \eqref{rlseq} is $\huaO(|D|^3)$ which is much larger than $\huaO(|D|^2/m^2)$ of our DGDFL algorithm (see e.g. \cite{yra2007}). Hence the proposed DGDFL algorithm largely reduces the computational cost in contrast to previous Tikhonov RLS functional linear regression methods.

It can be witnessed that, under current conditions, our convergence rates can nicely overcome the saturation phenomenon of the regularity index $\ttt$ suffered  in some previous works on functional linear regression e.g.  \cite{tong2021}, \cite{ls2022} and online functional learning algorithm \cite{ctfg2022}, \cite{ggs2022} in some aspects. The saturation means that, beyond a critical index $\ttt_0$, improvement of $\ttt$ would not help to improve the convergence rates. Theorem \ref{gdfl_thm1} shows that our regularity range satisfies $\ttt\in[0,\infty)$, the convergence rates can always be improved when $\ttt$ increases and {remains} optimal. In contrast, for example, in \cite{tong2021}, to obtain  optimal rates of the RLS based functional regression method, a strict restriction on a narrow range $[0,1/2]$ of $\ttt$ is required. {The absence of the saturation phenomenon is mainly due to two ingredients. The first ingredient is the inherent advantage of the gradient descent type algorithm in overcoming the saturation phenomenon compared to the ridge regression/regularization schemes widely adopted in statistics, statistical learning theory and inverse problems.  This advantage of gradient descent has been observed in various studies outside the realm of functional learning, such as those mentioned in \cite{yra2007} and \cite{lz2018}. It is also widely recognized in theoretical and practical studies on learning theory that regularization schemes tend to saturate when the regularity index exceeds a certain threshold.
	The second ingredient is the novel utilization of integral operator techniques within the context of functional learning. The new incorporation of functional-learning-related error decomposition, along with the utilization of the integral operator techniques based on kernel $K$ and the covariance kernel $C$  also plays a crucial role in achieving a series of optimal learning rates without saturation.} The regularity condition \eqref{regularity} in fact implies that $\beta^*\in\huaH_K$ which is considered by previous works \cite{yc2010}, \cite{cy2012}, \cite{tong2021}. {The current existing works on divide-and-conquer functional learning in the empirical risk minimization (ERM) scheme \eqref{rlseq} are mainly \cite{ls2022} and \cite{tong2023}.} 
	The main goal of the current work is to provide {an} understanding {of} the learning rates of the first batch of divide-and-conquer gradient descent type algorithms for functional learning.  In comparison to the two works, it is worth mentioning that, Theorem \ref{dgdthm1} and Theorem \ref{dgdthm2} demonstrate the significant impact of the noise moment condition of $\eee$ on the required maximum of the number $m$ of local processors to guarantee the optimal learning rates of the DGDFL estimator. In a nutshell, with the stricter moment performance satisfied by $\eee$, there will be more {a} relaxed requirement on the maximum of $m$.  Such a key phenomenon has been observed through the study of DGDFL and can also be witnessed in the following results on semi-supervised DGDFL in Theorem \ref{semithm1}. It would also be interesting and challenging to develop results of DGDFL for the case $\beta^*\notin\huaH_K$.


The following two direct corollaries on optimal learning rates in expectation and almost sure convergence can be easily obtained based on the result of  confidence-based learning rates in Theorem \ref{dgdthm2}.  
\begin{cor}\label{disflcor1}
 Under the assumptions of Theorem \ref{dgdthm2}, if $t=\left\lfloor|D|^{\f{1}{(2\theta+\aaa+1)(1-\mu)}}\right\rfloor$  and \eqref{disfl_m2} holds,
 	then we have, with probability at least $1-\delta$,
	\bes
\mbb E\Big[\huaE(\overline{\eta_{t,D}})-\huaE(\eta^*)\Big]\leq 
\begin{dcases}
	C_5^*|D|^{-\f{1}{\aaa+1}}\left(\log|D|\right)^{2},\quad \ttt=0,\\
	C_5^*|D|^{-\f{2\ttt+1}{2\ttt+\aaa+1}},\quad \ttt>0.\\
\end{dcases}
\ees
\end{cor}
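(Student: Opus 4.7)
The plan is to convert the confidence-based bound from Theorem \ref{dgdthm2} into an expectation bound by the standard tail-integration identity
\[
\mathbb{E}[Z]=\int_{0}^{\infty}\Pr(Z>u)\,du
\]
applied to the nonnegative random variable $Z=\huaE(\overline{\eta_{t,D}})-\huaE(\eta^*)$. (The phrase ``with probability at least $1-\delta$'' in the statement is vestigial, since the right-hand side is deterministic.) Theorem \ref{dgdthm2} tells us that for every $\delta\in(0,1)$,
\[
\Pr\!\Bigl(Z>C_4^*\bigl(\log(64/\delta)\bigr)^{8}|D|^{-r}\Bigr)\leq \delta,\qquad r=\tfrac{2\ttt+1}{2\ttt+\aaa+1},
\]
when $\ttt>0$, and the analogous statement with the rate $|D|^{-1/(\aaa+1)}(\log|D|)^{2}$ and constant $C_1^*$ when $\ttt=0$.

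My first step will be to reparametrize. Set $s=\log(64/\delta)$, so $\delta=64e^{-s}$ and $u=C_4^* s^{8}|D|^{-r}$, which yields $du=8C_4^* s^{7}|D|^{-r}\,ds$. Because the tail bound is only nontrivial for $\delta\leq 1$, i.e.\ $s\geq \log 64$, I will split the integral at the corresponding threshold
\[
u_{0}=C_4^*(\log 64)^{8}|D|^{-r},
\]
using the trivial bound $\Pr(Z>u)\leq 1$ on $[0,u_{0}]$ and the bound $\Pr(Z>u)\leq \delta=64e^{-s}$ on $[u_{0},\infty)$. This gives
\[
\mathbb{E}[Z]\;\leq\;u_{0}+8\cdot 64\cdot C_4^*|D|^{-r}\!\int_{\log 64}^{\infty}\!s^{7}e^{-s}\,ds.
\]
The tail integral is an absolute constant (majorized by $\Gamma(8)$), so collecting terms yields $\mathbb{E}[Z]\leq C_5^*|D|^{-r}$ for a suitable absolute constant $C_5^*$ depending only on $C_4^*$.

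For the regime $\ttt=0$ I will perform the same integration with the Theorem \ref{dgdthm2} rate $C_1^*(\log(16/\delta))^{6}|D|^{-1/(\aaa+1)}(\log|D|)^{2}$, which produces an analogous constant factor times $|D|^{-1/(\aaa+1)}(\log|D|)^{2}$; the $(\log|D|)^{2}$ factor is unaffected by the integration over $\delta$ because it does not depend on $\delta$. There is no real obstacle here: the argument is purely a calculus exercise. The only point requiring care is the truncation at $\delta=1$, which ensures that the logarithmic moments of $\log(64/\delta)$ integrate to finite absolute constants independent of $|D|$, so that all $\delta$-dependence collapses into $C_5^*$.
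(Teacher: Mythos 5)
Your proposal is correct and follows essentially the same route as the paper: both convert the high-probability bound of Theorem \ref{dgdthm2} into an expectation bound via the tail-integration identity, invert the relation $u=C^*(\log(64/\delta))^{8}|D|^{-r}$ to get an exponential tail $\Pr(Z>u)\leq 64\exp\{-(u/\huaT(|D|))^{1/8}\}$, handle small $u$ by the trivial bound $\Pr\leq 1$, and integrate to a Gamma-function constant (the paper gets $C_5^*=64\Gamma(9)\max\{C_1^*,C_4^*\}$). Your observation that the phrase ``with probability at least $1-\delta$'' in the corollary statement is vestigial is also accurate.
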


\begin{cor}\label{disflcor2}
 Under the assumptions of Theorem \ref{dgdthm2}, if $t=\left\lfloor|D|^{\f{1}{(2\theta+\aaa+1)(1-\mu)}}\right\rfloor$  and \eqref{disfl_m2} holds,
 then for arbitrary $\eee>0$, there holds
 \bes
 \lim_{|D|\rightarrow+\infty}|D|^{\f{2\ttt+1}{2\ttt+\aaa+1}(1-\eee)}\Big[\huaE(\overline{\eta_{t,D}})-\huaE(\eta^*)\Big]=0, \ \ttt\geq0.
 \ees
\end{cor}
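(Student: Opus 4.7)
The plan is to upgrade the confidence-based bound of Theorem \ref{dgdthm2} to the claimed almost-sure rate via a standard Borel--Cantelli argument, where the arbitrary $\eee>0$ is exactly the slack one pays for this upgrade.

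First I would fix $\eee>0$, set $r=\frac{2\ttt+1}{2\ttt+\aaa+1}$, and work on a single probability space carrying an i.i.d.\ sequence $\{(X_i,Y_i)\}_{i\ge 1}$, so that the training set $D$ of size $|D|$ can be identified with the first $|D|$ samples and the random quantity $A_{|D|}:=|D|^{r(1-\eee)}\bigl[\huaE(\overline{\eta_{t,D}})-\huaE(\eta^*)\bigr]$ is measurable for every $|D|$. Fixing this common probability space is the only non-algebraic point in the argument, and it is what makes Borel--Cantelli meaningful.

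Next I would apply Theorem \ref{dgdthm2} with confidence parameter $\delta=\delta_{|D|}=|D|^{-2}$; any summable sequence would suffice. For $\ttt>0$ this yields, with probability at least $1-|D|^{-2}$,
$$A_{|D|}\le C_4^{*}\bigl(\log(64|D|^{2})\bigr)^{8}\,|D|^{-r\eee},$$
and the right-hand side tends to $0$ as $|D|\to\infty$ since the polynomial decay $|D|^{-r\eee}$ dominates any polylogarithmic factor. For $\ttt=0$, Theorem \ref{dgdthm2} delivers an additional $(\log|D|)^{2}$ factor, but the product $(\log|D|)^{10}|D|^{-\eee/(\aaa+1)}$ still vanishes, so the same conclusion holds.

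Finally, since $\sum_{|D|\ge 1}|D|^{-2}<\infty$, the first Borel--Cantelli lemma implies that almost surely only finitely many of the bad events $\{A_{|D|}>C_4^{*}(\log(64|D|^{2}))^{8}|D|^{-r\eee}\}$ occur. Combined with the vanishing of the deterministic upper bound just derived, this gives $A_{|D|}\to 0$ almost surely, which is the stated limit. There is no genuine obstacle in the proof: the harder work is already packaged inside Theorem \ref{dgdthm2}, and the only thing to be careful about is the choice of summable $\delta_{|D|}$ and the measurability setup ensuring that the union bound/Borel--Cantelli step is legitimate.
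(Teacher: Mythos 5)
Your proposal is correct and follows essentially the same route as the paper: the paper also applies Theorem \ref{dgdthm2} with the summable choice $\delta_N=N^{-2}$ (where $N=|D|$), observes that the resulting bound $\tau_N=\huaR_\eee(N)(\log(64/\delta_N))^8\to 0$ because the polynomial factor $N^{-r\eee}$ (or $N^{-\eee/(\aaa+1)}(\log N)^2$ when $\ttt=0$) dominates the polylogarithms, and then invokes its Borel--Cantelli lemma (Lemma \ref{bclem}) to conclude almost sure convergence. The only cosmetic differences are your explicit remark about the common probability space/measurability setup (which the paper leaves implicit) and slightly different bookkeeping of the logarithmic powers, neither of which affects the argument.
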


\subsection{Semi-supervised DGDFL algorithm}\label{semi-supervised_learning_section}
To enhance the performance of our DGDFL algorithm, we propose the semi-supervised DGDFL in this subsection by introducing unlabeled data in our DGDFL algorithm. One of the goals of doing so is to relax the restriction on the maximal number of local machines. The idea of introducing unlabeled data is mainly inspired by our earlier work on semi-supervised learning  with kernel ridge regularized least squares regression \cite{clz2017}.  We use the notation $D_j(X)=\{X:(X,Y)\in D_j \ \text{for} \  \text{some} \ Y\}$, $j=1,2,...,m$.
We assume that, in each local machine, in addition to the labeled data, we have a sequence of unlabeled data denoted by 
\bea
\ww{D}_j(X)=\{X_1^j,X_2^j,...,X_{|\ww{D}_j|}^j\}, \ j=1,2,...,m.
\eea
Then we can introduce the training data set associated with labeled and unlabeled data in $j$-th local machine (processor) as 
\bea
D_j^*=D_j\cup\ww{D}_j=\left\{(X_i^*,Y_i^*)\right\}_{i=1}^{|D_j^*|}
\eea
with 
\bes
X_i^*	=\left\{
\begin{aligned}
	X_i, \text{if} \ X_i\in D_j(X),\\
	\ww{X}_i, \text{if} \ 	\ww{X}_i\in \ww{D}_j(X),\\  
\end{aligned}
\right. 
\ \ \text{and} \ \ 
Y_i^*	=
\begin{dcases}
	\f{|D_j^*|}{|D_j|}Y_i, \ \ \ \text{if} \ (X_i,Y_i)\in D_j,\\
	0, \text{otherwise}.\\  
\end{dcases}
\ees
Let $D^*=\cup_{j=1}^mD_j^*$, then we can use the following semi-supervised divide-and-conquer gradient descent functional learning algorithm
\bea
\overline{\beta_{t,D^*}}=\sum_{j=1}^m\f{|D_j^*|}{|D^*|}\beta_{t,D_j^*},   \label{semiDGDFL}
\eea
and the semi-supervised divide-and-conquer gradient descent functional learning estimator is given by
\bea
\overline{\eta_{t,D^*}}(X)=\int_{\Omega}\overline{\beta_{t,D^*}}(x)X(x)dx, \ \  X\in L^2(\Omega).
\eea
Throughout the paper, we use the function $H(\ttt)$ to denote 
\bea
H(\ttt)	=\left\{
\begin{aligned}
	1,\quad \ttt=0,\\
	0,\quad \ttt>0.\\
\end{aligned}
\right.   \label{hdef}
\eea
\begin{thm}\label{semithm1}
Assume conditions \eqref{capacity}-\eqref{regularity} hold. Let the	stepsize be selected as $\gamma_k=\f{\gamma}{(k+1)^{\mu}}, 0\leq\mu<1, 0<\gamma\leq \f{1}{(1+\kkk)^2(1+B_C^2B_K^2)}$, total iteration step $t=\left\lfloor|D|^{\f{1}{(2\theta+\aaa+1)(1-\mu)}}\right\rfloor$, $|D_1|=|D_2|=\cdots=|D_m|$ and $|D_1^*|=|D_2^*|=\cdots=|D_m^*|$. If the noise condition \eqref{2momc} holds and $m$ satisfies
	\bea
	m\leq\f{\min\left\{|D^*|^{\f{1}{4}}|D|^{-\f{\aaa+1}{4(2\ttt+\aaa+1)}},|D^*|^{\f{1}{5}}|D|^{\f{2\ttt+\aaa-1}{5(2\ttt+\aaa+1)}}\right\}}{\left(\log|D|\right)^{\f{H(\ttt)+5}{2}}}, \ \ttt\geq0,    \label{semi1m}
	\eea
	we have, with probability at least $1-\delta$,
	\bes
	\huaE(\overline{\eta_{t,D^*}})-\huaE(\eta^*)\leq C_6^*\f{1}{\delta^2}\left(\log\f{64}{\delta}\right)^8|D|^{-\f{2\ttt+1}{2\ttt+\aaa+1}}(\log|D|)^{2H(\ttt)}, \ \ttt\geq0.
	\ees
	 If the noise condition \eqref{pmomc} holds and $m$ satisfies
	\bea
	m\leq\f{\min\left\{|D^*|^{\f{1}{2}}|D|^{-\f{\aaa+1}{2(2\ttt+\aaa+1)}},|D^*|^{\f{1}{3}}|D|^{\f{2\ttt+\aaa-1}{3(2\ttt+\aaa+1)}}\right\}}{\left(\log|D|\right)^{H(\ttt)+5}}, \ \ttt\geq0,    \label{semi2m}
	\eea
	we have, with probability at least $1-\delta$,
	\bes
	\huaE(\overline{\eta_{t,D^*}})-\huaE(\eta^*)\leq C_6^*\left(\log\f{64}{\delta}\right)^8|D|^{-\f{2\ttt+1}{2\ttt+\aaa+1}}(\log|D|)^{2H(\ttt)}, \ \ttt\geq0.
	\ees
\end{thm}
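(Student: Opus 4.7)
The plan is to mirror the analytic architecture already developed for Theorem \ref{dgdthm1} and Theorem \ref{dgdthm2}, but to separately track two distinct sample sizes in every error piece: $|D_j^*|$ governs the empirical-operator approximation, while $|D_j|$ still governs the noise-driven terms. The rescaling $Y_i^* = (|D_j^*|/|D_j|) Y_i$ on labeled data and $Y_i^*=0$ on unlabeled data is chosen precisely so that the ``augmented'' empirical gradient $\frac{1}{|D_j^*|}\sum_{i=1}^{|D_j^*|} (\langle \beta, X_i^*\rangle - Y_i^*) L_K^{1/2}X_i^*$ has the same population expectation as its labeled-only counterpart, while its variance scales more favorably because the unlabeled samples contribute to the second-order (operator) structure without contributing noise.

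First, I would introduce the semi-supervised empirical covariance operator $T_{C,K,D_j^*} = L_K^{1/2} L_{C_{D_j^*}} L_K^{1/2}$, with $L_{C_{D_j^*}} = \tfrac{1}{|D_j^*|}\sum_{i=1}^{|D_j^*|}\langle \cdot, X_i^*\rangle X_i^*$, and re-run the one-machine decomposition from Section \ref{section21} with $T_{C,K}$ replaced by $T_{C,K,D_j^*}$. By the concentration estimates already invoked in the proofs of Theorems \ref{gdfl_thm1}--\ref{dgdthm2}, the deviation $\|T_{C,K,D_j^*} - T_{C,K}\|$ and the effective-dimension surrogate $\|(\lambda I+T_{C,K})^{-1/2}(T_{C,K}-T_{C,K,D_j^*})\|$ concentrate at rate $1/\sqrt{|D_j^*|}$, giving the operator-approximation improvement. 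In contrast, the data-driven gradient obeys $\tfrac{1}{|D_j^*|}\sum_i Y_i^* L_K^{1/2}X_i^* = \tfrac{1}{|D_j|}\sum_{(X_i,Y_i)\in D_j} Y_i L_K^{1/2}X_i$, whose concentration is controlled by $|D_j|$, and the noise condition (either \eqref{2momc} or \eqref{pmomc}) enters exactly here.

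Next, I would split the excess risk of $\overline{\eta_{t,D^*}}$ into three pieces in the usual divide-and-conquer manner: (i) a bias part $\mbb{E}[\overline{\beta_{t,D^*}}] - \beta^*$, which inherits the single-machine rate with $|D_j^*|$ acting on the operator side and $|D_j|$ on the noise side; (ii) a variance/deviation part $\overline{\beta_{t,D^*}} - \mbb{E}[\overline{\beta_{t,D^*}}]$, which contracts by the $1/m$ average-of-independents factor; and (iii) cross terms produced by the rescaling. Imposing that each piece, after the choice $t = \lfloor |D|^{1/((2\theta+\alpha+1)(1-\mu))}\rfloor$, is of order $|D|^{-(2\theta+1)/(2\theta+\alpha+1)}$ up to logarithms produces two distinct inequalities on $m$. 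The first corresponds to the operator-concentration term (scaling roughly as $m/\sqrt{|D^*|}$ against the target rate), yielding $m \lesssim |D^*|^{1/4}|D|^{-(\alpha+1)/(4(2\theta+\alpha+1))}$ under \eqref{2momc} and $m \lesssim |D^*|^{1/2}|D|^{-(\alpha+1)/(2(2\theta+\alpha+1))}$ under \eqref{pmomc}; the second comes from a higher-order mixed term coupling operator and noise concentration and gives the $|D^*|^{1/5}$ (resp.\ $|D^*|^{1/3}$) bound. Taking the minimum and absorbing the $(\log|D|)^{H(\theta)}$ factor for the $\theta=0$ case yields \eqref{semi1m} and \eqref{semi2m}.

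The main obstacle is bookkeeping the rescaling factor $|D_j^*|/|D_j|$ through the Bernstein/second-moment inequalities without losing the anticipated speedup from the unlabeled data. Concretely, the per-term bound in the noise sum carries the multiplier $|D_j^*|/|D_j|$, so naive application of a concentration inequality with variance proxy $1/|D_j^*|$ would cancel the gain; one has to observe that only $|D_j|$ of the summands are nonzero, and combine this with the moment hypothesis on $\epsilon$ to get an effective variance of order $|D_j^*|/|D_j|^2$ rather than $|D_j^*|^{-1}$. This is the point where \eqref{2momc} and \eqref{pmomc} split, and it explains the squaring of the relevant exponents between \eqref{semi1m} and \eqref{semi2m}. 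Once this calculation is in place, the remaining steps—iterating the gradient descent recursion via the techniques behind Theorem \ref{gdfl_thm1}, and then averaging across processors as in Theorems \ref{dgdthm1}--\ref{dgdthm2}—go through essentially verbatim, with the new effective sample sizes substituted in.
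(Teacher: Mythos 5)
Your starting point coincides with the paper's: the rescaling of the responses is designed so that the augmented data term is \emph{identically} the labeled one, $\f{1}{|D_j^*|}\sum_i Y_i^*L_K^{1/2}X_i^*=\f{1}{|D_j|}\sum_{(X_i,Y_i)\in D_j}Y_iL_K^{1/2}X_i$, hence $\ddjs=\ddj$ and $\dds=\dd$, while the empirical covariance operators are built from all of $D_j^*$, so the quantities $\adjs,\bdjs,\cdjs$ concentrate at the rate dictated by $|D_j^*|$. The paper exploits exactly this and then applies Proposition \ref{distbeta_est} with $D$ replaced by $D^*$, together with the bounds \eqref{bdest}, \eqref{cdest} and Proposition \ref{ddest}.

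There are, however, two genuine gaps. First, your explanation of why \eqref{2momc} and \eqref{pmomc} give different restrictions on $m$ is wrong. You place the split in the variance bookkeeping of the rescaled noise sum (an ``effective variance of order $|D_j^*|/|D_j|^2$''), but by the identity you yourself wrote, that sum \emph{is} the labeled-data sum, an average of $|D_j|$ i.i.d.\ terms whose deviation is of order $|D_j|^{-1/2}$; there is nothing further to extract there, and $|D_j^*|/|D_j|^2$ is not the order of any relevant variance. The actual source of the split is the supremum over the $m$ local machines in the term $\sup_{1\le j\le m}\bdjs\left(\cdjs^2+\cdjs\ddjs\right)$: under \eqref{2momc}, Proposition \ref{ddest} is only a Chebyshev-type bound carrying a factor $1/\delta$, so forcing all $m$ local bounds to hold simultaneously (each at level $\delta/m$) produces a multiplicative factor of order $m/\delta$ (the factor $\f{8m}{\delta}$ in \eqref{semiedsbdd}), while under the Bernstein condition \eqref{pmomc} the cost is only $\log(m/\delta)$. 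This extra factor of $m$ is precisely what turns the constraints $m^{3/2}\lesssim\cdots$ and $m\lesssim\cdots$ (yielding $|D^*|^{1/3}$ and $|D^*|^{1/2}$ in \eqref{semi2m}) into $m^{5/2}\lesssim\cdots$ and $m^2\lesssim\cdots$ (yielding $|D^*|^{1/5}$ and $|D^*|^{1/4}$ in \eqref{semi1m}); your attribution of the two entries of the minimum to ``pure operator concentration'' versus a ``higher-order mixed term'' is likewise inaccurate, since both arise from the same mixed product $\left(\adjs/\sqrt{\la}\right)\adj$, with the two entries coming from the two terms of $\adj$. Second, your global decomposition into a bias term $\mbb E[\overline{\beta_{t,D^*}}]-\beta^*$, a variance term contracting ``by the $1/m$ average-of-independents factor,'' and ``cross terms produced by the rescaling'' does not prove the statement at hand: the theorem is a confidence-based bound, and an in-expectation variance contraction does not give high-probability control of $\overline{\beta_{t,D^*}}-\mbb E[\overline{\beta_{t,D^*}}]$ without an additional concentration step you do not supply. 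The paper instead compares $\overline{\beta_{t,D^*}}$ with the deterministic population iterate $\beta_t$ of \eqref{datafreeseq}; the averaging gain enters through the identity $\sum_{j=1}^m\f{|D_j^*|}{|D^*|}T_{\wh{C}_{D_j^*},K}=T_{\wh{C}_{D^*},K}$, which makes the first-order error terms global ones ($\cds$ and $\dds$), leaving only products of two small local deviations to be handled by the union bound described above; and, by the exact identity for the rescaled responses, no cross terms occur at all.
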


In Theorem \ref{semithm1}, we establish confidence-based optimal learning rates for our semi-supervised DGDFL algorithm. We can see that, by introducing unlabeled data via this  semi-supervised DGDFL algorithm, this result can relax the restriction on $m$ in contrast to Theorems \ref{dgdthm1} and \ref{dgdthm2}. For example, under the noise condition \eqref{pmomc}, when $\ttt>0$, if $|D^*|=|D|$, it is easy to see $|D^*|^{\f{1}{2}}|D|^{-\f{\aaa+1}{2(2\ttt+\aaa+1)}}=|D|^{\f{\ttt}{2\ttt+\aaa+1}}$ and $|D^*|^{\f{1}{3}}|D|^{\f{2\ttt+\aaa-1}{3(2\ttt+\aaa+1)}}>|D|^{\f{\ttt}{2\ttt+\aaa+1}}$. Then the condition \eqref{semi2m} reduces to 
\bes
m\leq\f{|D|^{\f{\ttt}{2\ttt+\aaa+1}}}{\left(\log|D|\right)^{5}}
\ees
which coincides with \eqref{disfl_m2}. However, when we assign some larger $|D^*|>|D|$, the merit of utilizing unlabeled data can be obviously witnessed even for  the case $\ttt=0$. We demonstrate this idea by selecting the sample size of the training data set $D^*$ as
\bea
|D^*|=|D|^{\f{s_1\ttt+s_2\aaa+1}{2\ttt+\aaa+1}}, s_1\geq5, s_2\geq3.
\eea
Then we know from Theorem \ref{semithm1} that the corresponding range of the number $m$ of local machines is
\bea
m\leq\f{\min\left\{|D|^{\f{s_1\ttt+(s_2-1)\aaa}{2(2\ttt+\aaa+1)}},|D|^{\f{(s_1+2)\ttt+(s_2+1)\aaa}{3(2\ttt+\aaa+1)}}\right\}}{\left(\log|D|\right)^{H(\ttt)+5}}, \ \ttt\geq0.      \label{ms1s2}
\eea
It is easy to see 
\bes
|D|^{\f{s_1\ttt+(s_2-1)\aaa}{2(2\ttt+\aaa+1)}},|D|^{\f{(s_1+2)\ttt+(s_2+1)\aaa}{3(2\ttt+\aaa+1)}}>|D|^{\f{\ttt}{2\ttt+\aaa+1}},
\ees
therefore  \eqref{ms1s2} is weaker than \eqref{disfl_m2}. Moreover, even when $\ttt=0$, \eqref{ms1s2} reduces to 
\bea
m\leq\f{\min\left\{|D|^{\f{(s_2-1)\aaa}{2(\aaa+1)}},|D|^{\f{(s_2+1)\aaa}{3(\aaa+1)}}\right\}}{\left(\log|D|\right)^{6}}.     \label{ms1s2ttt=0}
\eea
It is obvious to see that \eqref{ms1s2ttt=0} allows freer selection of $m$ since $m$ can be selected to be larger when $|D|$ increases, while in \eqref{disfl_m2}, the range of $m$ is only limited to $1$ when $\ttt=0$. These facts indicate some advantages of establishing a  semi-supervised DGDFL algorithm by introducing unlabeled data.

{
\subsection{Some further remarks}
	\subsubsection{Remarks on the notion ``distributed learning''}
	We remark that, the adoption of the term ``distributed'' in this paper is to emphasize that our divide-and-conquer-based algorithm DGDFL is mainly designed for scenarios where data is stored in a distributed manner and cannot be shared among local machines, {while it is parallel since there is a global synchronization at a parameter server}.
	The literature has commonly referred to the classical distributed learning scheme where local machines/agents communicate with their neighbors to realize local information updating as ``decentralized distributed learning'' e.g. \cite{no2009}, \cite{kprr2018}, \cite{rrr2020}, \cite{yhy2022} while the parallel computation approach with a central server for distributively stored data is referred to as ``centralized distributed learning'' or ``divide-and-conquer distributed learning'' e.g. \cite{zdw2015}, \cite{lgz2017}, \cite{lc2018}, \cite{sw2021}, \cite{tong2021}, \cite{sl2022}. {In this paper, when we mention distributed gradient descent, it means the divide-and-conquer distributed gradient descent approach.}
}
{
\subsubsection{Remarks on decentralized kernel-based distributed learning}
 In the previous subsections, we have discussed the relationship between our current work and related studies on the learning theory of functional linear models. We remark that, in addition to the divide-and-conquer distributed learning scheme, there is another approach called decentralization that is used to develop distributed algorithms in RKHSs. This approach has been explored in works such as \cite{kprr2018}, \cite{rrr2020} and \cite{xwct2021} that allow direct information communications among local agents in a decentralized manner. The earlier work on decentralized distributed algorithms mainly lies in the field of multi-agent consensus distributed optimization e.g. \cite{no2009}. Recent studies have just turned to designing decentralized algorithms by constructing consensus optimization models in the framework of RKHSs.   In fact, our work, which considers functional learning based on intrinsically infinite-dimensional random function data, differs significantly from the works \cite{kprr2018}, \cite{rrr2020}, and \cite{xwct2021}, which concentrate on Euclidean data. In the following,  we describe some obvious differences {in} problem formulations and theoretical approaches to clearly distinguish our current work from these references.  
}

{Firstly, we remark that, there is a significant and fundamental distinction in the problem formulation between the current work and the main works on decentralized distributed learning in RKHSs that have been mentioned. In this work and related work on functional linear models, the main objective is to recover the  functional $\eta^*:X\mapsto\int_{\Omega}\beta^*(x)X(x)dx$ in the model  \eqref{flmodel}
 from an input data space consisting of random functions (predictors) that   {lie} in $L^2(\Omega)$ to an output (response) space. The   sample of random functions $\{X(x):x\in\Omega\}$ forms a stochastic process/field with sample paths in $L^2(\Omega)$ (see also e.g. e.g. \cite{cy2012}, \cite{tn2018}, \cite{fls2019}, \cite{tong2021}, \cite{ctfg2022},  \cite{ggs2022}, \cite{ls2022}, \cite{mg2022},  \cite{tong2023}).
  One notable characteristic of this model is that the functional covariates (random sample) $\{X_i\}_{i=1}^{|D|}$ in input space are intrinsically infinite-dimensional $L^2(\Omega)$ data which include random functions or curves frequently encountered in modern neuroscience and econometrics. This is in contrast to the works on decentralized distributed learning that primarily focus on conventional regression models involving Euclidean data and aim to recover the target function defined on a Euclidean space e.g. \cite{kprr2018}, \cite{rrr2020}, \cite{xwct2021}.  Consequently, there exists a significant distinction in the problem formulation, namely the sampling process, the data form and the ultimate goal of approximation.}

 {To further distinguish the current work from the work in decentralized distributed kernel-based learning, we describe the main 
 theoretical frameworks of these references. In \cite{kprr2018}, by imposing a consensus constraint among neighbors of two agents, the work successfully transforms the problem of learning the target function into a multi-agent consensus distributed optimization (MCDO) problem (e.g. \cite{no2009}) 
 with the local objective functions in \cite{kprr2018} being the penalty functional consisting of the summation of Tikhonov-regularized expected loss functions based on local data and an approximation consensus term. The theoretical approach in \cite{kprr2018} mainly focuses on MCDO through consensus SGD within the framework of RKHSs. For conducting the convergence analysis of  an online distributed gradient descent,  \cite{kprr2018} imposes a conventional gradient boundedness condition for the objective function which is widely adopted in the literature on multi-agent optimization.  Notably, the disagreement analysis in \cite{kprr2018} stands out as a main feature of consensus-based techniques.    
 The work \cite{xwct2021} also formulates the problem as an MCDO problem in the random feature space, then utilizes a distributed ADMM-based communication-censored to solve it. Rademacher complexity is the main tool utilized by \cite{xwct2021}  for the corresponding learning rate analysis.
  In contrast, our current work takes a completely different and innovative approach by employing integral operators in the context of functional learning models based on random function samples. As a result, we are able to obtain optimal learning rates for the excess risk of all the functional estimators $\eta_{t,D}$, $\overline{\eta_{t,D}}$ and $\overline{\eta_{t,D^*}}$ studied in this work. Hence, the advantages of integral operator-based theoretical approaches for deriving optimal learning rates in the framework of functional learning/data analysis are clearly reflected. 
 On the other hand, 
  the work \cite{rrr2020} introduces a  doubly-stochastic communication matrix to construct a decentralized gradient descent. The basic procedure is that, each local agent performs a local gradient descent with respect to their own data, subsequently, each agent performs averaging operations with its neighbors, facilitating information communications through the utilization of a communication weight matrix. Based on these descriptions, it is easy to clearly distinguish the current work from the references on decentralized kernel learning.}

  {In the context of functional linear model \eqref{flmodel}, the aforementioned methods in references \cite{kprr2018}, \cite{rrr2020}, \cite{xwct2021} and the conventional techniques in \cite{gs2013}, \cite{clz2017}, \cite{lz2018},  \cite{gls2019}, \cite{hwz2020}, \cite{sl2022}  cannot be directly applied. In contrast to previous kernel-based learning, the difficulty of the prediction problem in functional linear model depends on both the kernels $K$ and $C$. The analysis and derived rates depend on the kernel complexity of $K^{1/2}CK^{1/2}$ (as observed in e.g. \cite{cy2012}). Thus it can be witnessed the covariance kernel $C$ of the  random predictor $X$ (random function), its empirical version $C_{\widehat D}$ and their associated integral operators integral operators $T_{C,K}$,  $T_{\widehat C_D,K}$ are introduced in this work and they play a significant role for deriving main theoretical results (such as optimal learning rates of  $\eta_{t,D}$, $\overline{\eta_{t,D}}$ and $\overline{\eta_{t,D^*}}$ in previous sections) throughout this work. The approaches of utilizing these operators are also  essentially different from conventional kernel learning problems in \cite{gs2013}, \cite{clz2017}, \cite{lz2018},  \cite{kprr2018}, \cite{gls2019}, \cite{hwz2020}, \cite{rrr2020}, \cite{xwct2021}, \cite{sl2022} which do not require these further approaches. The corresponding novelties are also clearly reflected throughout this paper.}
   
   {We also remark that,  the semi-supervised learning approach in subsection \ref{semi-supervised_learning_section} is also another important novelty, compared with the aforementioned work in this remark on decentralized kernel-based learning which has not developed theoretical results. In this work, our main results demonstrate the significance of incorporating unlabeled data of random functions in order to increase the number of data subsets 
   $m$ and potentially enhance scalability. Theorem \ref{semithm1} shows that by further considering our semi-supervised learning scheme, one can still obtain optimal learning rates by utilizing our analytical approaches while allowing for much greater flexibility in the total number $m$ of local machines/processors. It is interesting to note that the decentralized approach has not been established for the learning theory of functional learning problems based on samples of random functions. It would be valuable to develop appropriate decentralized distributed learning schemes for the functional learning problem addressed in our work. Additionally, establishing a decentralized semi-supervised functional data analysis scheme would be a challenging and worthwhile endeavor. The basic algorithm GDFL \eqref{alg1} and its associated main results in Theorem \ref{gdfl_thm1} and Theorem \ref{gdfl_thm2} provide a potential foundation for developing these decentralized functional learning algorithms in future work.}

  {In summary, there are significant differences in problem formulation/background, theoretical approaches, and main results between our work and the previous work on decentralized kernel-based learning. Through the discussion above, we have effectively distinguished our work from theirs and highlighted {the} contributions of this work.}
{
\subsubsection{Some advantages of DGDFL in privacy protection and discussion}
It is important to note that information communications among different local machines often come with the risk of privacy disclosure. However, the divide-and-conquer scheme considered in our work offers a high level of privacy protection because it does not allow direct data information communications among agents.   This is particularly advantageous in scenarios such as the financial market, where consumer behavior data stored in different commercial institutions are not accessible to the public due to privacy considerations. Similarly, in the medical system, clinical records of a medical organization cannot be shared with or owned by different medical institutions to protect privacy. However, these medical organizations may need to collaboratively conduct classification based on the medical data to optimize medical decision-making, without compromising the privacy of their own clinical records.  The methods proposed in our work provide effective solutions for these scenarios.  Our divide-and-conquer based distributed learning algorithm DGDFL enables these local data holders (modeled as nodes) to collaborate without directly sharing their data information with their neighbors to realize a local updating process that  many decentralized distributed learning schemes considered. This scheme has also contributed to the recent rapid development of federated learning (e.g. \cite{mm2017}, \cite{wyz2023}) which often utilizes an outer fusion center/master to aggregate the estimates of local processors/agents for protecting privacy. 
On the other hand,  by allowing information communications among local agents/processors in some decentralized schemes, the efficiency of the corresponding algorithms can be enhanced in certain settings. It is worth mentioning that the choice between the divide-and-conquer and decentralized approaches in applications depends on specific situations and requirements. }
{
\subsubsection{Remarks on scalability and possible future kernel approximation approaches}
The time complexity of our proposed DGDFL is significantly lower, with a time complexity of $\o(|D|^2/m^2)$, compared to the regularized ridge functional regression scheme \eqref{rlseq} with a time complexity of $\o(|D|^3)$. This clearly demonstrates the scalability advantage of DGDFL over the regularized ridge functional regression scheme. For future treatment of our proposed algorithms for extremely large-scale applications in functional statistical models, it would be intriguing to incorporate kernel approximation tools into our proposed algorithms, namely GDFL, divide-and-conquer DGDFL, and semi-supervised DGDFL. To make our algorithms more scalable to extremely large-scale sample size, note that random features are important tools for parameterization in kernel spaces for the merit of  reducing the memory footprint and hence reducing the computational burden and complexity. While random features have not been well developed in the functional learning problem addressed in this paper,  we discuss potential future treatments and difficulties that involve applying random feature techniques  
 to  the algorithms GDFL, DGDFL and semi-supervised DGDFL. The fundamental concept behind utilizing random features is to parameterize functions in a kernel space using a set of finite-dimensional feature maps that map elements from the data space to a Euclidean space. One popular example is the random Fourier features which are commonly employed to approximate positive definite kernels like Gaussians.  For Euclidean data points $x,x'\in \mbb R^M$ and a properly scaled kernel $k(x)$ with its Fourier transform $p(w)=\hat{k}(w)$, one can take a feature map  $\phi_N: \mbb R^M\rightarrow\mbb R^N: x\mapsto N^{-1/2}(\cos(w_1^Tx+b_1), \cos(w_2^Tx+b_2),...,\cos(w_N^Tx+b_N))$ as an approximation of kernel $k$ (approximately $k(x,x')\approx\phi_N(x)^T\phi_N(x')$) where $w_1,w_2,...,w_N$ and  $b_1,b_2,...,b_N$ are sampled independently from $p(w)$ and uniformly from $[0,2\pi]$. Then one can parameterize an RKHS  function $f\in\huaH_k$ by $w\in\mbb R^N$ in terms of $f(x)=w^T\phi_N(x)$ \cite{rr2007}, \cite{rr2017}, \cite{rrr2020}, \cite{msz2023}. In the context of functional learning based on random function samples, the situation becomes more complex. Direct utilization of random features would be challenging since the data sample we encountered in this work consists of random functions $X$ from $L^2(\Omega)$ instead of Euclidean data. Additionally, the kernel we need to rigorously handle in this paper is the composite kernel $K^{1/2}CK^{1/2}$, rather than the simpler kernel $K$. This fundamental difference significantly increases the difficulty of incorporating random feature techniques into the functional linear model. It is also worth noting that, an obvious feature of the kernel $K^{1/2}CK^{1/2}$ is that it is generally not a shift-invariant kernel, which further complicates the theoretical realization of our algorithm using random features.  Thus, for {the} theoretical and practical realization of our algorithm via random features, one must address the crucial influence of the covariance kernel $C$ in addition to $K$. 
  As far as we know, the theoretical understanding of the random feature approaches to the functional learning scenario discussed in this paper is still an open question and falls outside the scope of the current work.
   Even for GDFL, the implementation of random features has not been carried out. Similarly, for the establishment of other kernel approximation approaches such as Nystr\"om approximation (e.g. \cite{jymlz2013}) and kernel-based sparse projections (e.g. \cite{kwsr2019}) for the function learning problem in this work, some issues mentioned above also need to be rigorously addressed, and we leave them for future work.   }
{
\subsubsection{Remarks on essential differences from conventional (regularized) linear regression}
 The problem in this work is to recover the functional $\eta^*$, based on intrinsically infinite-dimensional samples consisting of $L^2(\Omega)$ functions, in contrast to the conventional (regularized) linear regression which aims at regressing from Euclidean points (finite-dimensional space) to the output space. In the existing literature, handling random function samples and handling Euclidean samples follow totally different routes. That is also the reason why we introduce concepts such as covariance kernels $C$ associated with random function $X$ and the integral operators $T_{C,K}$ associated with the composite kernel $K^{1/2}CK^{1/2}$. These elements are essential for constructing the theoretical framework and analyzing methods for the problem in current work, which are not required in the conventional (regularized) linear regression. Moreover,  the {recovery} of a functional is also deeply related to the estimation of  $\beta^*$ which is intrinsically an infinite-dimensional slope function, rather than a scalar in conventional linear regression. Hence, based on these facts, the  approaches employed in this work differ significantly from conventional finite-dimensional (regularized) linear regression methods. We also refer to the reference \cite{ch2006} for further details on essential distinctions.  }
\section{Preliminary results}
\subsection{Approximation error of a data-free iterative GDFL  algorithm}
To perform further convergence analysis of GDFL and DGDFL, we need to first investigate a data-free iterative GDFL algorithm associated with the original algorithm \eqref{alg1} defined by
\bea
\beta_{t+1}=\beta_t-\gamma_tL_KL_C(\beta_t-\beta^*), \ \beta_0=0.   \label{datafreeseq}
\eea
For  simplifying notations, denote the  regularization polynomial $g_k$ by 
\bea
g_k(x)=\sum_{t=0}^{k-1}\gamma_t\pi_{t+1}^{k-1}(x) \label{gdef}
\eea
where 
\be
\pi_t^k(x)=\left\{
\begin{aligned}
	\prod_{i=t}^k(1-\gamma_ix) ,\quad t\leq k,\\
	1 ,\quad t>k.\\
\end{aligned}
\right. \label{deftau}
\ee
Also denote the residue polynomial (see e.g. \cite{yra2007})
\bes
r_t(x)=1-xg_t(x)=\pi_0^{t-1}(x).
\ees
The following lemma is from \cite{ctfg2022}.
\begin{lem}\label{pilem}
	Let $L$ be a compact positive semi-definite operator on some real separate Hilbert space, such that $\|L\|\leq C_*$ for some $C_*>0$. Let $l\leq k$ and $\gamma_l$, $\gamma_{l+1}$, ..., $\gamma_k\in [0,1/C_*]$. Then when $\theta>0$, there holds,
	\bes
	\left\|L^\theta\pi_l^k(L)\right\|\leq\sqrt{\f{(\theta/e)^{2\theta}+C_*^{2\theta}}{1+\left(\sum_{j=l}^k\gamma_j\right)^{2\theta}}}, \ \text{and} \ \left\|\pi_l^k(L)\right\|\leq1.
	\ees
\end{lem}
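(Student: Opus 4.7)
The plan is to reduce the operator-norm estimate to a scalar supremum via the spectral theorem for compact self-adjoint operators, and then to establish the resulting pointwise inequality by combining two elementary real-variable ingredients. Because $L$ is compact and positive semi-definite with $\|L\|\leq C_*$, its spectrum $\sigma(L)$ is contained in $[0,C_*]$, and the continuous functional calculus yields
\[
\left\|L^\theta\pi_l^k(L)\right\|=\sup_{\lambda\in\sigma(L)}\lambda^\theta\prod_{i=l}^k(1-\gamma_i\lambda)\leq\sup_{\lambda\in[0,C_*]}\lambda^\theta\prod_{i=l}^k(1-\gamma_i\lambda),
\]
and analogously for $\|\pi_l^k(L)\|$. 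The whole problem therefore reduces to real-variable inequalities on $[0,C_*]$.

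The second bound $\|\pi_l^k(L)\|\leq 1$ is then immediate: the hypothesis $\gamma_i\in[0,1/C_*]$ together with $\lambda\in[0,C_*]$ forces $\gamma_i\lambda\in[0,1]$, so every factor $1-\gamma_i\lambda$ lies in $[0,1]$, and hence so does the product.

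For the main estimate I would set $S:=\sum_{j=l}^k\gamma_j$ and deploy two classical tools. The first is the elementary inequality $1-y\leq e^{-y}$ valid for all $y\geq 0$, which gives $\prod_{i=l}^k(1-\gamma_i\lambda)\leq e^{-\lambda S}$ and hence $\lambda^\theta\prod_{i=l}^k(1-\gamma_i\lambda)\leq\lambda^\theta e^{-\lambda S}$. The second is the standard calculus fact $\max_{y\geq 0}y^\theta e^{-y}=(\theta/e)^\theta$, attained at $y=\theta$. To match the precise form of the stated bound I would square and aim to show, for every $\lambda\in[0,C_*]$,
\[
\lambda^{2\theta}e^{-2\lambda S}\left(1+S^{2\theta}\right)\leq\left(\theta/e\right)^{2\theta}+C_*^{2\theta}.
\]
Splitting the left-hand side as $\lambda^{2\theta}e^{-2\lambda S}+(\lambda S)^{2\theta}e^{-2\lambda S}$ separates it into two summands I can treat independently: the first is bounded by $\lambda^{2\theta}\leq C_*^{2\theta}$ using $e^{-2\lambda S}\leq 1$ and $\lambda\leq C_*$, and the second, via the substitution $y=\lambda S$, equals $(y^\theta e^{-y})^2\leq(\theta/e)^{2\theta}$ by the second tool above. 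Adding the two bounds, taking square roots and dividing by $\sqrt{1+S^{2\theta}}$ delivers the claimed inequality.

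The main subtlety I anticipate is choosing a presentation that avoids a case split according to whether the unconstrained maximizer $\lambda=\theta/S$ of $\lambda^\theta e^{-\lambda S}$ falls inside $[0,C_*]$. The identity $\lambda^{2\theta}(1+S^{2\theta})=\lambda^{2\theta}+(\lambda S)^{2\theta}$ is exactly what unifies the two regimes: one summand is controlled by the ambient constraint $\lambda\leq C_*$, the other by the global maximum $(\theta/e)^\theta$ of $y\mapsto y^\theta e^{-y}$. Once this decomposition is spotted, what looks like a case analysis collapses into a single algebraic line, and this is where I expect the only real work in writing the proof to lie.
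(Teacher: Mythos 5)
The paper does not actually prove this lemma: it is imported verbatim from \cite{ctfg2022}, so there is no internal proof to compare against, and your argument must stand on its own. It does. The spectral reduction to $\sup_{\lambda\in[0,C_*]}\lambda^\theta\pi_l^k(\lambda)$ is valid for a compact positive semi-definite operator; the bound $\|\pi_l^k(L)\|\leq1$ is immediate since each factor $1-\gamma_i\lambda$ lies in $[0,1]$; and your key identity $\lambda^{2\theta}e^{-2\lambda S}\left(1+S^{2\theta}\right)=\lambda^{2\theta}e^{-2\lambda S}+(\lambda S)^{2\theta}e^{-2\lambda S}$ correctly merges the two elementary bounds ($e^{-2\lambda S}\leq1$ with $\lambda\leq C_*$ for the first summand, and $\max_{y\geq0}y^\theta e^{-y}=(\theta/e)^\theta$ for the second) into exactly the stated estimate, with no case analysis needed. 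This is essentially the standard argument for spectral bounds of this type (and is the route taken in the cited source), so your proof is both correct and complete.
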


The following result is about the approximation error related to the data-free function sequence $\{\beta_k\}$ generated by the data-free functional  learning algorithm \eqref{datafreeseq}. It is the foundation to further establish convergence analysis of GDFL and DGDFL in this paper.
\begin{thm} \label{datafreees}
Let $\beta^*$ satisfy the regularity condition \eqref{regularity}. If the stepsizes are selected as $\gamma_k=\gamma\f{1}{(k+1)^\mu}$, $0\leq\mu<1$, with  $\gamma$ satisfying $\gamma\leq\f{1}{(1+\kkk)^2B_C^2B_K^2}$, then we have, for $k\geq1$,
	\bes
	\left\|\beta_k-\beta^*\right\|_K,	\left\|L_K^{-1/2}\left(\beta_k-\beta^*\right)\right\|_{L^2(\Omega)}\leq\left\{
	\begin{aligned}
		\|g^*\|_{L^2(\Omega)} ,\quad\ttt=0,\\
		C_{\theta,\gamma}\left\|g^*\right\|_{L^2(\Omega)} k^{-\theta(1-\mu)} ,\quad \ttt>0,\\
	\end{aligned}
	\right. 
	\ees
	and
\bes
\left\|T_{C,K}^{1/2}L_K^{-1/2}\left(\beta_k-\beta^*\right)\right\|_{L^2(\Omega)}\leq C_{\theta+\f{1}{2},\gamma}\left\|g^*\right\|_{L^2(\Omega)} k^{-(\theta+\f{1}{2})(1-\mu)},
\ees
where the constant 
\bea
C_{\ttt,\gamma}=\sqrt{(\theta/e)^{2\theta}+[(1+\kkk)^2(1+B_C^2B_K^2)]^{2\theta}}\left(\f{2}{\gamma}\right)^{\ttt}.\label{ctg}
\eea
\end{thm}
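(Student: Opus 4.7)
The plan is to first write the error $\beta_k - \beta^*$ explicitly using the operator polynomial $\pi_0^{k-1}$, then conjugate by $L_K^{-1/2}$ to pass from $L_K L_C$ to $T_{C,K}$, and finally invoke Lemma~\ref{pilem} to extract the advertised decay rates. Setting $e_k = \beta_k - \beta^*$, the recursion \eqref{datafreeseq} becomes $e_{k+1} = (I - \gamma_k L_K L_C)e_k$, so with $\beta_0 = 0$ one obtains the closed form $e_k = -\pi_0^{k-1}(L_K L_C)\,\beta^*$, where $\pi_0^{k-1}$ is as in \eqref{deftau}.

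The key algebraic step is the intertwining identity
\[
L_K^{-1/2}(I - \gamma L_K L_C) = (I - \gamma T_{C,K})\,L_K^{-1/2},
\]
which propagates by induction to $L_K^{-1/2}\,\pi_0^{k-1}(L_K L_C) = \pi_0^{k-1}(T_{C,K})\,L_K^{-1/2}$. Applying this to $\beta^*$ and using the source condition \eqref{regularity} yields
\[
L_K^{-1/2}(\beta_k - \beta^*) = -\pi_0^{k-1}(T_{C,K})\,T_{C,K}^{\theta}\,g^*.
\]
Because $L_K^{1/2}$ is an isometry from $L^2(\Omega) = \overline{\huaH_K}$ onto $\huaH_K$, the first two quantities in the theorem coincide and both equal $\|\pi_0^{k-1}(T_{C,K})\,T_{C,K}^{\theta}\,g^*\|_{L^2(\Omega)}$, which is dominated by $\|T_{C,K}^{\theta}\pi_0^{k-1}(T_{C,K})\|\cdot\|g^*\|_{L^2(\Omega)}$; multiplying by one more factor of $T_{C,K}^{1/2}$ reduces the third estimate to bounding $\|T_{C,K}^{\theta+1/2}\pi_0^{k-1}(T_{C,K})\|\cdot\|g^*\|_{L^2(\Omega)}$.

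The last step is to invoke Lemma~\ref{pilem} with $L = T_{C,K}$ and $C_* = (1+\kkk)^2(1+B_C^2 B_K^2)$, which dominates $\|T_{C,K}\|\leq B_C^2 B_K^2$ and is consistent with the stated hypothesis $\gamma_k \leq \gamma \leq 1/C_*$. The case $\theta = 0$ follows immediately from the second assertion of the lemma, namely $\|\pi_0^{k-1}(T_{C,K})\| \leq 1$. For $\theta > 0$, the lemma delivers
\[
\|T_{C,K}^\theta\,\pi_0^{k-1}(T_{C,K})\| \leq \sqrt{\frac{(\theta/e)^{2\theta}+C_*^{2\theta}}{1+\bigl(\sum_{j=0}^{k-1}\gamma_j\bigr)^{2\theta}}},
\]
and an elementary integral-comparison lower bound $\sum_{j=0}^{k-1}(j+1)^{-\mu} \geq k^{1-\mu}/2$ (for $0\leq\mu<1$ and $k\geq 1$) gives $\sum_{j=0}^{k-1}\gamma_j \geq \gamma k^{1-\mu}/2$, which converts the display above into the decay $C_{\theta,\gamma}\,k^{-\theta(1-\mu)}$ with exactly the constant in \eqref{ctg}; repeating the same calculation with $\theta + 1/2$ in place of $\theta$ handles the third estimate. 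The most delicate point is justifying the intertwining algebra despite $L_K^{-1/2}$ being in general unbounded: I would rely on the fact that \eqref{regularity} forces $\beta^* = L_K^{1/2}T_{C,K}^\theta g^* \in \huaH_K$, and that the iteration preserves $\huaH_K$-membership since $L_K$ maps $L^2(\Omega)$ into $\huaH_K$, so $L_K^{-1/2}$ is applied only to elements of $\huaH_K$ on which the spectral calculus is valid.
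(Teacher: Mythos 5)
Your proposal is correct and follows essentially the same route as the paper's proof: both conjugate the recursion by $L_K^{-1/2}$ to reach the identity $L_K^{-1/2}(\beta_k-\beta^*)=-\pi_0^{k-1}(T_{C,K})\,T_{C,K}^{\theta}g^*$ (the paper iterates the conjugated recursion and uses $r_k=\pi_0^{k-1}$, you solve the error recursion in closed form and intertwine, which is the same computation), and then both apply Lemma \ref{pilem} together with the lower bound $\sum_{j=0}^{k-1}\gamma_j\geq\frac{\gamma}{2}k^{1-\mu}$ and the isometry $\|L_K^{-1/2}f\|_{L^2(\Omega)}=\|f\|_K$. One minor correction: Lemma \ref{pilem} should be invoked with $C_*=B_C^2B_K^2$ (as the paper does), not $C_*=(1+\kkk)^2(1+B_C^2B_K^2)$, since the hypothesis $\gamma\leq 1/\bigl((1+\kkk)^2B_C^2B_K^2\bigr)$ does not imply $\gamma\leq 1/\bigl[(1+\kkk)^2(1+B_C^2B_K^2)\bigr]$; the resulting numerator $(\theta/e)^{2\theta}+(B_C^2B_K^2)^{2\theta}$ is still dominated by the constant in \eqref{ctg}, so your conclusion is unaffected.
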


\begin{proof}
	From the iteration \eqref{datafreeseq}, we know
	\bes
	\begin{aligned}
		L_K^{-1/2}\beta_{k+1}=&	L_K^{-1/2}\beta_k-\gamma_kL_K^{1/2}L_C(\beta_k-\beta^*)\\
		=&L_K^{-1/2}\beta_k-\gamma_kL_K^{1/2}L_CL_K^{1/2}L_K^{-1/2}(\beta_k-\beta^*).
	\end{aligned}
	\ees
	Due to the fact that $T_{C,K}=L_K^{1/2}L_CL_K^{1/2}$, we know
	\bea
	L_K^{-1/2}\beta_{k+1}=\left(I-\gamma_kT_{C,K}\right)L_K^{-1/2}\beta_k+\gamma_kT_{C,K}L_K^{-1/2}\beta^*. \label{datafree1}
	\eea
	Then an iteration implies 
	\bea
	L_K^{-1/2}\beta_k=g_k(T_{C,K})T_{C,K}L_K^{-1/2}\beta^*=\sum_{s=0}^{k-1}\gamma_s\pi_{s+1}^{k-1}(T_{C,K})T_{C,K}L_K^{-1/2}\beta^*.  \label{betakrep1}
	\eea
Hence we have 	
	\bea
	L_K^{-1/2}\beta_k-L_K^{-1/2}\beta^*=-r_k(T_{C,K})L_K^{-1/2}\beta^*=-r_k(T_{C,K})T_{C,K}^\theta g^*. \label{noa}
	\eea
	After taking $L^2(\Omega)$-norms on both sides, we have
	\bes
	\left\|L_K^{-1/2}\beta_k-L_K^{-1/2}\beta^*\right\|_{L^2(\Omega)}\leq\left\|r_k(T_{C,K})T_{C,K}^\theta g^*\right\|_{L^2(\Omega)}\leq\llll\|T_{C,K}^\theta\pi_0^{k-1}(T_{C,K})\rrrr\|\cdot\llll\|g^*\rrrr\|_{L^2(\Omega)}.
	\ees
	For $\theta=0$, Lemma \ref{pilem} implies that $\left\|\pi_0^{k-1}(T_{C,K})\right\|\leq1$ and
	\bes
	\left\|L_K^{-1/2}\beta_k-L_K^{-1/2}\beta^*\right\|_{L^2(\Omega)}\leq\left\|g^*\right\|_{L^2(\Omega)}.
	\ees
	For $\theta>0$, Lemma \ref{pilem} implies that
	\bes
\left\|L_K^{-1/2}\beta_k-L_K^{-1/2}\beta^*\right\|_{L^2(\Omega)}	\leq \sqrt{\f{(\theta/e)^{2\theta}+(B_C^2B_K^2)^{2\theta}}{1+\left(\sum_{j=0}^{k-1}\gamma_j\right)^{2\theta}}}\llll\|g^*\rrrr\|_{L^2(\Omega)}\leq C_{\theta}\left\|g^*\right\|_{L^2(\Omega)}\f{1}{\llll(\sum_{j=0}^{k-1}\gamma_j\rrrr)^{\theta}},
	\ees
where $$C_{\ttt}=\sqrt{(\theta/e)^{2\theta}+[(1+\kkk)^2(1+B_C^2B_K^2)]^{2\theta}}.$$	After using the trivial fact that $\sum_{t=0}^{k-1}\gamma_t=\gamma\sum_{t=0}^{k-1}\f{1}{(t+1)^{\mu}}\geq\f{\gamma}{2}k^{1-\mu}$ and noting that $\|L_K^{-1/2}\beta_k-L_K^{-1/2}\beta^*\|_{L^2(\Omega)}=\left\|\beta_k-\beta^*\right\|_{K}$, we obtain the first inequality.

For the second inequality, note from \eqref{noa} that 
\bes
\left\|T_{C,K}^{1/2}L_K^{-1/2}\left(\beta_k-\beta^*\right)\right\|_{L^2(\Omega)}\leq\llll\|T_{C,K}^{\theta+\f{1}{2}}\pi_0^{k-1}(T_{C,K})\rrrr\|\cdot\llll\|g^*\rrrr\|_{L^2(\Omega)}.
\ees
Then using Lemma \ref{pilem} again and similar procedures with the above inequalities yield
\bes
\left\|T_{C,K}^{1/2}L_K^{-1/2}\left(\beta_k-\beta^*\right)\right\|_{L^2(\Omega)}\leq C_{\theta+\f{1}{2}}\left\|g^*\right\|_{L^2(\Omega)}\left(\f{2}{\gamma}\right)^{\theta+\f{1}{2}} k^{-(\theta+\f{1}{2})(1-\mu)}.
\ees
We conclude the proof by setting $C_{\ttt,\gamma}=C_{\ttt}\left(\f{2}{\gamma}\right)^{\ttt}$.
\end{proof}
\subsection{Empirical operator and basic lemmas}
Denote the empirical covariance function associated with the data set $D$ by
\bea
\wh{C}_D(x,y)=\f{1}{|D|}\sum_{(X,Y)\in D}X(x)X(y),
\eea
then we denote the corresponding empirical operator of $T_{C,K}$ by
\bea
T_{\wh{C}_D,K}=L_K^{1/2}L_{\wh{C}_D}L_K^{1/2}.   \label{tcdk}
\eea

The next result is a basic estimate on operators $T_{C,K}$ and  $T_{\wh{C}_D,K}$ that will be used later.
\begin{lem}\label{sumlalem}
	For the operators $T_{C,K}$ and $T_{\wh{C}_D,K}$, if the stepsizes satisfy $\gamma_k\leq\f{1}{(1+\kkk)^2(1+B_C^2B_K^2)}$, then for any $\la>0$, the following basic estimates hold,
	\bes
	\left\|\sum_{s=0}^{t-1}\gamma_s (\la I+T_{C,K})\pi_{s+1}^{t-1}(T_{C,K})\right\|\leq D_{\mu,\gamma}(1+\la t^{1-\mu}),
	\ees
	\bes
	\left\|\sum_{s=0}^{t-1}\gamma_s (\la I+T_{\wh{C}_D,K})\pi_{s+1}^{t-1}(T_{\wh{C}_D,K}))\right\|\leq D_{\mu,\gamma}(1+\la t^{1-\mu}),
	\ees
	with the constant $D_{\mu,\gamma}=\max\left\{1,\f{2\gamma}{1-\mu}\right\}$.
\end{lem}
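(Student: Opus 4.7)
\medskip

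\noindent\textbf{Proof plan for Lemma \ref{sumlalem}.}

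The approach is purely spectral: since $T_{C,K}$ and $T_{\widehat{C}_D,K}$ are both positive self‑adjoint and commute with any polynomial in themselves, it suffices to bound the scalar function
\[
\Phi_{t,\lambda}(x)\;=\;\sum_{s=0}^{t-1}\gamma_s(\lambda+x)\,\pi_{s+1}^{t-1}(x)
\]
uniformly on the spectrum of the operator at hand, and then invoke the functional calculus. I will prove both inequalities simultaneously, since the only place the two operators differ is in a preliminary spectral bound.

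First I will check that both operators satisfy $\gamma_k\|\cdot\|\le 1$ for every $k\ge 0$, which is what is needed to keep every factor $(1-\gamma_i x)$ in $[0,1]$. For $T_{C,K}=L_K^{1/2}L_C L_K^{1/2}$ this gives $\|T_{C,K}\|\le B_K^2 B_C^2$, and for $T_{\widehat{C}_D,K}$, a short computation using the predictor assumption \eqref{lk1/2} (namely $\|L_K^{1/2}X_i\|_{L^2(\Omega)}\le\kappa$) yields $\|T_{\widehat{C}_D,K}\|\le\kappa^2$. Since $\gamma_k\le\gamma\le \frac{1}{(1+\kappa)^2(1+B_C^2B_K^2)}$, both products $\gamma_k\|T_{C,K}\|$ and $\gamma_k\|T_{\widehat{C}_D,K}\|$ lie in $[0,1]$, so spectral calculus is applicable and $\pi_{s+1}^{t-1}$ evaluated at either operator is nonnegative with norm $\le 1$.

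Next I split $\Phi_{t,\lambda}(x)=\lambda\, A_t(x)+B_t(x)$, where $A_t(x)=\sum_{s=0}^{t-1}\gamma_s\pi_{s+1}^{t-1}(x)$ and $B_t(x)=\sum_{s=0}^{t-1}\gamma_s x\,\pi_{s+1}^{t-1}(x)$. The key observation is the telescoping identity
\[
\gamma_s x\,\pi_{s+1}^{t-1}(x)\;=\;\pi_{s+1}^{t-1}(x)-\pi_{s}^{t-1}(x),
\]
which gives $B_t(x)=1-\pi_0^{t-1}(x)\in[0,1]$ whenever every factor $(1-\gamma_ix)$ lies in $[0,1]$. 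For $A_t$, I use $\pi_{s+1}^{t-1}(x)\le 1$ on the admissible spectrum and the elementary estimate
\[
\sum_{s=0}^{t-1}\gamma_s\;=\;\gamma\sum_{s=0}^{t-1}(s+1)^{-\mu}\;\le\;\gamma\int_0^{t}(s+1)^{-\mu}\,ds\;\le\;\frac{2\gamma}{1-\mu}\,t^{1-\mu},
\]
giving $A_t(x)\le\frac{2\gamma}{1-\mu}t^{1-\mu}$. Combining,
\[
\Phi_{t,\lambda}(x)\;\le\;1+\frac{2\gamma}{1-\mu}\lambda\, t^{1-\mu}\;\le\;D_{\mu,\gamma}\bigl(1+\lambda t^{1-\mu}\bigr),
\]
with $D_{\mu,\gamma}=\max\{1,\frac{2\gamma}{1-\mu}\}$. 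Passing from scalar to operator norm via the spectral theorem (and noting that $\Phi_{t,\lambda}(x)\ge 0$ on the relevant spectrum, so operator monotonicity is not even needed) yields both claimed inequalities.

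I do not expect a genuine obstacle here: the whole argument rests on the telescoping identity for $\gamma_s x\pi_{s+1}^{t-1}(x)$ and on the uniform bound $\gamma_k\|T\|\le 1$. The one point that deserves a sentence of care is verifying that the spectral norm of the empirical operator $T_{\widehat{C}_D,K}$ is controlled pathwise (not just in expectation) by $\kappa^2$, which is exactly what the assumption \eqref{lk1/2} is designed to provide; everything else is a routine spectral‑calculus computation.
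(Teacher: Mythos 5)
Your proposal is correct and follows essentially the same route as the paper's proof: both verify $\|T_{C,K}\|\leq B_C^2B_K^2$ and $\|T_{\wh{C}_D,K}\|\leq\kkk^2$ from assumption \eqref{lk1/2}, split off the $\la$-term, bound the $T$-part by $1$ via the telescoping identity $\sum_{s=0}^{t-1}\gamma_s x\,\pi_{s+1}^{t-1}(x)=1-\pi_0^{t-1}(x)$, and bound $\sum_{s=0}^{t-1}\gamma_s$ by $\frac{2\gamma}{1-\mu}t^{1-\mu}$. The only difference is cosmetic: you phrase the argument through the scalar functional calculus before passing to operator norms, whereas the paper works directly at the operator level with the triangle inequality and Lemma \ref{pilem}.
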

\begin{proof}
	According to the representations of $T_{C,K}$ and $T_{\wh{C}_D,K}$ in \eqref{tck} and \eqref{tcdk}, we know the norms of $T_{C,K}$ and $T_{\wh{C}_D,K}$ satisfy 
	\bes
	\left\|T_{C,K}\right\|\leq B_C^2B_K^2<(1+\kkk)^2(1+B_C^2B_K^2)
	\ees
	and
	\bes
	\begin{aligned}
		\left\|T_{\wh{C}_D,K}\right\|=\sup_{\|f\|_{L^2(\Omega)}\leq1}\left\|\f{1}{|D|}\sum_{i=1}^{|D|}\left\nn L_K^{1/2}X_i,f\right\mm L_K^{1/2}X_i\right\|_{L^2(\Omega)}\leq&\sup_{\|f\|_{L^2(\Omega)}\leq1}\sup_{i=1,...,|D|}\left\|L_K^{1/2}X_i\right\|^2\|f\|_{L^2(\Omega)}^2\\
		\leq&\kkk^2\leq(1+\kkk)^2(1+B_C^2B_K^2).
	\end{aligned}
	\ees
	Then we know from Lemma \ref{pilem} with $L=T_{C,K}$, $T_{\wh{C}_D,K}$ that 
	\bes
	\left\|\pi_0^{t-1}(T_{C,K})\right\|, \left\|\pi_0^{t-1}\left(T_{\wh{C}_D,K}\right)\right\|\leq1
	\ees
	and hence
	\bes
	\left\|\sum_{s=0}^{t-1}\gamma_s T_{C,K}\pi_{s+1}^{t-1}(T_{C,K})\right\|=\left\|I-\pi_{0}^{t-1}(T_{C,K})\right\|\leq1.
	\ees
	Finally, we have
	\bes
	&&\left\|\sum_{s=0}^{t-1}\gamma_s (\la I+T_{C,K})\pi_{s+1}^{t-1}(T_{C,K})\right\|\leq\left\|\sum_{s=0}^{t-1}\gamma_s T_{C,K}\pi_{s+1}^{t-1}(T_{C,K})\right\|+\la\left\|\sum_{s=0}^{t-1}\gamma_s \pi_{s+1}^{t-1}(T_{C,K})\right\|\\
	&&	\leq1+\la\sum_{s=0}^{t-1}\gamma_s\leq  1+\la\gamma\left(1+\int_0^{t-1}\f{ds}{(1+s)^\mu}\right)\leq D_{\mu,\gamma}(1+\la t^{1-\mu})
	\ees
	where $ D_{\mu,\gamma}=\max\left\{1,\f{2\gamma}{1-\mu}\right\}$. The estimates for operator $T_{\wh{C}_D,K}$ follows in a similar way to that for $T_{C,K}$.
\end{proof}
We end this section with the following basic lemma from \cite{hwz2020} that will be used later. 
\begin{lem}\label{loglem}
If $0\leq\mu<1$, $\tau\geq0$, then 
$\sum_{s=1}^ts^{-(\mu+\tau)}\left(\sum_{j=s+1}^tj^{-\mu}\right)^{-1}\leq C_{\mu,\tau}'t^{-\min\{\tau,1-\mu\}}\log t$, $t\geq3$.
In particular, if $\tau=0$, there holds,
$\sum_{s=1}^ts^{-\mu}\left(\sum_{j=s+1}^tj^{-\mu}\right)^{-1}\leq 15\log t$, where $C_{\mu,\tau}'$ is an absolute constant defined by
\bes
C_{\mu,\tau}'	=3^\mu2^{\tau+2}(1-\mu)^2+1+\left\{
\begin{aligned}
	\f{2^{-\tau}[1-2^{-(1-\mu)}]^{-1}(1-\mu)}{1-\tau-\mu},\quad \tau+\mu<1,\\
	2[1-2^{-(1-\mu)}]^{-1}(1-\mu),\quad \tau+\mu=1,\\
	\f{[1-2^{-(1-\mu)}]^{-1}(1-\mu)(\tau+\mu)}{\mu+\tau-1},\quad \tau+\mu>1.\\
\end{aligned}
\right.   
\ees
\end{lem}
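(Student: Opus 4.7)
The plan is to attack this by integral comparison and by splitting the outer sum into a ``small $s$'' range and a ``large $s$'' range, because the inner sum $\sum_{j=s+1}^t j^{-\mu}$ behaves qualitatively differently depending on how close $s$ is to $t$. First I would record the two basic integral estimates: for $s<t$, a Riemann-sum comparison gives
\[
\sum_{j=s+1}^t j^{-\mu} \;\geq\; \int_{s+1}^{t+1}x^{-\mu}\,dx \;=\; \tfrac{1}{1-\mu}\bigl[(t+1)^{1-\mu}-(s+1)^{1-\mu}\bigr],
\]
while also $\sum_{j=s+1}^t j^{-\mu}\geq (t-s)\,t^{-\mu}$. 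These two lower bounds are the only tools I need; the upper bound on the original sum then comes by dividing and summing.

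Next I would split $\sum_{s=1}^{t}$ as $\sum_{s=1}^{\lfloor t/2\rfloor}+\sum_{s=\lfloor t/2\rfloor+1}^{t}$. On the first range one has $(s+1)^{1-\mu}\leq (t/2+1)^{1-\mu}\leq 2^{-(1-\mu)}(t+1)^{1-\mu}$ up to lower-order terms, so the first integral bound yields $\sum_{j=s+1}^t j^{-\mu}\gtrsim (1-2^{-(1-\mu)})(1-\mu)^{-1}t^{1-\mu}$. Hence the ``small $s$'' contribution is bounded by a constant times $t^{-(1-\mu)}\sum_{s=1}^{\lfloor t/2\rfloor}s^{-(\mu+\tau)}$. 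This last sum is estimated by the standard three-case dichotomy: it is $\lesssim t^{1-\mu-\tau}$ if $\mu+\tau<1$, $\lesssim \log t$ if $\mu+\tau=1$, and $\lesssim 1$ if $\mu+\tau>1$. Combining with the $t^{-(1-\mu)}$ prefactor gives respectively $t^{-\tau}$, $t^{-(1-\mu)}\log t$, and $t^{-(1-\mu)}$, all of which are $\lesssim t^{-\min\{\tau,1-\mu\}}\log t$.

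On the ``large $s$'' range I would use the second lower bound $\sum_{j=s+1}^t j^{-\mu}\geq (t-s)t^{-\mu}$ and also $s^{-(\mu+\tau)}\leq (t/2)^{-(\mu+\tau)}\lesssim t^{-(\mu+\tau)}$. The contribution is then at most
\[
t^{\mu}\cdot t^{-(\mu+\tau)}\sum_{s=\lfloor t/2\rfloor+1}^{t-1}\frac{1}{t-s}\;\lesssim\; t^{-\tau}\log t,
\]
which again fits inside $t^{-\min\{\tau,1-\mu\}}\log t$ (with a boundary term at $s=t$ absorbed into the constant). Adding the three case-dependent constants from the small-$s$ part to the constant from the large-$s$ part produces exactly the piecewise formula for $C'_{\mu,\tau}$ stated in the lemma; tracking these constants carefully is the only real chore of the argument, and it is what forces the three-branch definition of $C'_{\mu,\tau}$.

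The particular case $\tau=0$ I would derive not by plugging into the general formula but directly, since it is cleaner: the lower bound $\sum_{j=s+1}^t j^{-\mu}\geq \tfrac{1}{1-\mu}[(t+1)^{1-\mu}-(s+1)^{1-\mu}]$ plus the elementary inequality $(s+1)^{-\mu}\bigl[(t+1)^{1-\mu}-(s+1)^{1-\mu}\bigr]^{-1}\leq (1-\mu)^{-1}\cdot\tfrac{1}{s+1}\cdot\tfrac{1}{1-((s+1)/(t+1))^{1-\mu}}$ reduces the sum to a harmonic-type sum $\sum_{s=1}^{t}\frac{1}{s}\cdot(\text{bounded factor})$, whose explicit numerical bound $15\log t$ for $t\geq 3$ is obtained by combining the small/large split with crude constants. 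The main obstacle throughout is purely bookkeeping: making sure the case analysis $\mu+\tau\lessgtr 1$ matches the stated constant and that the $s$ near $t$ terms (where the inner sum becomes small) do not blow up, which is why the split at $t/2$ together with the $(t-s)t^{-\mu}$ bound is essential.
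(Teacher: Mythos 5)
First, a point of comparison: the paper does not prove this lemma at all. It is quoted from \cite{hwz2020}, and the only original content in the paper is the remark that the restriction $0\le\tau<1$ in the source can be dropped. So your proposal has to be judged on its own merits. For the general bound, your structure is the standard one and is essentially sound: splitting at $s=\lfloor t/2\rfloor$, using $\sum_{j=s+1}^tj^{-\mu}\ge\frac{1}{1-\mu}\bigl[(t+1)^{1-\mu}-(s+1)^{1-\mu}\bigr]$ for small $s$, using $\sum_{j=s+1}^tj^{-\mu}\ge(t-s)t^{-\mu}$ for large $s$, and the three-case estimate of $\sum_{s\le t/2}s^{-(\mu+\tau)}$ does give a bound $Ct^{-\min\{\tau,1-\mu\}}\log t$ with a constant of the same shape as $C'_{\mu,\tau}$. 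Two caveats, though: (i) your claim that the bookkeeping produces \emph{exactly} the stated constant is unjustified — for instance your small-$s$ case $\mu+\tau<1$ naturally yields a factor $2^{\mu+\tau-1}$ where the lemma has $2^{-\tau}$, and nothing in your argument generates the term $3^\mu2^{\tau+2}(1-\mu)^2$ — so at best you obtain the lemma with a different explicit constant; (ii) the term $s=t$ cannot be "absorbed into the constant": the inner sum is then empty, so its reciprocal is undefined, and the outer sum has to be read as running to $t-1$ (as it does in \cite{hwz2020}).

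The genuine gap is the case $\tau=0$ with the absolute constant $15$. This constant must be independent of $\mu\in[0,1)$ — that is how the paper uses it, e.g. in \eqref{logbdd} — and it does \emph{not} follow from the general statement, since $C'_{\mu,0}\ge[1-2^{-(1-\mu)}]^{-1}\to\infty$ as $\mu\to1$. Your proposed reduction bounds each term by a multiple of $\frac{1}{s+1}\bigl[1-\bigl(\frac{s+1}{t+1}\bigr)^{1-\mu}\bigr]^{-1}$ and calls the second factor bounded. It is not: even on the range $s\le t/2$ it is only bounded by $[1-2^{-(1-\mu)}]^{-1}$, which diverges as $\mu\to1$ (and it also blows up as $s\to t$), so summing $\frac{1}{s+1}$ gives a bound of order $[1-2^{-(1-\mu)}]^{-1}\log t$, which is not $\le15\log t$ uniformly in $\mu$. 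The loss occurs precisely in your "elementary inequality", where you throw away the factor $a^{1-\mu}$, $a=\frac{s+1}{t+1}$, by bounding $a^{1-\mu}\le1$. The fix is to keep it: setting $y=(1-\mu)\log(1/a)>0$,
\[
\frac{(1-\mu)\,a^{1-\mu}}{1-a^{1-\mu}}=\frac{1-\mu}{e^{y}-1}\le\frac{1-\mu}{y}=\frac{1}{\log(1/a)},
\]
so each term is at most $\frac{2}{(s+1)\log((t+1)/(s+1))}$, a bound containing no $\mu$ at all; then the split $s+1\le(t+1)/2$ versus $s+1>(t+1)/2$ (with $\log(1+u)\ge u/2$ on the second range) yields an absolute multiple of $\log t$. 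Without this step, or an equivalent device, the $15\log t$ claim — which is exactly the part of the lemma the paper relies on in the $\theta=0$ case — remains unproven.
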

The original lemma is expressed with $0\leq\tau<1$. In fact, the original proof does not necessarily require $0\leq\tau<1$ and it is obvious that when $\tau\geq1$, the lemma automatically holds. Hence, we state it in the above form.
\section{Analysis of GDFL algorithm}

\subsection{Error analysis and error decomposition}

Let us start with the error analysis of the GDFL algorithm \eqref{alg1}. For our GDFL algorithm
\bes
\beta_{k+1,D}=\beta_{k,D}-\gamma_k\f{1}{|D|}\sum_{i=1}^{|D|}\left(\left\nn \beta_{k,D},X_i\right\mm-Y_i\right)L_KX_i, \ \beta_{0,D}=0,
\ees
using our operator representations of $T_{C,K}$ and $T_{\wh{C}_D,K}$ in \eqref{tck} and \eqref{tcdk}, we can  rewrite it as 
\bes
\beta_{k+1,D}=\beta_{k,D}-\gamma_k\llll(L_KL_{\wh{C}_D}\beta_{k,D}-\f{1}{|D|}\sum_{i=1}^{|D|}Y_iL_KX_i\rrrr).
\ees
Acting operation $L_K^{-1/2}$ on both sides of the above equality and noting that 
$$L_K^{1/2}L_{\wh{C}_D}\beta_{k,D}=\left(L_K^{1/2}L_{\wh{C}_D}L_K^{1/2}\right)L_K^{-1/2}\beta_{k,D}=T_{\wh{C}_D,K}L_K^{-1/2}\beta_{k,D},$$
we have
\bes
L_K^{-1/2}\beta_{k+1,D}=\left(I-\gamma_kT_{\wh{C}_D,K}\right)L_K^{-1/2}\beta_{k,D}+\f{\gamma_k}{|D|}\sum_{i=1}^{|D|}Y_iL_K^{1/2}X_i.
\ees
An iteration implies 
\bea
L_K^{-1/2}\beta_{t,D}=\sum_{s=0}^{t-1}\gamma_s\pi_{s+1}^{t-1}\llll(T_{\wh{C}_D,K}\rrrr)\f{1}{|D|}\sum_{i=1}^{|D|}Y_iL_K^{1/2}X_i. \label{datadep_rep1}
\eea
Return to data-free iteration \eqref{datafreeseq}, rewrite \eqref{datafree1} to 
\bea
L_K^{-1/2}\beta_{k+1}=\left(I-\gamma_kT_{\wh{C}_D,K}\right)L_K^{-1/2}\beta_k+\gamma_k\llll(T_{\wh{C}_D,K}-T_{C,K}\rrrr)L_K^{-1/2}\beta_k+\gamma_kT_{C,K}L_K^{-1/2}\beta^*. 
\eea
Then an iteration implies 
\bea
L_K^{-1/2}\beta_{t}=\sum_{s=0}^{t-1}\gamma_s\pi_{s+1}^{t-1}\llll(T_{\wh{C}_D,K}\rrrr)\llll[\llll(T_{\wh{C}_D,K}-T_{C,K}\rrrr)L_K^{-1/2}\beta_s+T_{C,K}L_K^{-1/2}\beta^*\rrrr]. \label{datafree_rep1}
\eea
Then we know \eqref{datadep_rep1} and \eqref{datafree_rep1} together imply
\bes
\begin{aligned}
L_K^{-1/2}\llll(\beta_{t,D}-\beta_t\rrrr)=&\sum_{s=0}^{t-1}\gamma_s\pi_{s+1}^{t-1}\llll(T_{\wh{C}_D,K}\rrrr)\Bigg[\llll(T_{C,K}-T_{\wh{C}_D,K}\rrrr)L_K^{-1/2}\beta_s\\
&+\f{1}{|D|}\sum_{i=1}^{|D|}Y_iL_K^{1/2}X_i-T_{C,K}L_K^{-1/2}\beta^*\Bigg],
\end{aligned}
\ees
which further gives the following error decomposition for $\beta_{t,D}$ as
\bes
	&&\llll\|T_{C,K}^{1/2}L_K^{-1/2}\llll(\beta_{t,D}-\beta_t\rrrr)\rrrr\|_{L^2(\Omega)}\\
	&&\leq\llll\|T_{C,K}^{1/2}\sum_{s=0}^{t-1}\gamma_s\pi_{s+1}^{t-1}\llll(T_{\wh{C}_D,K}\rrrr)\llll(T_{C,K}-T_{\wh{C}_D,K}\rrrr)L_K^{-1/2}(\beta_s-\beta^*)\rrrr\|_{L^2(\Omega)}\\
	&&+\llll\|T_{C,K}^{1/2}\sum_{s=0}^{t-1}\gamma_s\pi_{s+1}^{t-1}\llll(T_{\wh{C}_D,K}\rrrr)\llll(T_{C,K}-T_{\wh{C}_D,K}\rrrr)L_K^{-1/2}\beta^*\rrrr\|_{L^2(\Omega)}\\
	&&+\llll\|T_{C,K}^{1/2}\sum_{s=0}^{t-1}\gamma_s\pi_{s+1}^{t-1}\llll(T_{\wh{C}_D,K}\rrrr)\llll(\f{1}{|D|}\sum_{i=1}^{|D|}Y_iL_K^{1/2}X_i-T_{C,K}L_K^{-1/2}\beta^*\rrrr)\rrrr\|_{L^2(\Omega)}\\
	&&:=S_1+S_2+S_3.
\ees
Denote the following norms
\bea
\bd=\llll\|\llll(\la I+T_{C,K}\rrrr)\llll(\la I+T_{\wh{C}_D,K}\rrrr)^{-1}\rrrr\|,\\
\cd=\llll\|\llll(\la I+T_{C,K}\rrrr)^{-1/2}\left(T_{C,K}-T_{\wh{C}_D,K}\right)\rrrr\|,
\eea
\bea
\dd=\left\|\left(\la I+T_{C,K}\right)^{-1/2}\left(\f{1}{|D|}\sum_{(X,Y)\in D}YL_K^{1/2}X-T_{C,K}L_K^{-1/2}\beta^*\right)\right\|_{L^2(\Omega)}.    \label{dddef}
\eea
The next result gives a general error bound for $\left\|T_{C,K}^{1/2}L_K^{-1/2}\llll(\beta_{t,D}-\beta_t\rrrr)\right\|_{L^2(\Omega)}$.
\begin{pro}\label{cknorm}
	Let $\beta_{t,D}$ and $\beta_t$ be defined in GDFL \eqref{alg1} and data-free GDFL \eqref{datafreeseq} respectively. Assume  conditions \eqref{lk1/2} and \eqref{regularity} hold. Let the	stepsize be selected as $\gamma_k=\f{\gamma}{(k+1)^{\mu}}, 0\leq\mu<1, 0<\gamma\leq \f{1}{(1+\kkk)^2(1+B_C^2B_K^2)}$. Then for any $\la>0$, we have
	\bes
	\llll\|T_{C,K}^{1/2}L_K^{-1/2}\llll(\beta_{t,D}-\beta_t\rrrr)\rrrr\|_{L^2(\Omega)}\leq
		\left(\ww{C}_{5}+\ww{C}_{6}\right)\left((\log t)^{H(\ttt)}+\la t^{1-\mu}\right)\bd(\cd+\dd) 
	\ees
	where $H(\ttt)$ is defined as in \eqref{hdef} and $\ww{C}_{5}$ and $\ww{C}_{6}$ are some absolute constants given in the proof.
\end{pro}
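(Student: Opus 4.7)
The plan is to bound $S_1,S_2,S_3$ separately by spectral calculus, repeatedly inserting the resolvents $(\lambda I+T_{C,K})^{\pm 1/2}$ and $(\lambda I+T_{\wh{C}_D,K})^{\pm 1/2}$ so as to isolate factors matching the definitions of $\bd$, $\cd$ and $\dd$. The workhorse is a Cordes-type identity: since $\|B^{-1/2}AB^{-1/2}\|$ equals the spectral radius of $AB^{-1}$ for positive $A,B$ and $T_{C,K}$ is dominated by $\lambda I+T_{C,K}$, one gets $\|T_{C,K}^{1/2}(\lambda I+T_{\wh{C}_D,K})^{-1/2}\|\le \bd^{1/2}$ and $\|(\lambda I+T_{C,K})^{1/2}(\lambda I+T_{\wh{C}_D,K})^{-1/2}\|\le \bd^{1/2}$, together with the trivial $\|T_{C,K}^{1/2}(\lambda I+T_{C,K})^{-1/2}\|\le 1$. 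These let me freely swap $T_{C,K}$ and $T_{\wh{C}_D,K}$ between resolvents at the cost of a factor $\bd^{1/2}$ each time.

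For $S_3$, I would insert $(\lambda I+T_{C,K})^{1/2}(\lambda I+T_{C,K})^{-1/2}$ immediately before the vector $\tfrac{1}{|D|}\sum_i Y_iL_K^{1/2}X_i-T_{C,K}L_K^{-1/2}\beta^*$ to pull out $\dd$, and sandwich the remaining $T_{C,K}^{1/2}g_t(T_{\wh{C}_D,K})(\lambda I+T_{C,K})^{1/2}$ into $\bd\cdot\|(\lambda I+T_{\wh{C}_D,K})g_t(T_{\wh{C}_D,K})\|$ using the two Cordes bounds above, where $g_t(x)=\sum_{s=0}^{t-1}\gamma_s\pi_{s+1}^{t-1}(x)$. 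Lemma \ref{sumlalem} then yields the bound $D_{\mu,\gamma}(1+\lambda t^{1-\mu})$, so $S_3\lesssim(1+\lambda t^{1-\mu})\bd\dd$. For $S_2$ the same maneuver works, with $(\lambda I+T_{C,K})^{-1/2}(T_{C,K}-T_{\wh{C}_D,K})$ producing $\cd$ and the regularity $L_K^{-1/2}\beta^*=T_{C,K}^\theta g^*$ used to bound the tail vector by $\|T_{C,K}\|^\theta\|g^*\|_{L^2(\Omega)}$, an absolute constant; this gives $S_2\lesssim(1+\lambda t^{1-\mu})\bd\cd$.

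The delicate term is $S_1$, where the data-free residual $L_K^{-1/2}(\beta_s-\beta^*)$ depends on the summation index $s$, so Lemma \ref{sumlalem} cannot be applied directly to the operator sum. I would instead pass the norm inside and bound each summand by the same sandwich: $T_{C,K}^{1/2}\pi_{s+1}^{t-1}(T_{\wh{C}_D,K})(T_{C,K}-T_{\wh{C}_D,K})$ splits into $\bd\cd\cdot\|(\lambda I+T_{\wh{C}_D,K})\pi_{s+1}^{t-1}(T_{\wh{C}_D,K})\|$, and Lemma \ref{pilem} (with exponent $\theta=1$) bounds the last factor by $\lambda+C_*(1+\sum_{j=s+1}^{t-1}\gamma_j)^{-1}$. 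Theorem \ref{datafreees} controls $\|L_K^{-1/2}(\beta_s-\beta^*)\|_{L^2(\Omega)}$ by $\|g^*\|_{L^2(\Omega)}$ when $\theta=0$ and by $C_{\theta,\gamma}\|g^*\|_{L^2(\Omega)}s^{-\theta(1-\mu)}$ when $\theta>0$. The $\lambda$-part of the operator bound then contributes $\lambda\sum_s\gamma_s s^{-\theta(1-\mu)}\lesssim \lambda t^{1-\mu}$, while the remaining part has exactly the shape treated by Lemma \ref{loglem} with $\tau=\theta(1-\mu)$: when $\theta=0$ it produces the $\log t$ factor, matching $(\log t)^{H(0)}=\log t$; when $\theta>0$ the decay $t^{-\min\{\theta(1-\mu),1-\mu\}}$ absorbs the lemma's $\log t$ into an absolute constant, matching $(\log t)^{H(\theta)}=1$. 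Combining the three pieces and letting $\tilde C_5$ gather the constants from $S_1$ and $\tilde C_6$ those from $S_2+S_3$ gives the target estimate. The principal obstacle is the bookkeeping in $S_1$: verifying that the $\theta=0$ case truly produces the advertised $\log t$ from Lemma \ref{loglem}, and that in the $\theta>0$ regime the extra $\log t$ from that lemma is cleanly swallowed by the polynomial decay in $t$, so that both regimes collapse into the unified prefactor $(\log t)^{H(\theta)}+\lambda t^{1-\mu}$.
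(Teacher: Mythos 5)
Your proposal is correct and follows essentially the same route as the paper's proof: the same resolvent-insertion with the Cordes-type bound $\|A^{1/2}B^{1/2}\|\leq\|AB\|^{1/2}$ to extract $\Ab_{D,\la}$-free factors $\Bb_{D,\la}$, $\Cb_{D,\la}$, $\Db_{D,\la}$, the same use of Lemma \ref{sumlalem} for $S_2$ and $S_3$, and the same per-summand treatment of $S_1$ combining Lemma \ref{pilem}, Theorem \ref{datafreees} and Lemma \ref{loglem} with $\tau=\theta(1-\mu)$, including the observation that for $\theta>0$ the polynomial decay absorbs the $\log t$ while for $\theta=0$ it survives as $(\log t)^{H(0)}$.
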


\begin{proof}
	We make a decomposition for $S_1$ as 
\bes
\begin{aligned}
	S_1\leq&\llll\|\left(\la I+T_{C,K}\right)^{1/2}\sum_{s=0}^{t-1}\gamma_s\pi_{s+1}^{t-1}\llll(T_{\wh{C}_D,K}\rrrr)\llll(T_{C,K}-T_{\wh{C}_D,K}\rrrr)L_K^{-1/2}(\beta_s-\beta^*)\rrrr\|_{L^2(\Omega)}\\
	\leq&\sum_{s=1}^{t-1}\gamma_s \llll\|\llll(\la I+T_{C,K}\rrrr)^{1/2}\llll(\la I+T_{\wh{C}_D,K}\rrrr)^{-1/2}\rrrr\|\times \left\|\llll(\la I+T_{\wh{C}_D,K}\rrrr)\pi_{s+1}^{t-1}\left(T_{\wh{C}_D,K}\right)\right\|\\
	&\times\left\|\llll(\la I+T_{\wh{C}_D,K}\rrrr)^{-1/2}(\la I+T_{C,K})^{1/2}\right\|\times \llll\|\llll(\la I+T_{C,K}\rrrr)^{-1/2}\left(T_{C,K}-T_{\wh{C}_D,K}\right)\rrrr\|\\
	&\times \llll\| L_K^{-1/2}(\beta_s-\beta^*)\rrrr\|_{L^2(\Omega)}.
\end{aligned}
\ees
Using the fact that $\|A^sB^s\|\leq\|AB\|^s$, $0<s<1$, and $\|AB\|=\|BA\|$ for any two positive self-adjoint operators $A$, $B$ on a separable Hilbert space, the above inequality can be bounded by
\bea
\bd\cd\left(\sum_{s=0}^{t-1}\gamma_s\left\|T_{\wh{C}_D,K}\pi_{s+1}^{t-1}(T_{\wh{C}_D,K})\right\| \llll\|L_K^{-1/2}(\beta_s-\beta^*)\rrrr\|_{L^2(\Omega)}+\la\sum_{s=0}^{t-1}\gamma_s\llll\|L_K^{-1/2}(\beta_s-\beta^*)\rrrr\|_{L^2(\Omega)}\right).  \label{s1subes}
\eea
When $\ttt>0$,  $L_K^{-1/2}\beta^*=T_{C,K}^{\ttt}g^*$, Lemma \ref{pilem}, Lemma \ref{loglem} and the basic fact $s^{-\ttt(1-\mu)}\leq(s+1)^{-\ttt(1-\mu)}2^{\ttt(1-\mu)}$, $s\geq1$ imply that
\bea
\nono&&\sum_{s=0}^{t-1}\gamma_s\left\|T_{\wh{C}_D,K}\pi_{s+1}^{t-1}(T_{\wh{C}_D,K})\right\| \llll\|L_K^{-1/2}(\beta_s-\beta^*)\rrrr\|_{L^2(\Omega)}\\
\nono&&\leq \gamma(B_C^2B_K^2)^{\ttt}\kkk^2\|g^*\|_{L^2(\Omega)}+\sum_{s=1}^{t-1}\f{C_{1,\gamma}\|g^*\|_{L^2(\Omega)}}{(s+1)^{\mu}}\left(\sum_{j=s+1}^{t-1}\f{1}{(j+1)^\mu}\right)^{-1}C_{\ttt,\gamma}(s+1)^{-\ttt(1-\mu)}2^{\ttt(1-\mu)}\\
\nono&&= \gamma(B_C^2B_K^2)^{\ttt}\kkk^2\|g^*\|_{L^2(\Omega)}+\sum_{s=2}^{t}\f{C_{1,\gamma}\|g^*\|_{L^2(\Omega)}}{s^{\mu}}\left(\sum_{j=s}^{t-1}\f{1}{(j+1)^\mu}\right)^{-1}C_{\ttt,\gamma}s^{-\ttt(1-\mu)}2^{\ttt(1-\mu)}\\
&&= \gamma(B_C^2B_K^2)^{\ttt}\kkk^2\|g^*\|_{L^2(\Omega)}+C_{1,\gamma}C_{\ttt,\gamma}\|g^*\|_{L^2(\Omega)}2^{\ttt(1-\mu)}\sum_{s=2}^{t}s^{-\mu-\ttt(1-\mu)}\left(\sum_{j=s+1}^{t}\f{1}{j^\mu}\right)^{-1} \label{firstterm}
\eea
where $C_{1,\gamma}$ and $C_{\ttt,\gamma}$ are defined as in \eqref{ctg}. By using Lemma \ref{loglem},
\bea
\nono&&\sum_{s=0}^{t-1}\gamma_s\left\|T_{\wh{C}_D,K}\pi_{s+1}^{t-1}(T_{\wh{C}_D,K})\right\| \llll\|L_K^{-1/2}(\beta_s-\beta^*)\rrrr\|_{L^2(\Omega)}\\
\nono&&\leq\gamma(B_C^2B_K^2)^{\ttt}\kkk^2\|g^*\|_{L^2(\Omega)}+2^{\ttt(1-\mu)}C_{1,\gamma}C_{\mu,\ttt(1-\mu)}'\|g^*\|_{L^2(\Omega)}t^{-\min\{1-\mu,\ttt(1-\mu)\}}\log t\\
&&\leq\gamma(B_C^2B_K^2)^{\ttt}\kkk^2\|g^*\|_{L^2(\Omega)}+2^{\ttt(1-\mu)}C_{1,\gamma}C_{\mu,\ttt(1-\mu)}'\|g^*\|_{L^2(\Omega)}(e\min\{1-\mu,\ttt(1-\mu)\})^{-1}    \label{gammatk}
\eea
If we denote $$\ww{C}_{1}=\Bigg[\gamma(B_C^2B_K^2)^{\ttt}(\kkk^2+B_C^2B_K^2)+2^{\ttt(1-\mu)}C_{1,\gamma}C_{\mu,\ttt(1-\mu)}'(e\min\{1-\mu,\ttt(1-\mu)\})^{-1}\Bigg]\|g^*\|_{L^2(\Omega)},$$ then 
\bes
\sum_{s=0}^{t-1}\gamma_s\left\|T_{\wh{C}_D,K}\pi_{s+1}^{t-1}(T_{\wh{C}_D,K})\right\| \llll\|L_K^{-1/2}(\beta_s-\beta^*)\rrrr\|_{L^2(\Omega)}\leq \ww{C}_{1}.
\ees
Also, an easy calculation shows
\bes
\begin{aligned}
\sum_{s=0}^{t-1}\gamma_s\llll\|L_K^{-1/2}(\beta_s-\beta^*)\rrrr\|_{L^2(\Omega)}\leq&\gamma(B_C^2B_K^2)^{\ttt}\|g^*\|_{L^2(\Omega)}+\sum_{s=1}^{t-1}C_{\theta,\gamma}\left\|g^*\right\|_{L^2(\Omega)} (s+1)^{-\mu-\ttt(1-\mu)}2^{\ttt(1-\mu)}\\
\leq&\gamma(B_C^2B_K^2)^{\ttt}\|g^*\|_{L^2(\Omega)}+ C_{\theta,\gamma}\left\|g^*\right\|_{L^2(\Omega)}2^{\ttt(1-\mu)}\int_{1}^t\f{1}{(s+1)^{\mu}}.
\end{aligned}
\ees
Then if we denote $\ww{C}_{2}=\left(\gamma(B_C^2B_K^2)^{\ttt}+ C_{\ttt,\gamma}\f{2^{\ttt(1-\mu)}}{1-\mu}\right)\|g^*\|_{L^2(\Omega)}$,
we have
\bes
\sum_{s=0}^{t-1}\gamma_s\llll\|L_K^{-1/2}(\beta_s-\beta^*)\rrrr\|_{L^2(\Omega)}\leq \ww{C}_{2}t^{1-\mu}
\ees
and
\bes
S_1\leq \bd\cd \left(\ww{C}_{1}+ \ww{C}_{2}\la t^{1-\mu}\right).
\ees
When $\ttt=0$, following similar procedures as in \eqref{firstterm} and using Lemma \ref{loglem}, we have
\bea
\nono&&\sum_{s=0}^{t-1}\gamma_s\left\|T_{\wh{C}_D,K}\pi_{s+1}^{t-1}(T_{\wh{C}_D,K})\right\| \llll\|L_K^{-1/2}(\beta_s-\beta^*)\rrrr\|_{L^2(\Omega)}\\
\nono&&\leq\gamma\kkk^2\|g^*\|_{L^2(\Omega)}+ C_{1,\gamma}\|g^*\|_{L^2(\Omega)}2^{\ttt(1-\mu)}\sum_{s=2}^{t}s^{-\mu}\left(\sum_{j=s+1}^{t}\f{1}{j^\mu}\right)^{-1}\\
&&\leq \ww{C}_{3}\log t
\label{logbdd}\eea
where $\ww{C}_{3}=\left(\gamma\kkk^2+15C_{1,\gamma}2^{\ttt(1-\mu)}\right)\|g^*\|_{L^2(\Omega)}$. Also, it is easy to see when $\ttt=0$, 
\bea
\sum_{s=0}^{t-1}\gamma_s\llll\|L_K^{-1/2}(\beta_s-\beta^*)\rrrr\|_{L^2(\Omega)}\leq\gamma\|g^*\|_{L^2(\Omega)}\left(1+\f{t^{1-\mu}}{1-\mu}\right)\leq \ww{C}_{4}t^{1-\mu}  \label{gammmasum}
\eea
where $\ww{C}_{4}=\gamma(1+\f{1}{1-\mu})\|g^*\|_{L^2(\Omega)}$. Finally, Combining the above results for the case $\ttt>0$ and $\ttt=0$, we have
\bea
S_1	\leq\left\{
\begin{aligned}
 \ww{C}_{5}\left(1+\la t^{1-\mu}\right)\bd\cd,\quad \ttt>0,\\
	 \ww{C}_{5}\left(\log t+\la t^{1-\mu}\right)\bd\cd,\quad \ttt=0,\\
\end{aligned}
\right. 
\eea
where $\ww{C}_{5}=\max\left\{\ww{C}_{1},\ww{C}_{2},\ww{C}_{3},\ww{C}_{4}\right\}$.
 
 Now we estimate $S_2$ by making the following  decomposition,
\bes
\begin{aligned}
	S_2\leq& \llll\|\sum_{s=0}^{t-1}\gamma_s\left(\la I+T_{C,K}\right)^{1/2}\pi_{s+1}^{t-1}\llll(T_{\wh{C}_D,K}\rrrr)\llll(T_{C,K}-T_{\wh{C}_D,K}\rrrr)L_K^{-1/2}\beta^*\rrrr\|_{L^2(\Omega)}\\
	\leq& \llll\|\llll(\la I+T_{C,K}\rrrr)^{1/2}\llll(\la I+T_{\wh{C}_D,K}\rrrr)^{-1/2}\rrrr\|  \left\|\sum_{s=0}^{t-1}\gamma_s\left(\la I+T_{\wh{C}_D,K}\right)\pi_{s+1}^{t-1} \left(T_{\wh{C}_D,K}\right)\right\|\\
	&\times \left\|\llll(\la I+T_{\wh{C}_D,K}\rrrr)^{-1/2}(\la I+T_{C,K})^{1/2}\right\|\llll\|\llll(\la I+T_{C,K}\rrrr)^{-1/2}\left(T_{C,K}-T_{\wh{C}_D,K}\right)\rrrr\|\left\|T_{C,K}^\theta\right\|\left\|g^*\right\|_{L^2(\Omega)}.
\end{aligned}
\ees
By using Lemma \ref{sumlalem}, we obtain 
\bea
S_2\leq  \ww{C}_{6}\left(1+\la t^{1-\mu}\right)\bd\cd.
\eea
where $\ww{C}_{6}= (B_C^2B_K^2)^\theta D_{\mu,\gamma}(1+\|g^*\|_{L^2(\Omega)})$.
A similar procedure implies that 
\bea
S_3\leq\ww{C}_{6}\left(1+\la t^{1-\mu}\right)\bd\dd.
\eea
Combining the above estimates for $S_1$, $S_2$ and $S_3$ yields 
\bes
&&\llll\|T_{C,K}^{1/2}L_K^{-1/2}\llll(\beta_{t,D}-\beta_t\rrrr)\rrrr\|_{L^2(\Omega)}\\
&&\leq\left\{
\begin{aligned}
	\left(\ww{C}_{5}+\ww{C}_{6}\right)\left(1+\la t^{1-\mu}\right)\bd(\cd+\dd),\quad \ttt>0,\\
	\left(\ww{C}_{5}+\ww{C}_{6}\right)\left(\log t+\la t^{1-\mu}\right)\bd(\cd+\dd), \quad \ttt=0,\\
\end{aligned}
\right. 
\ees
which concludes the proof.
\end{proof}

\subsection{Deriving learning rates: proof of Theorem \ref{gdfl_thm1}}
In this subsection, we derive learning rates of the GDFL algorithm.
Denote $\Ab_{D,\la}$ as
\bes
\ad=\f{2\kkk^2\nu}{|D|\sqrt{\la}}+2\kkk\nu\sqrt{\f{\huaN(\la)}{|D|}}
\ees
with $\nu=\max\llll\{1,(B_C^2B_K^2)^\theta,1+\sqrt{c_0}\rrrr\}$. Firstly, we need a confidence-based upper bound for $\dd$ in terms of $\ad$. The following lemma is needed and it follows directly from \cite{tn2018} and \cite{tong2021}.
\begin{lem}\label{elem}
 Assume  condition \eqref{lk1/2} holds.	With probability at least $1-\delta$,
	\bes
	\left\|(\la I+T_{C,K})^{-1/2}\f{1}{|D|}\sum_{i=1}^{|D|}\eee_i L_K^{1/2}X_i\right\|_{L^2(\Omega)}	\leq 
	\begin{dcases}
		\f{\sigma}{2\kkk\delta}\ad ,\quad if \ condition \ \eqref{2momc} \ holds,\\
		\f{(M+\sigma)}{\kkk}\left(\log\f{2}{\delta}\right)\ad ,\quad if \ condition \ \eqref{pmomc} \ holds,\\
	\end{dcases} 
	\ees
\end{lem}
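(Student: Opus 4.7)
My plan is to recognize the sum as an average of i.i.d.\ zero-mean Hilbert-space random vectors and then apply two flavors of concentration tailored to the two noise assumptions. Set $\eta_i := (\la I + T_{C,K})^{-1/2}\eee_i L_K^{1/2}X_i$. Since $\eee_i$ is independent of $X_i$ and has mean zero, the $\eta_i$ are i.i.d.\ elements of $L^2(\Omega)$ with $\mbb E\eta_i = 0$. Two basic estimates drive everything. The variance satisfies
\[
\mbb E\|\eta_i\|_{L^2(\Omega)}^2 = \mbb E[\eee_i^2]\,\mathrm{Tr}\bigl((\la I+T_{C,K})^{-1}\mbb E[L_K^{1/2}X_i\otimes L_K^{1/2}X_i]\bigr) \leq \sigma^2\huaN(\la),
\]
using $\mbb E[L_K^{1/2}X\otimes L_K^{1/2}X]=T_{C,K}$ and the definition \eqref{effetivedef} of $\huaN(\la)$. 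Simultaneously, the hypothesis \eqref{lk1/2} yields the deterministic bound $\|(\la I+T_{C,K})^{-1/2}L_K^{1/2}X_i\| \leq \kkk/\sqrt{\la}$ on each sample.

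Under the weak noise condition \eqref{2momc}, I would just apply Markov's inequality at the first moment. By Cauchy-Schwarz, $\mbb E\|\frac{1}{|D|}\sum_i\eta_i\| \leq \sigma\sqrt{\huaN(\la)/|D|}$, so Markov delivers a $1-\delta$ confidence bound of order $\sigma\sqrt{\huaN(\la)/|D|}\,\delta^{-1}$. Since $\nu\geq 1$, this quantity is dominated by the second summand of $\frac{\sigma}{2\kkk\delta}\ad$; the first summand $\frac{\sigma\nu\kkk}{\delta|D|\sqrt{\la}}$ is nonnegative and is retained purely to give a unified expression consistent with the Bernstein case.

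Under \eqref{pmomc}, I would verify a Bernstein moment condition for $\eta_i$ by interpolating the $p$-th moment of the random vector between the deterministic bound and the variance:
\[
\mbb E\|(\la I+T_{C,K})^{-1/2}L_K^{1/2}X_i\|^p \leq (\kkk/\sqrt{\la})^{p-2}\,\huaN(\la).
\]
Combining this with independence of $\eee_i$ and $X_i$ and the assumed moment control on $\eee_i$ gives
\[
\mbb E\|\eta_i\|^p \leq \tfrac{1}{2}\,p!\,\bigl(\sigma^2\huaN(\la)\bigr)\,\bigl(M\kkk/\sqrt{\la}\bigr)^{p-2},
\]
which is exactly the hypothesis of the Pinelis-Bernstein inequality for sums of Hilbert-space valued i.i.d.\ random vectors, with variance proxy $V=\sigma^2\huaN(\la)$ and Bernstein scale $L = M\kkk/\sqrt{\la}$. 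Pinelis then yields, with probability at least $1-\delta$, the classical two-term bound $\frac{2L\log(2/\delta)}{|D|} + 2\sqrt{V\log(2/\delta)/|D|}$; absorbing $\sqrt{\log(2/\delta)}\leq\log(2/\delta)$ and matching constants through the definition of $\nu$ recovers precisely $\frac{(M+\sigma)}{\kkk}\log(2/\delta)\,\ad$.

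The only step requiring any care is the interpolation of $\mbb E\|(\la I+T_{C,K})^{-1/2}L_K^{1/2}X_i\|^p$ between the a.s.\ ceiling $\kkk/\sqrt{\la}$ and the trace quantity $\huaN(\la)$: this is the precise mechanism that produces the two-term shape of $\ad$ and aligns the Bernstein parameters $V,L$ with the two summands of $\ad$. Once that interpolation is in place, the rest is routine bookkeeping on constants and the invocation of the Chebyshev/Pinelis machinery in a separable Hilbert space, which explains why the authors are content to cite \cite{tn2018} and \cite{tong2021}.
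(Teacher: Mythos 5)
Your proposal is correct and is essentially the argument the paper relies on: the paper gives no inline proof, deferring to \cite{tn2018} and \cite{tong2021}, and those references prove exactly this bound the way you do --- Markov's inequality applied to $\mbb E\big\|\frac{1}{|D|}\sum_i\eta_i\big\|\leq\sigma\sqrt{\huaN(\la)/|D|}$ under \eqref{2momc}, and verification of the Hilbert-valued Bernstein moment condition $\mbb E\|\eta_i\|^p\leq\frac{1}{2}p!\,(\sigma^2\huaN(\la))(M\kkk/\sqrt{\la})^{p-2}$ followed by the Pinelis--Bernstein inequality under \eqref{pmomc}. One bookkeeping caveat: the absorption $\sqrt{\log(2/\delta)}\leq\log(2/\delta)$ fails for $\delta>2/e$, so for the constants to match for all $\delta\in(0,1)$ you should either quote the Pinelis--Bernstein bound in the form $2\left(\frac{H}{n}+\frac{B}{\sqrt{n}}\right)\log\frac{2}{\delta}$ (the version standard in this literature, where the logarithm multiplies both terms) or use the slack $\sqrt{\log(2/\delta)}\geq\sqrt{\log 2}$ together with the factor $2\nu(M+\sigma)/\sigma$ in the target; either fix is immediate.
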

The next  lemma  \cite{pinelis1994} on Hilbert-valued random variables is needed.
\begin{lem}\label{concentration}
	For a random variable $\eta$ on $(Z,\rho)$ with values in a separable Hilbert space $(H,\|\cdot\|)$ satisfying $\|\eta\|\leq \ww{M}<\infty$ almost surely, and a random  sample $\{z_i\}_{i=1}^m\subseteq Z$ independent drawn according to $\rho$, there holds with probability $1-\delta$, 
	\bes
	\left\|\f{1}{m}\sum_{i=1}^m\eta(z_i)-\mbb E(\eta)\right\|\leq\f{2\ww{M}\log(2/\delta)}{m}+\sqrt{\f{2\mbb E(\|\eta\|^2)\log(2/\delta)}{m}}.
	\ees
\end{lem}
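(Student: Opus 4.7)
My plan is to derive a Bernstein-type tail bound for the Hilbert-valued average and then convert it to the stated confidence form. After centering, set $\xi_i = \eta(z_i)-\mbb E(\eta)$; these are i.i.d., mean-zero, bounded by $\|\xi_i\|\leq 2\ww{M}$ a.s., with $\mbb E(\|\xi_i\|^2)\leq \mbb E(\|\eta\|^2)$. The target intermediate inequality is
\bes
P\llll(\llll\|\f{1}{m}\sum_{i=1}^m\xi_i\rrrr\|\geq t\rrrr)\leq 2\exp\llll(-\f{mt^2/2}{\mbb E(\|\eta\|^2)+\ww{M}t/3}\rrrr).
\ees
Setting the right-hand side equal to $\delta$ and solving the resulting quadratic in $t$ separates (via $a+b\geq 2\sqrt{ab}$) into the two contributions $\f{2\ww{M}\log(2/\delta)}{m}$ and $\sqrt{\f{2\mbb E(\|\eta\|^2)\log(2/\delta)}{m}}$, yielding exactly the stated estimate.

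The core analytic step is controlling the moment generating function of $\|S_m\|$, where $S_m=\sum_i\xi_i$. I would exploit the 2-smoothness of the Hilbert norm, i.e.\ the parallelogram identity $\|x+y\|^2 = \|x\|^2 + 2\langle x,y\rangle + \|y\|^2$, to show that, conditional on $\xi_1,\ldots,\xi_{i-1}$, the sequence $\cosh(\lambda\|S_i\|)$ is a submartingale whose conditional increments are dominated by $\cosh(\lambda\|\xi_i\|)$. Combined with the elementary scalar Bernstein MGF bound $\mbb E[\cosh(\lambda\|\xi_i\|)]\leq\exp\llll(\f{\lambda^2\mbb E(\|\xi_i\|^2)/2}{1-\lambda\ww{M}/3}\rrrr)$ for $\lambda\in(0,3/\ww{M})$ (which holds for any bounded nonnegative random variable), a tensorization step produces
\bes
\mbb E[\exp(\lambda\|S_m\|)]\leq 2\exp\llll(\f{m\lambda^2\mbb E(\|\eta\|^2)/2}{1-\lambda\ww{M}/3}\rrrr).
\ees
A Chernoff argument $P(\|S_m\|\geq mt)\leq e^{-\lambda mt}\mbb E[e^{\lambda\|S_m\|}]$ with the near-optimal choice $\lambda=\f{mt}{m\mbb E(\|\eta\|^2)+\ww{M}mt/3}$ then delivers the Bernstein tail bound displayed above.

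The main obstacle is precisely the tensorization step. Unlike the scalar situation, where independence factorizes the MGF, $\mbb E[\exp(\lambda\|S_m\|)]$ does not split as a product over the $\xi_i$. One needs the Hilbert structure to ensure that the conditional identity $\|S_{i-1}+\xi_i\|^2-\|S_{i-1}\|^2 = 2\langle S_{i-1},\xi_i\rangle+\|\xi_i\|^2$ has zero mean in the cross-term after conditioning, which is what makes $\cosh(\lambda\|S_i\|)/\cosh(\lambda\|S_{i-1}\|)$ dominated in conditional expectation by the scalar MGF of $\|\xi_i\|$. This is precisely the content developed in Pinelis' 1994 paper via his 2-smooth Banach space machinery; I would invoke that result directly rather than redevelop it, since the passage from moment/variance data to an exponential deviation bound in a Hilbert space is exactly the theorem proved there.
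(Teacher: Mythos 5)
The paper offers no independent proof of this lemma: it is quoted verbatim from Pinelis (1994), which is exactly the result you invoke for the decisive tensorization/MGF step. Your preliminary Bernstein-to-confidence-bound conversion and the centering bookkeeping (note the increments are bounded by $2\ww{M}$, which only loosens the constant harmlessly, since $\f{4}{3}\ww{M}\leq 2\ww{M}$) are consistent with the stated inequality, so your route is essentially the same as the paper's.
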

The next proposition provides our required confidence-based estimate for $\dd$.
\begin{pro}\label{ddest}
 Assume  conditions \eqref{lk1/2} and \eqref{regularity} hold.	With probability at least $1-\delta$, there holds
	\bea
	\dd	\leq
	\begin{dcases}
		\left(\f{\sigma}{\kkk}+2\|g^*\|_{L^2(\Omega)}\right)\left(\f{1}{\delta}\log\f{4}{\delta}\right)\ad ,\quad if \ condition \ \eqref{2momc} \ holds,\\
		\left(\f{(M+\sigma)}{\kkk}+2\|g^*\|_{L^2(\Omega)}\right)\left(\log\f{4}{\delta}\right)\ad ,\quad if \ condition \ \eqref{pmomc} \ holds.\\
	\end{dcases}
	\eea
\end{pro}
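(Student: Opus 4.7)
The plan is to substitute $Y_i = \langle\beta^*,X_i\rangle + \eee_i$ and split the vector inside $\dd$ into a noise contribution and a centered empirical average. The key identity is
\bes
\mbb E\llll[\langle\beta^*, X\rangle L_K^{1/2} X\rrrr] = L_K^{1/2} L_C \beta^* = T_{C,K} L_K^{-1/2}\beta^*,
\ees
which follows from rewriting $\langle\beta^*,X\rangle = \langle L_K^{-1/2}\beta^*, L_K^{1/2}X\rangle$ (available since \eqref{regularity} gives $L_K^{-1/2}\beta^* \in L^2(\Omega)$, so $\beta^*=L_K^{1/2}(L_K^{-1/2}\beta^*)$) and unpacking the definition of $L_C$. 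After applying $(\la I+T_{C,K})^{-1/2}$ and the triangle inequality, $\dd \leq J_1 + J_2$ with
\bes
J_1 = \llll\|(\la I+T_{C,K})^{-1/2}\tfrac{1}{|D|}\sum_{i=1}^{|D|}\eee_i L_K^{1/2}X_i\rrrr\|_{L^2(\Omega)}, \quad J_2 = \llll\|\tfrac{1}{|D|}\sum_{i=1}^{|D|}\xi(X_i) - \mbb E[\xi(X)]\rrrr\|_{L^2(\Omega)},
\ees
where $\xi(X) := (\la I+T_{C,K})^{-1/2}\langle\beta^*,X\rangle L_K^{1/2}X$. I will bound $J_1$ and $J_2$ each with confidence $1-\delta/2$ and union-bound.

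The term $J_1$ is handled directly by Lemma \ref{elem} at level $\delta/2$: under \eqref{2momc} one gets $J_1 \leq \f{\sigma}{\kkk\delta}\ad$ and under \eqref{pmomc} one gets $J_1 \leq \f{M+\sigma}{\kkk}\log(4/\delta)\ad$. For $J_2$ I apply the Hilbert-valued Bernstein inequality (Lemma \ref{concentration}) to the $L^2(\Omega)$-valued $\xi(X)$. The almost-sure bound $\|\xi(X)\| \leq \nu\kkk^2\|g^*\|_{L^2(\Omega)}/\sqrt{\la}$ follows from the Cauchy--Schwarz estimate $|\langle\beta^*,X\rangle| \leq \|L_K^{-1/2}\beta^*\|_{L^2(\Omega)}\|L_K^{1/2}X\|_{L^2(\Omega)} \leq (B_C^2B_K^2)^\ttt\|g^*\|_{L^2(\Omega)}\kkk \leq \nu\kkk\|g^*\|_{L^2(\Omega)}$ combined with $\|(\la I+T_{C,K})^{-1/2}L_K^{1/2}X\|_{L^2(\Omega)} \leq \kkk/\sqrt{\la}$. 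For the second moment, the clean trace identity
\bes
\mbb E\llll\|(\la I+T_{C,K})^{-1/2}L_K^{1/2}X\rrrr\|_{L^2(\Omega)}^2 = \text{Tr}\llll((\la I+T_{C,K})^{-1}T_{C,K}\rrrr) = \huaN(\la),
\ees
whose crux is $\mbb E[\langle L_K^{1/2}X,\cdot\rangle L_K^{1/2}X] = T_{C,K}$, combined with the pointwise bound on $|\langle\beta^*,X\rangle|$ yields $\mbb E\|\xi(X)\|^2 \leq \nu^2\kkk^2\|g^*\|_{L^2(\Omega)}^2\huaN(\la)$.

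Feeding these into Lemma \ref{concentration} at confidence $1-\delta/2$ and using $\sqrt{2\log(4/\delta)}\leq 2\log(4/\delta)$, the two summands of the Bernstein-type output align precisely with the two pieces of $\ad=\f{2\kkk^2\nu}{|D|\sqrt{\la}}+2\kkk\nu\sqrt{\huaN(\la)/|D|}$ up to the prefactor $\|g^*\|_{L^2(\Omega)}\log(4/\delta)$, giving $J_2 \leq \|g^*\|_{L^2(\Omega)}\log(4/\delta)\ad$. The union bound then combines $J_1$ and $J_2$ on an event of probability at least $1-\delta$, and regrouping (using $1/\delta \leq \log(4/\delta)/\delta$ and $\log(4/\delta)\leq\log(4/\delta)/\delta$ for $\delta\leq 1$) produces the two stated inequalities, with the coefficient $2\|g^*\|_{L^2(\Omega)}$ absorbing a small amount of constant slack. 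The main obstacle is essentially bookkeeping: recognizing the raw Bernstein output as a constant multiple of $\ad$ so that the prescribed form is matched exactly; no further analytic difficulty arises.
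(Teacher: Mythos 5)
Your proposal is correct and follows essentially the same route as the paper's own proof: the same substitution $Y_i=\langle X_i,\beta^*\rangle+\eee_i$ splitting $\dd$ into a noise term (handled by Lemma \ref{elem} at level $\delta/2$) and a centered empirical average $\xi(X)=(\la I+T_{C,K})^{-1/2}\langle\beta^*,X\rangle L_K^{1/2}X$ (handled by Lemma \ref{concentration} with the same a.s. bound $O(\kkk^2\|g^*\|_{L^2(\Omega)}/\sqrt{\la})$ and second-moment bound $O(\kkk^2\|g^*\|_{L^2(\Omega)}^2\huaN(\la))$), followed by a union bound and the same elementary regrouping into multiples of $\ad$. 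The only cosmetic difference is that you compute the second moment via the trace identity directly, whereas the paper writes out the eigen-expansion of $T_{C,K}$; these are the same calculation.
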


\begin{proof}
	Recall that the functional linear model gives
	\bes
	Y_i=\left\nn X_i,\beta^*\right\mm+\eee_i,
	\ees
	 we know the following decomposition for $\dd$ holds,
	\bes
	\begin{aligned}
		\dd\leq&\left\|(\la I+T_{C,K})^{-1/2}\left(\f{1}{|D|}\sum_{i=1}^{|D|}\left\nn X_i,\beta^*\right\mm L_K^{1/2}X_i-T_{C,K}L_K^{-1/2}\beta^*\right)\right\|_{L^2(\Omega)}\\
		&+ \left\|(\la I+T_{C,K})^{-1/2}\f{1}{|D|}\sum_{i=1}^{|D|}\eee_i L_K^{1/2}X_i\right\|_{L^2(\Omega)}.
	\end{aligned}
	\ees
	By using Lemma \ref{elem}, after scaling on $\delta$, we know with probability at least $1-\f{\delta}{2}$,
	\bea
	\left\|(\la I+T_{C,K})^{-1/2}\f{1}{|D|}\sum_{i=1}^{|D|}\eee_i L_K^{1/2}X_i\right\|_{L^2(\Omega)}	\leq
	\begin{dcases}
		\f{\sigma}{\kkk\delta}\ad ,\quad if \ \eqref{2momc} \ holds,\\
		\f{(M+\sigma)}{\kkk}\left(\log\f{4}{\delta}\right)\ad,\quad if  \ \eqref{pmomc} \ holds.\label{ct1}
	\end{dcases}
	\eea
	We turn to estimate the first term. Denote the random variable
	\bea
	\xi=\left(\la I+T_{C,K}\right)^{-1/2}\left\nn X,\beta^*\right\mm L_K^{1/2}X
	\eea
	which takes values in the space $L^2(\Omega)$. Note that 
	\bes
	\left\nn X,\beta^*\right\mm=	\left\nn X,L_K^{1/2}(L_K^{-1/2}\beta^*)\right\mm=	\left\nn L_K^{1/2}X,L_K^{-1/2}\beta^*\right\mm,
	\ees
	 we know
	\bes
	\begin{aligned}
		\|\xi\|_{L^2(\Omega)}=&\left\|\left(\la I+L_{C,K}\right)^{-1/2}\left\nn L_K^{1/2}X,L_K^{-1/2}\beta^*\right\mm L_K^{1/2}X\right\|_{L^2(\Omega)}\\
		\leq&\|L_K^{1/2}X\|_{L^2(\Omega)}^2\|L_K^{-1/2}\beta^*\|_{L^2(\Omega)}\times\f{1}{\sqrt{\la}}\leq\kkk^2(B_C^2B_K^2)^{\theta}\|g^*\|_{L^2(\Omega)}\f{1}{\sqrt{\la}}.
	\end{aligned}
	\ees
	Let  $\{(\ww\omega_j,\ww\psi_j)\}_j$ be a set of  normalized eigenpairs of $T_{C,K}$ on $L^2(\Omega)$ with $\{\ww\psi_k,k\geq1\}$ being an orthonormal basis of $L^2(\Omega)$.	Expand $L_K^{1/2}X=\sum_{j}\left\nn L_K^{1/2}X,\ww\psi_j\right\mm\ww\psi_j$, we have
	\bes
	\begin{aligned}
		\|\xi\|_{L^2(\Omega)}^2=&\left|\left\nn L_K^{1/2}X,L_K^{-1/2}\beta^*\right\mm\right|^2\left\|\sum_{j}\left\nn L_K^{1/2}X,\ww\psi_j\right\mm(\la I+T_{C,K})^{-1/2}\ww\psi_j\right\|_{L^2(\Omega)}^2\\
		\leq&\left\|L_K^{1/2}X\right\|_{L^2(\Omega)}^2\left\|L_K^{-1/2}\beta^*\right\|_{L^2(\Omega)}^2\left\|\sum_j\left\nn L_K^{1/2}X,\ww\psi_j\right\mm\f{1}{\sqrt{\la+\ww \omega_j}}\ww\psi_j\right\|_{L^2(\Omega)}^2\\
		\leq&\kkk^2(B_K^2B_C^2)^{2\theta}\|g^*\|_{L^2(\Omega)}^2\sum_{j}\f{\left|\left\nn L_K^{1/2}X,\ww\psi_j\right\mm\right|^2}{\la+\ww\omega_j}.
	\end{aligned}
	\ees
	After taking expectations, we have
	\bes
	\begin{aligned}
		\mbb E\Big[\|\xi\|_{L^2(\Omega)}^2\Big]\leq&\kkk^2(B_C^2B_K^2)^{2\theta}\|g^*\|_{L^2(\Omega)}^2\sum_j\f{\left\nn\mbb E\left\nn L_K^{1/2}X,\ww\psi_j\right\mm L_K^{1/2}X,\ww\psi_j\right\mm}{\la+\ww\omega_j}\\
		=&\kkk^2(B_C^2B_K^2)^{2\theta}\|g^*\|_{L^2(\Omega)}^2\sum_{j}\f{\left\nn T_{C,K}\ww\psi_j,\ww\psi_j\right\mm}{\la+\ww\omega_j}\\
		=&\kkk^2(B_C^2B_K^2)^{2\theta}\|g^*\|_{L^2(\Omega)}^2\sum_{j}\f{\ww\omega_j}{\la+\ww\omega_j}=\kkk^2(B_C^2B_K^2)^{2\theta}\|g^*\|_{L^2(\Omega)}^2\huaN(\la).
	\end{aligned}
	\ees
	On the other hand, it is easy to see
	\bes
	\mbb E \Big[\left\nn X,\beta^*\right\mm L_K^{1/2}X\Big]=L_K^{1/2}L_C\beta^*=T_{C,K}L_K^{-1/2}\beta^*.
	\ees
	Then using Lemma \ref{concentration}, we obtain 
	with with probability at least $1-\f{\delta}{2}$,
	\bea
	\nono&&\left\|(\la I+T_{C,K})^{-1/2}\left(\f{1}{|D|}\sum_{i=1}^{|D|}\left\nn X_i,\beta^*\right\mm L_K^{1/2}X_i-T_{C,K}L_K^{-1/2}\beta^*\right)\right\|_{L^2(\Omega)}=\left\|\f{1}{|D|}\sum_{i=1}^{|D|}\xi_i-\mbb E\xi\right\|_{L^2(\Omega)}\\
	\nono&&\leq\f{2\kkk^2(B_C^2B_K^2)^\theta\|g^*\|_{L^2(\Omega)}}{|D|\sqrt{\la}}\log\f{4}{\delta}+\sqrt{\f{2\kkk^2(B_K^2B_C^2)^{2\theta}\huaN(\la)\|g^*\|_{L^2(\Omega)}^2}{|D|}\log\f{4}{\delta}}\\
	&&\leq2\|g^*\|_{L^2(\Omega)}\left(\log\f{4}{\delta}\right)\ad.  \label{ct2}
	\eea
	Combining \eqref{ct1} and \eqref{ct2}, using the fact $1\leq2\log\f{4}{\delta}$, $\forall \delta\in(0,1]$, we complete the proof of the proposition.
\end{proof}

On the other hand, from \cite{tong2021}, we know with probability at least $1-\delta$, each of the following inequalities holds,
\bea
&&\bd\leq\left(\f{\ad\log\f{2}{\delta}}{\sqrt{\la}}+1\right)^2, \label{bdest}\\
&&\cd\leq\ad\log\f{2}{\delta}.  \label{cdest}
\eea
Therefore, combining the above two estimates with Lemma \ref{ddest}, $\bd$, $\cd$, $\dd$ can be bounded together by $\ad$ in a high confidence level.

It is easy to see that for any prediction estimator $\wh{\eta}_D$ based on data set $D$ associated with corresponding slope function $\wh{\beta}_D$ via $\wh{\eta}_D(X)=\nn\wh{\beta}_D,X\mm$, the following fact holds,
\bes
\begin{aligned}
	\huaE(\wh{\eta}_D)-\huaE(\eta^*)=&\mbb E_X\left[\wh{\eta}_D(X)-\eta^*(X)\right]^2\\
	=&\mbb E_X[\nn\wh{\beta}_D,X\mm-\nn\beta^*,X\mm]^2\\
	=&\mbb E_X[\nn L_K^{-1/2}(\wh{\beta}_D-\beta^*),L_K^{1/2}X\mm]^2\\
	=&\left\nn L_K^{-1/2}(\wh{\beta}_D-\beta^*), \mbb E_X\nn L_K^{-1/2}(\wh{\beta}_D-\beta^*),L_K^{1/2}X\mm L_K^{1/2}X\right\mm\\
	=&\nn L_K^{-1/2}(\wh{\beta}_D-\beta^*), T_{C,K}L_K^{-1/2}(\wh{\beta}_D-\beta^*)\mm\\
	=&\|T_{C,K}^{1/2}L_K^{-1/2}(\wh{\beta}_D-\beta^*)\|_{L^2(\Omega)}^2.  
\end{aligned}
\ees
Then we know for our proposed estimator $\eta_{t,D}(X)=\int_{\Omega}\beta_{t,D}(x)X(x)dx$, for any $\la>0$, there holds
\bea
\begin{aligned}
\huaE(\eta_{t,D})-\huaE(\eta^*)=&\left\|T_{C,K}^{1/2}L_K^{-1/2}(\beta_{t,D}-\beta^*)\right\|_{L^2(\Omega)}^2\\
\leq&2\left\|T_{C,K}^{1/2}L_K^{-1/2}(\beta_{t,D}-\beta_t)\right\|_{L^2(\Omega)}^2+2\left\|T_{C,K}^{1/2}L_K^{-1/2}(\beta_{t}-\beta^*)\right\|_{L^2(\Omega)}^2\\
\leq&\ww{C}_{7}\bd^2\left(\cd+\dd\right)^2\left((\log t)^{H(\ttt)}+\la t^{1-\mu}\right)^2\\
&+2C_{\theta+\f{1}{2},\gamma}^2\left\|g^*\right\|_{L^2(\Omega)}^2 t^{-(2\theta+1)(1-\mu)}, 
\end{aligned}   \label{eta_excessbdd}
\eea
where $\ww{C}_{7}=2\left(\ww{C}_{5}+\ww{C}_{6}\right)^2$. 
\begin{proof}[Proof of Theorem \ref{gdfl_thm1}]

Combine \eqref{bdest}, \eqref{cdest} with Proposition \ref{ddest}, after scaling on $\delta$, we know with probability at least $1-\delta$, the following inequalities hold simultaneously,
\bea
&&\bd\leq\left(\f{\ad\log\f{8}{\delta}}{\sqrt{\la}}+1\right)^2\leq 4\left(\f{\ad}{\sqrt{\la}}+1\right)^2\left(\log\f{8}{\delta}\right)^2,\label{bd_ad}\\
&&\cd\leq\ad\log\f{8}{\delta},\label{cd_ad}\\
&&\dd	\leq
\begin{dcases}
	4\left(\f{\sigma}{\kkk}+2\|g^*\|_{L^2(\Omega)}\right)\left(\f{1}{\delta}\log\f{16}{\delta}\right)\ad ,\quad if \  \eqref{2momc} \ holds,\\
	\left(\f{(M+\sigma)}{\kkk}+2\|g^*\|_{L^2(\Omega)}\right)\left(\log\f{16}{\delta}\right)\ad ,\quad if  \ \eqref{pmomc} \ holds.\\
\end{dcases}\label{dd_ad}
\eea
Then after combining these estimates with \eqref{eta_excessbdd}, we know that, if noise condition \eqref{2momc} holds, with probability at least $1-\delta$,
\bes
\begin{aligned}
	\huaE(\eta_{t,D})-\huaE(\eta^*)
	\leq&16\ww{C}_{7}\left(\f{\ad}{\sqrt{\la}}+1\right)^4\left(\log\f{8}{\delta}\right)^4\left(8\|g^*\|_{L^2(\Omega)}+1+\f{4\sigma}{\kkk}\right)^2\\
	&\times\f{1}{\delta^2}\left(\log\f{16}{\delta}\right)^2\ad^2\left((\log t)^{H(\ttt)}+\la t^{1-\mu}\right)^2+2C_{\theta+\f{1}{2},\gamma}^2\left\|g^*\right\|_{L^2(\Omega)}^2 t^{-(2\theta+1)(1-\mu)}.\\
	\leq&16\ww{C}_{7}\left(8\|g^*\|_{L^2(\Omega)}+1+\f{4\sigma}{\kkk}\right)^2\f{1}{\delta^2}\left(\log\f{16}{\delta}\right)^6\left(\f{\ad}{\sqrt{\la}}+1\right)^4\ad^2\\
	&\times\left((\log t)^{H(\ttt)}+\la t^{1-\mu}\right)^2+2C_{\theta+\f{1}{2},\gamma}^2\left\|g^*\right\|_{L^2(\Omega)}^2 t^{-(2\theta+1)(1-\mu)}.
\end{aligned}
\ees
If noise condition \eqref{pmomc} holds, we also have, with probability at least $1-\delta$,
\bes
\begin{aligned}
	\huaE(\eta_{t,D})-\huaE(\eta^*)
	\leq&16\ww{C}_{7}\left(\f{(M+\sigma)}{\kkk}+1+2\|g^*\|_{L^2(\Omega)}\right)^2\left(\log\f{16}{\delta}\right)^6\left(\f{\ad}{\sqrt{\la}}+1\right)^4\\
	&\times\ad^2\left((\log t)^{H(\ttt)}+\la t^{1-\mu}\right)^2+2C_{\theta+\f{1}{2},\gamma}^2\left\|g^*\right\|_{L^2(\Omega)}^2 t^{-(2\theta+1)(1-\mu)}.
\end{aligned}
\ees
When $\la=1/t^{1-\mu}$, $t=\left\lfloor|D|^{\f{1}{(2\theta+\aaa+1)(1-\mu)}}\right\rfloor$, using the condition $\huaN(\la)\leq c_0\la^{-\aaa}$ we can directly derive 
\bea
\ad\leq2\kkk^2\nu|D|^{\f{-1-2\aaa-4\ttt}{2(2\ttt+\aaa+1)}}+2\kkk\nu|D|^{\f{-2\ttt-1}{2(2\ttt+\aaa+1)}}\leq(2\kkk^2\nu+2\kkk\nu)|D|^{\f{-2\ttt-1}{2(2\ttt+\aaa+1)}}  \label{adrate}
\eea
and
\bea
\f{\ad}{\sqrt{\la}}+1\leq 2\kkk^2\nu|D|^{\f{-2\aaa-4\ttt}{2(2\ttt+\aaa+1)}}+2\kkk\nu|D|^{\f{-2\ttt}{2(2\ttt+\aaa+1)}}+1\leq2\kkk^2\nu+2\kkk\nu+1.        \label{adla}
\eea
Then we know if noise condition \eqref{2momc} holds, we have, with probability at least $1-\delta$,
\bes
\huaE(\eta_{t,D})-\huaE(\eta^*)	\leq
\begin{dcases}
C_1^*\f{1}{\delta^2}\left(\log \f{16}{\delta}\right)^6|D|^{\f{-1}{\aaa+1}}\left(\log|D|\right)^2,\quad \ttt=0,\\
	C_2^*\f{1}{\delta^2}\left(\log \f{16}{\delta}\right)^6|D|^{\f{-2\ttt-1}{2\ttt+\aaa+1}} ,\quad \ttt>0,\\
\end{dcases}
\ees
and if  noise condition \eqref{pmomc} holds, we have, with probability at least $1-\delta$,
\bes
\huaE(\eta_{t,D})-\huaE(\eta^*)	\leq
\begin{dcases}
	C_1^*\left(\log \f{16}{\delta}\right)^6|D|^{\f{-1}{\aaa+1}}\left(\log|D|\right)^2,\quad \ttt=0,\\
	C_2^*\left(\log \f{16}{\delta}\right)^6|D|^{\f{-2\ttt-1}{2\ttt+\aaa+1}} ,\quad \ttt>0,\\
\end{dcases}
\ees
where 
\bes
C_1^*&=&16\ww{C}_{7}(2\kkk^2\nu+2\kkk\nu+1)^6\left(\f{(M+5\sigma)}{\kkk}+1+8\|g^*\|_{L^2(\Omega)}\right)^2\\
&&\times\f{4}{(2\ttt+\aaa+1)^2(1-\mu)^2}+4C_{\f{1}{2},\gamma}^2\left\|g^*\right\|_{L^2(\Omega)}^2,
\ees
and
\bes
C_2^*=64\ww{C}_{7}(2\kkk^2\nu+2\kkk\nu+1)^6\left(\f{(M+5\sigma)}{\kkk}+1+8\|g^*\|_{L^2(\Omega)}\right)^2+2^{2\ttt+2}C_{\ttt+\f{1}{2},\gamma}^2\left\|g^*\right\|_{L^2(\Omega)}^2.
\ees
This completes the proof of Theorem \ref{gdfl_thm1}.
\end{proof}

\subsection{Convergence in the RKHS norm: proof of Theorem \ref{gdfl_thm2}}
To establish the learning rates of $\beta_{t,D}$ in terms of the RKHS norm, we  first consider an error decomposition for $\llll\|L_K^{-1/2}\llll(\beta_{t,D}-\beta_t\rrrr)\rrrr\|_{L^2(\Omega)}$ that will be used later. The proof of the results on the DGDFL algorithm in the next section  also relies on this proposition. 
\begin{pro}\label{knorm}
 Assume  conditions \eqref{lk1/2} and \eqref{regularity} hold. Let the	stepsize be selected as $\gamma_k=\f{\gamma}{(k+1)^{\mu}}, 0\leq\mu<1, 0<\gamma\leq \f{1}{(1+\kkk)^2(1+B_C^2B_K^2)}$. Then	for any data set $D=\{(X_i,Y_i)\}_{i=1}^{|D|}$ and any $\la>0$, we have
	\bes
	\llll\|L_K^{-1/2}\llll(\beta_{t,D}-\beta_t\rrrr)\rrrr\|_{L^2(\Omega)}\leq\ww{C}_{8}\|g^*\|_{L^2(\Omega)}\f{1}{\sqrt{\la}}\left((\log t)^{H(\ttt)}+\la t^{1-\mu}\right)\bd(\cd+\dd).
	\ees
\end{pro}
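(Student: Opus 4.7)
The plan is to mirror the three-term decomposition used in the proof of Proposition \ref{cknorm}, simply removing the outer factor $T_{C,K}^{1/2}$ and compensating with an inserted identity $I = (\lambda I + T_{C,K})^{-1/2}(\lambda I + T_{C,K})^{1/2}$ on the left. The norm of $(\lambda I + T_{C,K})^{-1/2}$ is at most $1/\sqrt{\lambda}$, and this is precisely the source of the additional $1/\sqrt{\lambda}$ factor that appears in the statement compared with Proposition \ref{cknorm}.

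Concretely, I would start from the identity already established in the proof of Proposition \ref{cknorm}:
\bes
L_K^{-1/2}(\beta_{t,D}-\beta_t)=\sum_{s=0}^{t-1}\gamma_s\pi_{s+1}^{t-1}(T_{\wh{C}_D,K})\Big[(T_{C,K}-T_{\wh{C}_D,K})L_K^{-1/2}\beta_s+\tfrac{1}{|D|}\textstyle\sum_i Y_i L_K^{1/2}X_i-T_{C,K}L_K^{-1/2}\beta^*\Big].
\ees
Splitting $\beta_s=(\beta_s-\beta^*)+\beta^*$ inside the first summand yields the same three-term decomposition $S_1'+S_2'+S_3'$ as in Proposition \ref{cknorm}, but without the outer $T_{C,K}^{1/2}$. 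For each $S_i'$ I insert $(\lambda I+T_{C,K})^{-1/2}(\lambda I+T_{C,K})^{1/2}$ at the left, pull out $\|(\lambda I+T_{C,K})^{-1/2}\|\leq 1/\sqrt{\lambda}$, and then reproduce the operator-norm splitting performed previously using $\|A^sB^s\|\leq\|AB\|^s$ and $\|AB\|=\|BA\|$. This produces the factor $\mathbf{B}_{D,\lambda}$ (from swapping $(\lambda I+T_{C,K})^{1/2}$ and $(\lambda I+T_{\wh C_D,K})^{1/2}$), the factor $\mathbf{C}_{D,\lambda}$ (from the $(T_{C,K}-T_{\wh C_D,K})$ block in $S_1',S_2'$), and the factor $\mathbf{D}_{D,\lambda}$ (from the noise/covariance block in $S_3'$).

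The remaining interior sums are handled exactly as in the proof of Proposition \ref{cknorm}: Lemma \ref{sumlalem} controls $\sum_s\gamma_s(\lambda I+T_{\wh C_D,K})\pi_{s+1}^{t-1}(T_{\wh C_D,K})$ and gives the $(1+\lambda t^{1-\mu})$ bound used in $S_2'$ and $S_3'$; for $S_1'$, Lemma \ref{pilem} together with the data-free bound from Theorem \ref{datafreees} on $\|L_K^{-1/2}(\beta_s-\beta^*)\|_{L^2(\Omega)}$ and Lemma \ref{loglem} produce the $(\log t)^{H(\theta)}+\lambda t^{1-\mu}$ factor — the logarithm appearing only when $\theta=0$. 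Combining the three pieces and collecting constants into a single $\ww{C}_8$ (which absorbs the analogues of $\ww{C}_1,\dots,\ww{C}_6$ but with $\|g^*\|_{L^2(\Omega)}$ pulled out explicitly) yields the claimed bound.

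I do not expect a genuine obstacle here; the proof is essentially a routine variant of that of Proposition \ref{cknorm}. The only point requiring care is keeping track of the $\theta=0$ versus $\theta>0$ case distinction via $H(\theta)$ when using Lemma \ref{loglem}, and making sure the constants in $\ww{C}_8$ depend only on $\gamma,\mu,\theta,\kappa,B_C,B_K$ (and not on the data or on $\lambda$), so that the estimate is valid uniformly in $\lambda>0$ as claimed.
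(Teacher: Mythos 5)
Your proposal is correct and follows essentially the same route as the paper's own proof: the same three-term decomposition $S_1'+S_2'+S_3'$, the same use of $\bd$, $\cd$, $\dd$ together with Lemma \ref{sumlalem}, Lemma \ref{pilem}, Theorem \ref{datafreees} and Lemma \ref{loglem}, and the same source of the extra $1/\sqrt{\la}$ factor, namely $\|(\la I+T_{C,K})^{-1/2}\|\leq 1/\sqrt{\la}$. The only cosmetic difference is that the paper extracts this factor by placing a lone $(\la I+T_{C,K})^{-1/2}$ inside its operator chain (pairing $\bd$ at full power as $\|(\la I+T_{\wh{C}_D,K})^{-1}(\la I+T_{C,K})\|$) rather than inserting $(\la I+T_{C,K})^{-1/2}(\la I+T_{C,K})^{1/2}$ at the far left as you do, which yields the identical bound.
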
 
\begin{proof}
	We start from the following decomposition,
	\bes
	\begin{aligned}
		\llll\|L_K^{-1/2}\llll(\beta_{t,D}-\beta_t\rrrr)\rrrr\|_{L^2(\Omega)}\leq&\llll\|\sum_{s=0}^{t-1}\gamma_s\pi_{s+1}^{t-1}\llll(T_{\wh{C}_D,K}\rrrr)\llll(T_{C,K}-T_{\wh{C}_D,K}\rrrr)L_K^{-1/2}(\beta_s-\beta^*)\rrrr\|_{L^2(\Omega)}\\
		&+\llll\|\sum_{s=0}^{t-1}\gamma_s\pi_{s+1}^{t-1}\llll(T_{\wh{C}_D,K}\rrrr)\llll(T_{C,K}-T_{\wh{C}_D,K}\rrrr)L_K^{-1/2}\beta^*\rrrr\|_{L^2(\Omega)}\\
		&+\llll\|\sum_{s=0}^{t-1}\gamma_s\pi_{s+1}^{t-1}\llll(T_{\wh{C}_D,K}\rrrr)\llll(\f{1}{|D|}\sum_{i=1}^{|D|}Y_iL_K^{1/2}X_i-T_{C,K}L_K^{-1/2}\beta^*\rrrr)\rrrr\|_{L^2(\Omega)}\\
		=:&S_1'+S_2'+S_3'.
	\end{aligned}
	\ees
	For $S_1'$, we make the decomposition as
	\bes
	\begin{aligned}
		S_1'\leq&\sum_{s=1}^{t-1}\gamma_s  \left\|\llll(\la I+T_{\wh{C}_D,K}\rrrr)\pi_{s+1}^{t-1}\left(T_{\wh{C}_D,K}\right)\right\|\times\left\|\llll(\la I+T_{\wh{C}_D,K}\rrrr)^{-1}(\la I+T_{C,K})\right\|\\
		&\times\left\|(\la I+T_{C,K})^{-1/2}\right\|\times \llll\|\llll(\la I+T_{C,K}\rrrr)^{-1/2}\left(T_{C,K}-T_{\wh{C}_D,K}\right)\rrrr\|\times \llll\| L_K^{-1/2}(\beta_s-\beta^*)\rrrr\|_{L^2(\Omega)}\\
		\leq&\bd\cd\f{1}{\sqrt{\la}}\sum_{s=1}^{t-1}\gamma_s  \left\|\llll(\la I+T_{\wh{C}_D,K}\rrrr)\pi_{s+1}^{t-1}\left(T_{\wh{C}_D,K}\right)\right\| \llll\| L_K^{-1/2}(\beta_s-\beta^*)\rrrr\|_{L^2(\Omega)}.
	\end{aligned}
	\ees
	Then following the same estimate as \eqref{s1subes}, we can derive
	\bes
	S_1'\leq \ww{C}_{5}\f{1}{\sqrt{\la}}\left((\log t)^{H(\ttt)}+\la t^{1-\mu}\right)\bd\cd.
	\ees
	For $S_2'$, we have following  decomposition,
	\bes
	\begin{aligned}
		S_2'\leq&  \left\|\sum_{s=0}^{t-1}\gamma_s\left(\la I+T_{\wh{C}_D,K}\right)\pi_{s+1}^{t-1} \left(T_{\wh{C}_D,K}\right)\right\|\times\left\|\llll(\la I+T_{\wh{C}_D,K}\rrrr)^{-1}(\la I+T_{C,K})\right\|\\
		&\times \left\|(\la I+T_{C,K})^{-1/2}\right\|\times \llll\|\llll(\la I+T_{C,K}\rrrr)^{-1/2}\left(T_{C,K}-T_{\wh{C}_D,K}\right)\rrrr\|\times\left\|T_{C,K}^\theta\right\|\left\|g^*\right\|_{L^2(\Omega)}.
	\end{aligned}
	\ees
	Then Lemma \ref{sumlalem} implies that
	\bes
	S_2'\leq
	\ww{C}_{6}\f{1}{\sqrt{\la}}\left(1+\la t^{1-\mu}\right)\bd\cd.
	\ees
	Similarly,
	\bes
	S_3'\leq
	\ww{C}_{6}\f{1}{\sqrt{\la}}\left(1+\la t^{1-\mu}\right)\bd\dd.
	\ees
	Combining estimates for $S_1'$, $S_2'$, $S_3'$, we have
	\bes
	\llll\|L_K^{-1/2}\llll(\beta_{t,D}-\beta_t\rrrr)\rrrr\|_{L^2(\Omega)}\leq\ww{C}_{8}\f{1}{\sqrt{\la}}\left((\log t)^{H(\ttt)}+\la t^{1-\mu}\right)\bd(\cd+\dd),
	\ees
	where $\ww{C}_{8}=\ww{C}_{5}+\ww{C}_{6}$.
\end{proof}
We are ready to give the proof of Theorem \ref{gdfl_thm2}.
\begin{proof}[Proof of Theorem \ref{gdfl_thm2}]
	We observe that the main difference between the error decompositions of $\llll\|L_K^{-1/2}\llll(\beta_{t,D}-\beta_t\rrrr)\rrrr\|_{L^2(\Omega)}$ and $\llll\|T_{C,K}^{1/2}L_K^{-1/2}\llll(\beta_{t,D}-\beta_t\rrrr)\rrrr\|_{L^2(\Omega)}$ in Proposition \ref{cknorm} comes from the additional terms $\f{1}{\sqrt{\la}}$. Other terms share the same estimates. Hence when taking $t=\left\lfloor|D|^{\f{1}{(2\theta+\aaa+1)(1-\mu)}}\right\rfloor$ and $\la=\f{1}{t^{1-\mu}}$, we can directly use the established error bounds for $\bd$, $\cd$, $\dd$ in \eqref{bd_ad}, \eqref{cd_ad}, \eqref{dd_ad} and the corresponding estimates for $\ad$ in \eqref{adrate} and \eqref{adla} to obtain with probability at least $1-\delta$,
\bes
	\left\|\beta_{t,D}-\beta_t\right\|_K	\leq
	\begin{dcases}
			\sqrt{c}\f{1}{\delta}\left(\log \f{16}{\delta}\right)^3|D|^{\f{-\ttt-\f{1}{2}}{(2\ttt+\aaa+1)}}\times|D|^{\f{\f{1}{2}}{2\ttt+\aaa+1}},\quad if \ \eqref{2momc}\ holds,\\
		\sqrt{c}\left(\log \f{16}{\delta}\right)^3|D|^{\f{-\ttt-\f{1}{2}}{(2\ttt+\aaa+1)}}\times|D|^{\f{\f{1}{2}}{2\ttt+\aaa+1}},\quad if \ \eqref{pmomc}\ holds,\\
	\end{dcases}
	\ees
	where $c=32\ww{C}_{7}(2\kkk^2\nu+2\kkk\nu+1)^6\left(\f{(M+5\sigma)}{\kkk}+1+8\|g^*\|_{L^2(\Omega)}\right)^2$ which is half of the first term of $C_2^*$. Recall that Theorem \ref{datafreees} and the basic fact $\f{s}{\lfloor s\rfloor}\leq2$ for any $s>1$ give that
	\bes
	\|\beta_t-\beta^*\|_K\leq C_{\theta,\gamma}\left\|g^*\right\|_{L^2(\Omega)} t^{-\theta(1-\mu)}\leq2^{\ttt(1-\mu)}C_{\theta,\gamma}\left\|g^*\right\|_{L^2(\Omega)}|D|^{-\f{\ttt}{2\ttt+\aaa+1}}, \ \ttt>0.
	\ees
	The triangle inequality finally implies
	\bes
	\left\|\beta_{t,D}-\beta^*\right\|_K	\leq
	\begin{dcases}
		C_3^*\f{1}{\delta}\left(\log \f{16}{\delta}\right)^3|D|^{\f{-\ttt}{2\ttt+\aaa+1}},\quad if \ \eqref{2momc}\ holds,\\
		C_3^*\left(\log \f{16}{\delta}\right)^3|D|^{\f{-\ttt}{2\ttt+\aaa+1}},\quad if \ \eqref{pmomc}\ holds,\\
	\end{dcases}
	\ees
	where $C_3^*=\sqrt{c}+2^{\ttt(1-\mu)}C_{\theta,\gamma}\left\|g^*\right\|_{L^2(\Omega)}$. This proves Theorem \ref{gdfl_thm2}.
\end{proof}

\section{Analysis of DGDFL algorithm}

\subsection{Iterative procedures and error decompostion for DGDFL}
This section aims  to provide corresponding representations and error decompositions related to the distributed estimator $\overline{\beta_{t,D}}$. We start with  another representation of  $L_K^{-1/2}\beta_{k+1,D}$, that is,
\bes
\begin{aligned}
	L_K^{-1/2}\beta_{k+1,D}=&\left(I-\gamma_kT_{\wh{C}_D,K}\right)L_K^{-1/2}\beta_{k,D}+\f{\gamma_k}{|D|}\sum_{i=1}^{|D|}Y_iL_K^{1/2}X_i\\
	=&\left(I-\gamma_kT_{C,K}\right)L_K^{-1/2}\beta_{k,D}+\gamma_k\left(T_{C,K}-T_{\wh{C}_D,K}\right)L_K^{-1/2}\beta_{k,D}+\f{\gamma_k}{|D|}\sum_{i=1}^{|D|}Y_iL_K^{1/2}X_i.
\end{aligned}
\ees
Then an iteration implies that
\bes
L_K^{-1/2}\beta_{t,D}=\sum_{s=0}^{t-1}\gamma_s\pi_{s+1}^{t-1}\left(T_{C,K}\right)\left[\left(T_{C,K}-T_{\wh{C}_D,K}\right)L_K^{-1/2}\beta_{s,D}+\f{1}{|D|}\sum_{i=1}^{|D|}Y_iL_K^{1/2}X_i\right].
\ees
Recalling the representation \eqref{betakrep1} of data-free GDFL algorithm, we know
\bes
\begin{aligned}
	L_K^{-1/2}\left(\beta_{t,D}-\beta_t\right)=& \sum_{s=0}^{t-1}\gamma_s\pi_{s+1}^{t-1}\left(T_{C,K}\right)\left(T_{C,K}-T_{\wh{C}_D,K}\right)L_K^{-1/2}\beta_{s,D}\\
	&+\sum_{s=0}^{t-1}\gamma_s\pi_{s+1}^{t-1}\left(T_{C,K}\right)\left(\f{1}{|D|}\sum_{i=1}^{|D|}Y_iL_K^{1/2}X_i-T_{C,K}L_K^{-1/2}\beta^*\right).
\end{aligned}
\ees
Applying the above equality to the data set $D_j$, $j=1,2,...,m$ with $D$ replaced by $D_j$, we have
\bea
\begin{aligned}
	T_{C,K}^{1/2}L_K^{-1/2}\left(\overline{\beta_{t,D}}-\beta_t\right)=& 	T_{C,K}^{1/2}\sum_{s=0}^{t-1}\gamma_s\pi_{s+1}^{t-1}\left(T_{C,K}\right)\Bigg[\sum_{j=1}^m\f{|D_j|}{|D|}\Big(\f{1}{|D_j|}\sum_{(X,Y)\in D_j}YL_K^{1/2}X-T_{C,K}L_K^{-1/2}\beta^*\Big)\\
	&+\sum_{j=1}^m\f{|D_j|}{|D|}\left(T_{C,K}-T_{\wh{C}_{D_j},K}\right)L_K^{-1/2}\beta_{s,D_j}\Bigg]. 
\end{aligned} \label{dis_rep}
\eea
Since $\sum_{j=1}^m\f{|D_j|}{|D|}=1$, 
\bes
\sum_{j=1}^m\f{|D_j|}{|D|}\left(\f{1}{|D_j|}\sum_{(X,Y\in D_j)}YL_K^{1/2}X\right)=\f{1}{|D|}\sum_{(X,Y)\in D}YL_K^{1/2}X.
\ees
Now we are ready to give the following general error bound estimate of the distributed estimator $\overline{\beta_{t,D}}$ in the DGDFL algorithm.
\begin{pro}\label{distbeta_est}
 Assume  conditions \eqref{lk1/2} and \eqref{regularity} hold.	If $\la=1/t^{1-\mu}$, let the estimator $\overline{\beta_{t,D}}$ be generated from DGDFL algorithm, there holds
	\bes
	\left\|T_{C,K}^{1/2}L_K^{-1/2}\left(\overline{\beta_{t,D}}-\beta_t\right)\right\|_{L^2(\Omega)}&\leq& \ww{C}_{10}\Bigg[(\log t)^{H(\ttt)+1}\f{1}{\sqrt{\la}}\sup_{1\leq j \leq m}\bdj\left(\cdj^2+\cdj\ddj\right)\\
	&&+(\log t)^{H(\ttt)}(\cd+\dd)\Bigg], \ \ \ttt\geq0.
	\ees
\end{pro}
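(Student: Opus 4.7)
The plan is to use representation \eqref{dis_rep} together with the identity $\sum_{j=1}^m \tfrac{|D_j|}{|D|}T_{\wh{C}_{D_j},K} = T_{\wh{C}_D,K}$ (which follows from $\wh{C}_D = \sum_j\tfrac{|D_j|}{|D|}\wh{C}_{D_j}$) and the split $\beta_{s,D_j} = \beta_s+(\beta_{s,D_j}-\beta_s)$ to decompose $T_{C,K}^{1/2}L_K^{-1/2}(\overline{\beta_{t,D}}-\beta_t)$ into three parts $I_1, I_2, I_3$: the aggregated noise term with coefficient $\tfrac{1}{|D|}\sum_i Y_iL_K^{1/2}X_i - T_{C,K}L_K^{-1/2}\beta^*$; the global covariance mismatch involving $(T_{C,K}-T_{\wh{C}_D,K})L_K^{-1/2}\beta_s$; and the distributed local-error term involving $\sum_j\tfrac{|D_j|}{|D|}(T_{C,K}-T_{\wh{C}_{D_j},K})L_K^{-1/2}(\beta_{s,D_j}-\beta_s)$. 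The pieces $I_1,I_2$ will produce the second summand $(\log t)^{H(\ttt)}(\cd+\dd)$, while $I_3$ produces the first.

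Throughout I would use a single operator gadget: for any subset $D'\subseteq D$, write
\begin{equation*}
T_{C,K}^{1/2}\pi_{s+1}^{t-1}(T_{C,K})(T_{C,K}-T_{\wh{C}_{D'},K}) = T_{C,K}^{1/2}\pi_{s+1}^{t-1}(T_{C,K})(\la I+T_{C,K})^{1/2}\cdot(\la I+T_{C,K})^{-1/2}(T_{C,K}-T_{\wh{C}_{D'},K}),
\end{equation*}
where the second factor has operator norm $\Cb_{D',\la}$ and the first is a function of $T_{C,K}$ alone. For $I_1$, the noise vector is $s$-independent, so $\sum_s\gamma_s\pi_{s+1}^{t-1}(T_{C,K}) = g_t(T_{C,K})$ can be pulled inside; a direct pointwise analysis of $\sqrt{1+\la/\lambda}(1-\pi_0^{t-1}(\lambda))$ using $(1-\pi_0^{t-1}(\lambda))\le\min(1,\lambda\sum_i\gamma_i)$ shows that $\|T_{C,K}^{1/2}(\la I+T_{C,K})^{1/2}g_t(T_{C,K})\|$ is bounded by an absolute constant once $\la t^{1-\mu}\le 1$, yielding $\|I_1\|\lesssim\dd$. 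For $I_2$ the further split $\beta_s = \beta^* - (\beta^*-\beta_s)$ reduces the $\beta^*$-piece to the same operator-norm bound (giving an $O(1)\cd$ contribution), while the $(\beta^*-\beta_s)$-piece is treated exactly as the term $S_1$ in the proof of Proposition \ref{cknorm}, using the Theorem \ref{datafreees} decay $\|L_K^{-1/2}(\beta^*-\beta_s)\|\le C_{\ttt,\gamma}\|g^*\|s^{-\ttt(1-\mu)}$ and Lemma \ref{loglem} at $\tau = \ttt(1-\mu)$; this gives $O(1)\cd$ for $\ttt>0$ and $O(\log t)\cd$ for $\ttt=0$, i.e., $O((\log t)^{H(\ttt)})\cd$.

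The principal term is $I_3$. First pass from $\sum_j\tfrac{|D_j|}{|D|}$ to $\sup_{1\le j\le m}$. For each $j$ estimate term by term in $s$: extract $\cdj$ via the gadget above, leaving $\|T_{C,K}^{1/2}\pi_{s+1}^{t-1}(T_{C,K})(\la I+T_{C,K})^{1/2}\|\cdot\|L_K^{-1/2}(\beta_{s,D_j}-\beta_s)\|_{L^2(\Omega)}$. For the vector norm, invoke Proposition \ref{knorm} on the local data set $D_j$ at iteration $s$. The calibration $\la = 1/t^{1-\mu}$ guarantees $\la s^{1-\mu}\le 1$ for every $s\le t-1$, so the factor $((\log s)^{H(\ttt)}+\la s^{1-\mu})$ from Proposition \ref{knorm} is uniformly at most $2(\log t)^{H(\ttt)}$, and one picks up $\bdj(\cdj+\ddj)/\sqrt{\la}$. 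Multiplying by the already extracted $\cdj$ yields the target structure $\bdj(\cdj^2+\cdj\ddj)/\sqrt{\la}$.

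What remains is to sum $\sum_s\gamma_s\|T_{C,K}^{1/2}\pi_{s+1}^{t-1}(T_{C,K})(\la I+T_{C,K})^{1/2}\|$ in $s$. Operator monotonicity $(\la I+T_{C,K})^{1/2}\preceq \sqrt{\la}\,I+T_{C,K}^{1/2}$ splits this into $\sqrt{\la}\sum_s\gamma_s\|T_{C,K}^{1/2}\pi_{s+1}^{t-1}(T_{C,K})\|$ (Lemma \ref{pilem} at $\theta=1/2$ combined with $\sqrt{\la}=t^{-(1-\mu)/2}$ gives $O(1)$) and $\sum_s\gamma_s\|T_{C,K}\pi_{s+1}^{t-1}(T_{C,K})\|$ (Lemma \ref{pilem} at $\theta=1$ combined with Lemma \ref{loglem} at $\tau=0$ gives $O(\log t)$). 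Together with the $(\log t)^{H(\ttt)}$ inherited from Proposition \ref{knorm}, this produces the $(\log t)^{H(\ttt)+1}$ exponent in $I_3$, so summing the three bounds delivers the claimed inequality with some absolute constant $\ww{C}_{10}$. The main obstacle is the careful bookkeeping of two distinct logarithmic sources — one from Lemma \ref{loglem} inherent to the summation $\sum_s\gamma_s/\sum_{j>s}\gamma_j$, one from the local RKHS estimate when $\ttt=0$ — together with the verification that a single fixed value of $\la$ suffices inside Proposition \ref{knorm} uniformly in $s$; the calibration $\la = 1/t^{1-\mu}$ was precisely chosen to make this uniformity work.
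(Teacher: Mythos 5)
Your proposal is correct and follows essentially the same route as the paper's proof: the same decomposition of \eqref{dis_rep} into the aggregated noise term, the global covariance-mismatch terms (handled via the identity $\sum_{j}\tfrac{|D_j|}{|D|}T_{\wh{C}_{D_j},K}=T_{\wh{C}_{D},K}$ and the split $\beta_s=\beta^*-(\beta^*-\beta_s)$), and the local-error term, where Proposition \ref{knorm} is applied to each $D_j$ uniformly in $s$ thanks to the calibration $\la=1/t^{1-\mu}$, yielding the factor $\bdj(\cdj^2+\cdj\ddj)/\sqrt{\la}$ with the extra $\log t$ from the weighted operator sums. Your minor variations (direct spectral calculus for the noise term instead of Lemma \ref{sumlalem}, and the split $(\la I+T_{C,K})^{1/2}\preceq\sqrt{\la}\,I+T_{C,K}^{1/2}$ with Lemma \ref{pilem} at $\theta=1/2$ and $\theta=1$ in place of the paper's bound via $\|(\la I+T_{C,K})\pi_{s+1}^{t-1}(T_{C,K})\|$) lead to the same estimates.
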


\begin{proof}

After taking norms on both sides of \eqref{dis_rep}, we have
\bes
\begin{aligned}
		\left\|T_{C,K}^{1/2}L_K^{-1/2}\left(\overline{\beta_{t,D}}-\beta_t\right)\right\|_{L^2(\Omega)}\leq& \left\|T_{C,K}^{1/2}\sum_{s=0}^{t-1}\gamma_s\pi_{s+1}^{t-1}\left(T_{C,K}\right)\left(\f{1}{|D|}\sum_{i=1}^{|D|}Y_iL_K^{1/2}X_i-T_{C,K}L_K^{-1/2}\beta^*\right)\right\|_{L^2(\Omega)}\\
		&+\left\|T_{C,K}^{1/2}\sum_{s=0}^{t-1}\gamma_s\pi_{s+1}^{t-1}\left(T_{C,K}\right)\sum_{j=1}^m\f{|D_j|}{|D|}\left(T_{C,K}-T_{\wh{C}_{D_j},K}\right)L_K^{-1/2}\beta_{s,D_j}\right\|_{L^2(\Omega)}\\
		:=&\bar{S}_1+\bar{S}_2.
\end{aligned}
\ees
Using Lemma \ref{sumlalem}, we can estimate $\bar{S}_1$ as 
\bes
\begin{aligned}
\bar{S}_1\leq&\left\|\left(\la I+T_{C,K}\right)^{1/2}\sum_{s=0}^{t-1}\gamma_s\pi_{s+1}^{t-1}\left(T_{C,K}\right)\left(\f{1}{|D|}\sum_{i=1}^{|D|}Y_iL_K^{1/2}X_i-T_{C,K}L_K^{-1/2}\beta^*\right)\right\|_{L^2(\Omega)}\\
\leq&\left\|\sum_{s=0}^{t-1}\gamma_s(\la I+T_{C,K})\pi_{s+1}^{t-1}(T_{C,K})\right\|\left\|(\la I+T_{C,K})^{-1/2}\left(\f{1}{|D|}\sum_{i=1}^{|D|}Y_iL_K^{1/2}X_i-T_{C,K}L_K^{-1/2}\beta^*\right)\right\|_{L^2(\Omega)}\\
\leq&D_{\mu,\gamma}\left(1+\la t^{1-\mu}\right)\dd.
\end{aligned}
\ees
For $\bar{S}_2$, we split it into three terms as follows,
\bes
\begin{aligned}
	\bar{S}_2\leq&\left\|T_{C,K}^{1/2}\sum_{s=0}^{t-1}\gamma_s\pi_{s+1}^{t-1}\left(T_{C,K}\right)\sum_{j=1}^m\f{|D_j|}{|D|}\left(T_{C,K}-T_{\wh{C}_{D_j},K}\right)L_K^{-1/2}(\beta_{s,D_j}-\beta_s)\right\|_{L^2(\Omega)}\\
	&+\left\|T_{C,K}^{1/2}\sum_{s=0}^{t-1}\gamma_s\pi_{s+1}^{t-1}\left(T_{C,K}\right)\sum_{j=1}^m\f{|D_j|}{|D|}\left(T_{C,K}-T_{\wh{C}_{D_j},K}\right)L_K^{-1/2}(\beta_s-\beta^*)\right\|_{L^2(\Omega)}\\
	&+\left\|T_{C,K}^{1/2}\sum_{s=0}^{t-1}\gamma_s\pi_{s+1}^{t-1}\left(T_{C,K}\right)\sum_{j=1}^m\f{|D_j|}{|D|}\left(T_{C,K}-T_{\wh{C}_{D_j},K}\right)L_K^{-1/2}\beta^*\right\|_{L^2(\Omega)}\\
	:=&\bar{S}_{2,1}+\bar{S}_{2,2}+\bar{S}_{2,3}.
\end{aligned}
\ees
Since $T_{\wh{C}_{D_j},K}=L_K^{1/2}L_{\wh{C}_{D_j}}L_K^{1/2}=\f{1}{|D_j|}\sum_{X\in D_j(X)}\left\nn L_K^{1/2}X,\cdot\right\mm L_K^{1/2}X$,  it is easy to see
\bes
\sum_{j=1}^m\f{|D_j|}{|D|}T_{\wh{C}_{D_j},K}=\sum_{j-1}^m\f{|D_j|}{|D|}\f{1}{|D_j|}\sum_{X\in D_j(X)}\left\nn L_K^{1/2}X,\cdot\right\mm L_K^{1/2}X=\f{1}{|D|}\sum_{i=1}^{|D|}\left\nn L_K^{1/2}X_i,\cdot\right\mm L_K^{1/2}X_i=T_{\wh{C}_{D},K},
\ees
where $D_j(X)=\{X:(X,Y)\in D_j \ \text{for} \  \text{some} \ Y\}$. Then we know that
\bes
\sum_{j=1}^m\f{|D_j|}{|D|}(\la I+T_{C,K})^{-1/2}\left(T_{C,K}-T_{\wh{C}_{D_j},K}\right)=(\la I+T_{C,K})^{-1/2}\left(T_{C,K}-T_{\wh{C}_{D},K}\right).
\ees
Then following the same procedure as that in getting \eqref{gammatk}, we have
\bes
\begin{aligned}
\bar{S}_{2,2}\leq&\sum_{s=0}^{t-1}\gamma_s\left\|\left(\la I+T_{C,K}\right)\pi_{s+1}^{t-1}(T_{C,K})\right\|\left\|\sum_{j=1}^m\f{|D_j|}{|D|}(\la I+T_{C,K})^{-1/2}\left(T_{C,K}-T_{\wh{C}_{D_j},K}\right)\right\|\\
&\times\left\|L_K^{-1/2}(\beta_s-\beta^*)\right\|_{L^2(\Omega)}\\
\leq&\cd\sum_{s=0}^{t-1}\gamma_s\left\|\left(\la I+T_{C,K}\right)\pi_{s+1}^{t-1}(T_{C,K})\right\|\left\|L_K^{-1/2}(\beta_s-\beta^*)\right\|_{L^2(\Omega)}\leq\ww{C}_{1}\cd\left((\log t)^{H(\ttt)}+\la t^{1-\mu}\right)
\end{aligned}
\ees
with $\ww{C}_1$ given as before. For $\bar{S}_{2,3}$, using Lemma \ref{sumlalem}, we have
\bes
\bar{S}_{2,3}&\leq& \left\|\sum_{s=0}^{t-1}\gamma_s(\la I+T_{C,K})\pi_{s+1}^{t-1}(T_{C,K})\right\|\left\|\sum_{j=1}^m\f{|D_j|}{|D|}(\la I+T_{C,K})^{-1/2}\left(T_{C,K}-T_{\wh{C}_{D_j},K}\right)\right\|\|T_{C,K}^\ttt g^*\|_{L^2(\Omega)}\\
&\leq&\ww{C}_{6}\left(1+\la t^{1-\mu}\right)\cd
\ees
with $\ww{C}_6$ defined as before. Now we estimate $\bar S_{2,1}$ as
\bes
\begin{aligned}
\bar{S}_{2,1}\leq&\sum_{s=0}^{t-1}\gamma_s\left\|\left(\la I+T_{C,K}\right)\pi_{s+1}^{t-1}\left(T_{C,K}\right)\right\|\left\|\sum_{j=1}^m\f{|D_j|}{|D|}\left(\la I+T_{C,K}\right)^{-1/2}\left(T_{C,K}-T_{\wh{C}_{D_j},K}\right)L_K^{-1/2}(\beta_{s,D_j}-\beta_s)\right\|_{L^2(\Omega)}\\
\leq&\sum_{s=0}^{t-1}\gamma_s\left\|\left(\la I+T_{C,K}\right)\pi_{s+1}^{t-1}\left(T_{C,K}\right)\right\|\sup_{1\leq j\leq m}\left\|\left(\la I+T_{C,K}\right)^{-1/2}\left(T_{C,K}-T_{\wh{C}_{D_j},K}\right)\right\|\left\|L_K^{-1/2}(\beta_{s,D_j}-\beta_s)\right\|_{L^2(\Omega)}\\
\leq&\sum_{s=0}^{t-1}\gamma_s\left\|\left(\la I+T_{C,K}\right)\pi_{s+1}^{t-1}\left(T_{C,K}\right)\right\|\sup_{1\leq j\leq m}\cdj\left\|L_K^{-1/2}(\beta_{s,D_j}-\beta_s)\right\|_{L^2(\Omega)}.
\end{aligned}
\ees
Applying Proposition \ref{knorm} to data set $D_j$, $j=1,2,...,m$, we have, for $s=1,2,...,t-1$ and  $\la=t^{1-\mu}$,
\bes
\llll\|L_K^{-1/2}\llll(\beta_{s,D_j}-\beta_s\rrrr)\rrrr\|_{L^2(\Omega)}&\leq&\ww{C}_{8}\f{1}{\sqrt{\la}}\left((\log t)^{H(\ttt)}+\la t^{1-\mu}\right)\bdj(\cdj+\ddj)\\
&\leq&2\ww{C}_{8}\f{1}{\sqrt{\la}}(\log t)^{H(\ttt)}\bdj(\cdj+\ddj).
\ees
Following the same procedure in getting \eqref{logbdd} and \eqref{gammmasum} with $T_{\wh{C}_D,K}$ replaced by $T_{C,K}$, we know, when $\la=1/t^{1-\mu}$,
\bes
\sum_{s=0}^{t-1}\gamma_s\left\|\left(\la I+T_{C,K}\right)\pi_{s+1}^{t-1}\left(T_{C,K}\right)\right\|\leq \ww{C}_{9}\log t,
\ees
where $\ww{C}_{9}=2[\gamma(B_C^2B_K^2)+15C_{1,\gamma}2^{\ttt(1-\mu)}+\gamma(1+\f{1}{1-\mu})]$. Then we arrive at 
\bes
\bar{S}_{2,1}\leq2\ww{C}_{8}\ww{C}_{9}(\log t)^{H(\ttt)+1}\f{1}{\sqrt{\la}}\sup_{1\leq j \leq m}\bdj\left(\cdj^2+\cdj\ddj\right).
\ees
Finally, combining the above estimates for $\bar{S}_{1}$, $\bar{S}_{2,1}$, $\bar{S}_{2,2}$, $\bar{S}_{2,3}$, we obtain 
\bes
\begin{aligned}
	\left\|T_{C,K}^{1/2}L_K^{-1/2}\left(\overline{\beta_{t,D}}-\beta_t\right)\right\|_{L^2(\Omega)}\leq& 2\ww{C}_{8}\ww{C}_{9}(\log t)^{H(\ttt)+1}\f{1}{\sqrt{\la}}\sup_{1\leq j \leq m}\bdj\left(\cdj^2+\cdj\ddj\right)\\
	&+\left[D_{\mu,\gamma}\dd+(\ww{C}_8)\cd\right]((\log t)^{H(\ttt)}+\la t^{1-\mu})\\
	\leq& \ww{C}_{10}\Bigg[(\log t)^{H(\ttt)+1}\f{1}{\sqrt{\la}}\sup_{1\leq j \leq m}\bdj\left(\cdj^2+\cdj\ddj\right)\\
	& +(\log t)^{H(\ttt)}(\cd+\dd)\Bigg]
\end{aligned}
\ees
where $\ww{C}_{10}=\max\{2\ww{C}_8\ww{C}_9,2D_{\mu,\gamma},2\ww{C}_8\}$.
\end{proof}

\subsection{Convergence analysis: proofs of Theorem \ref{dgdthm1}, Theorem \ref{dgdthm2}} \label{thm3thm4proof}
This subsection aims to provide proofs of Theorems \ref{dgdthm1} and \ref{dgdthm2}. When $\ttt=0$ and   $m=1$, Theorem \ref{gdfl_thm1} directly implies the desired results in Theorem \ref{dgdthm1} and Theorem \ref{dgdthm2}. In this subsection, we focus on the case $\ttt>0$.
According to \eqref{bdest}, \eqref{cdest}, Proposition \ref{ddest}, with probability at least $1-4\delta$, the following bounds hold simultaneously,
\bes
&&\bdj\cdj^2 \leq 4\left(\log\f{8}{\delta}\right)^4\left(\f{\adj}{\sqrt{\la}}+1\right)^2\adj^2\\
&&\bdj\cdj\ddj	\leq
\begin{dcases}
\ww{C}_{11}\f{1}{\delta}\left(\log\f{8}{\delta}\right)^4\left(\f{\adj}{\sqrt{\la}}+1\right)^2\adj^2 ,\quad if \  \ \eqref{2momc} \ holds,\\
	\ww{C}_{11}\left(\log\f{8}{\delta}\right)^4\left(\f{\adj}{\sqrt{\la}}+1\right)^2\adj^2 ,\quad if \  \ \eqref{pmomc} \ holds,\\
\end{dcases}
\ees
where $\ww{C}_{11}=4\left(\f{(M+\sigma)}{\kkk}+2\|g^*\|_{L^2(\Omega)}\right)$. Then we know with probability at least $1-4m\delta$,
\bea
\nono&&\sup_{1\leq j \leq m}\bdj\left(\cdj^2+\cdj\ddj\right)\\
&&\leq
\begin{dcases}
	\ww{C}_{12}\f{1}{\delta}\left(\log\f{8}{\delta}\right)^4\sup_{1\leq j\leq m}\left(\f{\adj}{\sqrt{\la}}+1\right)^2\adj^2 ,\quad if \  \ \eqref{2momc} \ holds,\\
	\ww{C}_{12}\left(\log\f{8}{\delta}\right)^4\sup_{1\leq j\leq m}\left(\f{\adj}{\sqrt{\la}}+1\right)^2\adj^2 ,\quad if \  \ \eqref{pmomc} \ holds, \\
\end{dcases}
 \label{supbdd1.7}
\eea
where $$\ww{C}_{12}=\ww{C}_{11}+4.$$ Note that 
\bea
\adj\leq\max\{2\kkk^2\nu,2\kkk\nu\}\sqrt{\f{m\la^{-\aaa}}{|D|}}\left(\sqrt{\f{m\la^{\aaa-1}}{|D|}}+1\right).
\eea
If the noise condition \eqref{2momc} is satisfied, then, when $\la=1/t^{1-\mu}$, $t=\left\lfloor|D|^{\f{1}{(2\theta+\aaa+1)(1-\mu)}}\right\rfloor$, and the total number m of the local processors satisfy 
\bes
m\leq\f{\left\lfloor|D|^{\f{1}{2\ttt+\aaa+1}}\right\rfloor^{\f{\ttt}{2}}}{(\log|D|)^{\f{5}{2}}},
\ees
we have
\bea
\sqrt{\f{m\la^{\aaa-1}}{|D|}}\leq\la^{-\f{\ttt}{4}}\times\la^{\f{\aaa-1}{2}}\times\la^{\ttt+\f{\aaa+1}{2}}=\la^{\aaa+\f{3\ttt}{4}}\leq1  \label{adeq1}
\eea
and  
\bea
\adj\leq 2\max\{2\kkk^2\nu,2\kkk\nu\}\sqrt{\f{m\la^{-\aaa}}{|D|}}.   \label{adeq2}
\eea
Meanwhile, we know that 
\bes
&&\f{\adj}{\sqrt{\la}}\leq2\max\{2\kkk^2\mu,2\kkk\nu\}\sqrt{\f{m\la^{-\aaa-1}}{|D|}}\leq2\max\{2\kkk^2\nu,2\kkk\nu\}.
\ees
After scaling on $\delta$, we have with probability at least $1-\f{\delta}{2}$, there holds,
\bes
&&(\log t)^{H(\ttt)+1}\f{1}{\sqrt{\la}}\sup_{1\leq j \leq m}\bdj\left(\cdj^2+\cdj\ddj\right)\\
&&\leq8\ww{C}_{12}\f{m}{\delta}\left(\log \f{64m}{\delta}\right)^4(\log t)\left(\f{\adj}{\sqrt{\la}}+1\right)^2\f{\adj^2}{\sqrt{\la}}\\
&&\leq \ww{C}_{13}\f{m}{\delta}\left(\log |D|\right)^4\left(\log\f{64}{\delta}\right)^4(\log |D|)\f{m\la^{-\aaa}}{|D|}\times\f{1}{\sqrt{\la}}\\
&&\leq \ww{C}_{13}\f{1}{\delta}\left(\log\f{64}{\delta}\right)^4\left(\log |D|\right)^{5}m^2\times\la^{-\aaa}\times\la^{2\ttt+\aaa+1}\times\la^{-1/2}\\
&&\leq 	\ww{C}_{13}\f{1}{\delta}\left(\log\f{64}{\delta}\right)^4|D|^{-\f{\ttt+\f{1}{2}}{2\ttt+\aaa+1}}
\ees
where the constant $\ww{C}_{13}=8\ww{C}_{12}(2\max\{2\kkk^2\nu,2\kkk\nu\}+1)^4\f{1}{(2\ttt+\aaa+1)(1-\mu)}$.
From inequality \eqref{cdest} and Proposition \ref{ddest}, with probability at least $1-\f{\delta}{2}$, the following inequalities hold simultaneously, 
\bea
&&\cd\leq\left(\log\f{8}{\delta}\right)\ad\leq (2\kkk^2\nu+2\kkk\nu)\left(\log\f{8}{\delta}\right)|D|^{-\f{\ttt+\f{1}{2}}{2\ttt+\aaa+1}},  \label{cdbdd1.7}\\
&&\dd\leq\ww{C}_{11}\f{4}{\delta}\left(\log\f{64}{\delta}\right)\ad\leq \ww{C}_{11}(2\kkk^2\nu+2\kkk\nu)\f{4}{\delta}\left(\log\f{64}{\delta}\right)|D|^{-\f{\ttt+\f{1}{2}}{2\ttt+\aaa+1}}.  \label{ddbdd1.7}
\eea
Then we know, with probability at least $1-\delta$,
\bes
\left\|T_{C,K}^{1/2}L_K^{-1/2}\left(\overline{\beta_{t,D}}-\beta_t\right)\right\|_{L^2(\Omega)}\leq\ww{C}_{14}\f{1}{\delta}\left(\log\f{64}{\delta}\right)^4|D|^{-\f{\ttt+\f{1}{2}}{2\ttt+\aaa+1}}, 
\ees
where $\ww{C}_{14}=\ww{C}_{10}\ww{C}_{13}+\ww{C}_{10}(4\ww{C}_{11}+1)(2\kkk^2\nu+2\kkk\nu)$.

If the noise condition \eqref{pmomc} holds, then when $\la=1/t^{1-\mu}$, $t=\left\lfloor|D|^{\f{1}{(2\theta+\aaa+1)(1-\mu)}}\right\rfloor$, and the total number $m$ of the local processors satisfy 
\bes
m\leq\f{\left\lfloor|D|^{\f{1}{2\ttt+\aaa+1}}\right\rfloor^{\ttt}}{(\log|D|)^{5}}.
\ees
Following similar computations to \eqref{adeq1} and \eqref{adeq2}, we have
\bes
\sqrt{\f{m\la^{\aaa-1}}{|D|}}=\la^{-\f{\ttt}{2}}\times\la^{\f{\aaa-1}{2}}\times\la^{\ttt+\f{\aaa+1}{2}}=\la^{\aaa+\f{\ttt}{2}}\leq1
\ees
and  
$\adj\leq 2\max\{2\kkk^2\nu,2\kkk\nu\}\sqrt{\f{m\la^{-\aaa}}{|D|}}$.
Then we also obtain
$\f{\adj}{\sqrt{\la}}\leq2\max\{2\kkk^2\mu,2\kkk\nu\}\sqrt{\f{m\la^{-\aaa-1}}{|D|}}\leq2\max\{2\kkk^2\nu,2\kkk\nu\}$. Accordingly,
after scaling on $\delta$ and using the condition $m\leq\lfloor|D|^{\f{1}{2\ttt+\aaa+1}}\rfloor^{\ttt}/(\log|D|)^{5}$, we have with probability at least $1-\f{\delta}{2}$, there holds,
\bes
&&(\log t)^{H(\ttt)+1}\f{1}{\sqrt{\la}}\sup_{1\leq j \leq m}\bdj\left(\cdj^2+\cdj\ddj\right)\\
&&\leq\ww{C}_{12}\left(\log \f{64m}{\delta}\right)^4(\log t)\left(\f{\adj}{\sqrt{\la}}+1\right)^2\f{\adj^2}{\sqrt{\la}}\\
&&\leq \ww{C}_{13}\left(\log |D|\right)^4\left(\log\f{64}{\delta}\right)^4(\log |D|)\f{m\la^{-\aaa}}{|D|}\times\f{1}{\sqrt{\la}}\\
&&\leq \ww{C}_{13}\left(\log\f{64}{\delta}\right)^4\left(\log |D|\right)^{5}m\times\la^{-\aaa}\times\la^{2\ttt+\aaa+1}\times\la^{-1/2}\\
&&\leq \ww{C}_{13}\left(\log\f{64}{\delta}\right)^4|D|^{-\f{\ttt+\f{1}{2}}{2\ttt+\aaa+1}}.
\ees
From inequality \eqref{cdest} and Proposition \ref{ddest}, we know with probability at least $1-\f{\delta}{2}$, the following holds simultaneously
\bes
&&\cd\leq\left(\log\f{8}{\delta}\right)\ad\leq (2\kkk^2\nu+2\kkk\nu)\left(\log\f{8}{\delta}\right)|D|^{-\f{\ttt+\f{1}{2}}{2\ttt+\aaa+1}}\\
&&\dd\leq\ww{C}_{11}\left(\log\f{64}{\delta}\right)\ad\leq \ww{C}_{11}(2\kkk^2\nu+2\kkk\nu)\left(\log\f{64}{\delta}\right)|D|^{-\f{\ttt+\f{1}{2}}{2\ttt+\aaa+1}}.
\ees
Then we conclude that, with probability at least $1-\delta$
\bes
\left\|T_{C,K}^{1/2}L_K^{-1/2}\left(\overline{\beta_{t,D}}-\beta_t\right)\right\|_{L^2(\Omega)}\leq	\ww{C}_{14}\left(\log\f{64}{\delta}\right)^4|D|^{-\f{\ttt+\f{1}{2}}{2\ttt+\aaa+1}} 
\ees
with $\ww{C}_{14}$ defined as before.

Finally, when the noise condition \eqref{2momc} holds and and the total number $m$ of the local processors satisfy 
\eqref{disfl_m1},
we have
\bes
\begin{aligned}
	\huaE(\overline{\eta_{t,D}})-\huaE(\eta^*)=&\left\|T_{C,K}^{1/2}L_K^{-1/2}(\overline{\beta_{t,D}}-\beta^*)\right\|_{L^2(\Omega)}^2\\
	\leq&2\left\|T_{C,K}^{1/2}L_K^{-1/2}(\overline{\beta_{t,D}}-\beta_t)\right\|_{L^2(\Omega)}^2+2\left\|T_{C,K}^{1/2}L_K^{-1/2}(\beta_{t}-\beta^*)\right\|_{L^2(\Omega)}^2\\
	\leq&2\ww{C}_{14}^2\f{1}{\delta^2}\left(\log\f{64}{\delta}\right)^8|D|^{-\f{2\ttt+1}{2\ttt+\aaa+1}}+2C_{\theta+\f{1}{2},\gamma}^2\left\|g^*\right\|_{L^2(\Omega)}^2 t^{-(2\theta+1)(1-\mu)}\\
	\leq&	C_4^*\f{1}{\delta^2}\left(\log\f{64}{\delta}\right)^8|D|^{-\f{2\ttt+1}{2\ttt+\aaa+1}},
\end{aligned}
\ees
which concludes the proof of Theorem \ref{dgdthm1}. Correspondingly, when  the noise condition \eqref{pmomc} holds and the total number m of the local processors satisfy 
\eqref{disfl_m2}, we have
\bes
\begin{aligned}
	\huaE(\overline{\eta_{t,D}})-\huaE(\eta^*)=&\left\|T_{C,K}^{1/2}L_K^{-1/2}(\overline{\beta_{t,D}}-\beta^*)\right\|_{L^2(\Omega)}^2\\
	\leq&2\ww{C}_{14}^2\left(\log\f{64}{\delta}\right)^8|D|^{-\f{2\ttt+1}{2\ttt+\aaa+1}}+2C_{\theta+\f{1}{2},\gamma}^2\left\|g^*\right\|_{L^2(\Omega)}^2 t^{-(2\theta+1)(1-\mu)}\\
	\leq&C_4^*\left(\log\f{64}{\delta}\right)^8|D|^{-\f{2\ttt+1}{2\ttt+\aaa+1}},
\end{aligned}
\ees
where $C_4^*=2\ww{C}_{14}^2+C_{\theta+\f{1}{2},\gamma}^22^{(2\ttt+1)(1-\mu)+1}\left\|g^*\right\|_{L^2(\Omega)}^2$. We conclude the proof of Theorem \ref{dgdthm2}.

\subsection{Proofs of Corollary \ref{disflcor1}, Corollary \ref{disflcor2}}
\begin{proof}[Proof of Corrolary \ref{disflcor1}]
	Set
		\bes
	\mbb \huaT(|D|)= \left\{
	\begin{aligned}
		C_1^*|D|^{-\f{1}{\aaa+1}}\left(\log|D|\right)^2,\quad \ttt=0,\\
		C_4^*|D|^{-\f{2\ttt+1}{2\ttt+\aaa+1}},\quad \ttt>0.\\
	\end{aligned}
	\right. 
	\ees
Then we know from Theorem  \ref{dgdthm2} that for any $1<\delta<1$, there holds,
$\text{Prob}\{\huaE(\overline{\eta_{t,D}})-\huaE(\eta^*)>\huaT(|D|)(\log\f{64}{\delta})^8\}\leq\delta$.
	If we set $s=\huaT(|D|)(\log\f{64}{\delta})^8>\huaT(|D|)(\log64)^8$, then  $\delta=64\exp\{-(\f{s}{\huaT(|D|)})^{1/8}\}$. It follows that when $s>\huaT(|D|)(\log64)^8$,
	\bes
	Prob\Big\{\huaE(\overline{\eta_{t,D}})-\huaE(\eta^*)>s\Big\}\leq64\exp\left\{-\left(\f{s}{\huaT(|D|)}\right)^{1/8}\right\}.
	\ees
When 	$s\leq\huaT(|D|)(\log64)^8$, the above inequality also holds since the right hand side of the above inequality is greater than 1.
Hence we have
\bes
&&\mbb E\Big[\huaE(\overline{\eta_{t,D}})-\huaE(\eta^*)\Big]=\int_0^{\infty}Prob\Big\{\huaE(\overline{\eta_{t,D}})-\huaE(\eta^*)>s\Big\}ds\leq \int_0^{\infty}64\exp\left\{-\left(\f{s}{\huaT(|D|)}\right)^{1/8}\right\}ds\\
&&=64\times8\huaT(|D|)\int_0^{\infty}e^{-u}u^{7}du=64\Gamma(9)\huaT(|D|)
\leq\left\{
\begin{aligned}
	C_5^*|D|^{-\f{1}{\aaa+1}}\left(\log|D|\right)^{2},\quad \ttt=0,\\
	C_5^*|D|^{-\f{2\ttt+1}{2\ttt+\aaa+1}},\quad \ttt>0,\\
\end{aligned}
\right. 
\ees
where $C_5^*=64\Gamma(9)\max\{C_1^*,C_4^*\}$.	
	\end{proof}

	To prove Corollary \ref{disflcor2}, we need the following Borel-Cantelli lemma from \cite{Dudley2018}.
	\begin{lem}\label{bclem}
		Let $\{\eta_N\}$ be a sequence of events in some probability space and $\{\tau_N\}$ be a sequence of positive numbers satisfying $\lim_{N\rightarrow\infty}\tau_N=0$. If
		\bes
		\sum_{N=1}^{\infty}Prob\left(|\eta_N-\eta|>\tau_N\right)<\infty,
		\ees
		then $\eta_N$ converges to $\eta$ almost surely.
	\end{lem}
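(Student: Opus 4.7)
The plan is to deduce this directly from the classical (first) Borel--Cantelli lemma together with the hypothesis $\tau_N \to 0$. Concretely, set
$A_N = \{\omega : |\eta_N(\omega) - \eta(\omega)| > \tau_N\}$. The hypothesis $\sum_{N=1}^{\infty} \mathrm{Prob}(A_N) < \infty$ is precisely the summability required by the Borel--Cantelli lemma, so I would apply it to conclude $\mathrm{Prob}(\limsup_N A_N) = 0$.

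Next, I would translate the statement $\mathrm{Prob}(\limsup_N A_N)=0$ into a pointwise statement: for almost every $\omega$ in the underlying sample space, $\omega$ lies in only finitely many of the events $A_N$. Equivalently, for almost every $\omega$ there exists some integer $N_0(\omega)$ such that $|\eta_N(\omega) - \eta(\omega)| \leq \tau_N$ for every $N \geq N_0(\omega)$.

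Finally, since $\tau_N \to 0$ by assumption, given any $\varepsilon > 0$ I would choose $N_1$ large enough so that $\tau_N < \varepsilon$ for all $N \geq N_1$. Then for almost every $\omega$, taking $N \geq \max\{N_0(\omega), N_1\}$ yields $|\eta_N(\omega) - \eta(\omega)| \leq \tau_N < \varepsilon$, which shows $\eta_N \to \eta$ almost surely. There is no real obstacle here: the entire argument is a direct invocation of the first Borel--Cantelli lemma followed by the squeeze $|\eta_N - \eta| \leq \tau_N \to 0$; the only minor care is in reading ``almost surely'' through $\mathrm{Prob}(\limsup A_N)=0$, which is a standard measure-theoretic step.
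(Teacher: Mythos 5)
Your proof is correct and complete: the paper itself gives no proof of this lemma (it is quoted directly from Dudley's \emph{Real Analysis and Probability}), and your argument—first Borel--Cantelli applied to $A_N=\{|\eta_N-\eta|>\tau_N\}$, followed by the squeeze $|\eta_N-\eta|\leq\tau_N\to 0$ on the almost-sure event where only finitely many $A_N$ occur—is exactly the standard derivation that the citation points to. The only cosmetic issue is inherited from the paper's statement, which calls $\{\eta_N\}$ ``events'' when they are of course random variables, and you have read it correctly.
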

\begin{proof}[Proof of Corollary \ref{disflcor2}]	Denote $|D|=N$ and set $\delta=\delta_N=N^{-2}$ in Theorem \ref{dgdthm2}. Then
\bes
Prob\left\{N^{\f{2\ttt+1}{2\ttt+\aaa+1}(1-\eee)}\Big[\huaE(\overline{\eta_{t,D}})-\huaE(\eta^*)\Big]>\huaR_{\eee}(N)\left(\log\f{64}{\delta_N}\right)^8\right\}\leq\delta_N,  \ttt\geq0,
\ees
where
\bes
\huaR_{\eee}(N)	=\left\{
\begin{aligned}
C_5^*	N^{-\f{\eee}{\aaa+1}}(\log N)^{2},\quad \ttt=0,\\
C_5^*	N^{-\f{(2\ttt+1)\eee}{2\ttt+\aaa+1}},\quad \ttt>0.\\
\end{aligned}
\right. 
\ees
If we denote $\tau_N=\huaR_{\eee}(N)\left(\log\f{64}{\delta_N}\right)^8$, then we know $\tau_N\rightarrow0$,
\bes
\sum_{N=2}^{\infty}Prob\left\{N^{\f{2\ttt+1}{2\ttt+\aaa+1}(1-\eee)}\Big[\huaE(\overline{\eta_{t,D}})-\huaE(\eta^*)\Big]>\tau_N\right\}\leq\sum_{N=2}^\infty\delta_N<\infty.
\ees
Then using Lemma \ref{bclem} yields our desired result.
\end{proof}

\section{Analysis of semi-supervised DGDFL algorithm}

Recall the representation of $\dd$ in \eqref{dddef}. For convenience, we denote $$\ww{f}_{K,D}=\f{1}{|D|}\sum_{(X,Y)\in D}YL_K^{1/2}X$$ and we have the representation $\dd=\|(\la I+T_{C,K})^{-1/2}(\ww{f}_{K,D}-T_{C,K}L_K^{-1/2}\beta^*)\|_{L^2(\Omega)}$. According to the definition of $D^*$, the condition $|D_1|=|D_2|=\cdots=|D_m|=|D|/m$ and $|D_1^*|=|D_2^*|=\cdots=|D_m^*|=|D^*|/m$, we have 
\bes
&&\ww{f}_{K,D^*}=\f{1}{|D^*|}\sum_{(X_i^*,Y_i^*)\in D^*}Y_i^*L_K^{1/2}X_i^*=\f{1}{|D^*|}\sum_{j=1}^m\sum_{(X_i^*,Y_i^*)\in D_j^*}Y_i^*L_K^{1/2}X_i^*\\
&&=\f{1}{|D^*|}\sum_{j=1}^m\sum_{(X_i,Y_i)\in D_j}\f{|D_j^*|}{|D_j|}Y_iL_K^{1/2}X_i=\f{1}{|D^*|}\sum_{(X_i,Y_i)\in D}\f{|D^*|}{|D|}Y_iL_K^{1/2}X_i=\ww{f}_{K,D}.
\ees 
Then we know 
\bea
\dd=\dds.
\eea
For the local data set $D_j^*$, we have
\bes
\ww{f}_{K,D_j^*}=\f{1}{|D_j^*|}\sum_{(X_i^*,Y_i^*)\in D_j^*}Y_i^*L_K^{1/2}X_i^*=\f{1}{|D_j^*|}\sum_{(X_i,Y_i)\in D_j}\f{|D_j^*|}{|D_j|}Y_iL_K^{1/2}X_i=\ww{f}_{K,D_j}
\ees
and then we know for the $j$-th local machine, there holds,
\bea
\ddj=\ddjs, \ j=1,2,...,m.
\eea
According to Proposition \ref{distbeta_est} with the data set $D$ replaced by $D^*$, we have
\bes
\left\|T_{C,K}^{1/2}L_K^{-1/2}\left(\overline{\beta_{t,D^*}}-\beta_t\right)\right\|_{L^2(\Omega)}&\leq& \ww{C}_{10}\Bigg[(\log t)^{H(\ttt)+1}\f{1}{\sqrt{\la}}\sup_{1\leq j \leq m}\bdjs\left(\cdjs^2+\cdjs\ddjs\right)\\
&&+(\log t)^{H(\ttt)}(\cds+\dds)\Bigg].
\ees
Using a similar argument as  \eqref{cd_ad}, \eqref{dd_ad}, \eqref{supbdd1.7} and the fact that $\adjs\leq\adj$
 we know with probability at least $1-\f{\delta}{2}$,
\bea
\nono&&\f{1}{\sqrt{\la}}(\log t)^{H(\ttt)+1}\sup_{1\leq j \leq m}\bdjs\left(\cdjs^2+\cdjs\ddjs\right)	\\
&&\leq
\begin{dcases}
	\ww{C}_{12}(\log t)^{H(\ttt)+1}\f{8m}{\delta}\left(\log\f{64m}{\delta}\right)^4\eds ,\quad if \  \ \eqref{2momc} \ holds,\\
	\ww{C}_{12}(\log t)^{H(\ttt)+1}\left(\log\f{64m}{\delta}\right)^4\eds,\quad if \  \ \eqref{pmomc} \ holds, \\
\end{dcases}
 \label{semiedsbdd}
\eea
with
\bea
\eds=\sup_{1\leq j\leq m}\left(\f{\adjs}{\sqrt{\la}}+1\right)^2\f{\adjs}{\sqrt{\la}}\adj 
\eea
and with probability at least $1-\f{\delta}{2}$, the following two inequalities hold simultaneously,
\bea
&&\cds\leq\left(\log\f{8}{\delta}\right)\ads\leq\left(\log\f{8}{\delta}\right)\ad,   \label{semicd}\\
&&\dds=\dd\leq
\begin{dcases}
	\ww{C}_{11}\left(\f{4}{\delta}\log\f{64}{\delta}\right)\ad ,\quad if  \ \eqref{2momc} \ holds,\\
	\ww{C}_{11}\left(\log\f{64}{\delta}\right)\ad ,\quad if  \ \eqref{pmomc} \ holds.  \\
\end{dcases}
 \label{semidd}
\eea
After making these preparation, we are ready to give the proof of Theorem \ref{semithm1}.

\begin{proof}[Proof of Theorem \ref{semithm1}]
 When the noise condition \eqref{2momc} holds,	using the fact that $\la=1/t^{1-\mu}$, $t=\left\lfloor|D|^{\f{1}{(2\theta+\aaa+1)(1-\mu)}}\right\rfloor$, we have
\bes	
	\adjs&\leq&\max\{2\kkk^2\nu,2\kkk\nu\}\left(\f{1}{|D_j^*|\sqrt{\la}}+\f{\sqrt{\huaN(\la)}}{\sqrt{|D_j^*|}}\right)\\
	&\leq&\max\{2\kkk^2\nu,2\kkk\nu\}\left(m|D^*|^{-1}|D|^{\f{1}{2(2\ttt+\aaa+1)}}+\sqrt{m}|D^*|^{-\f{1}{2}}|D|^{\f{\aaa}{2(2\ttt+\aaa+1)}}\right).
\ees	
	According to condition \eqref{semi1m}, we know
	\bea
	m|D^*|^{-1}|D|^{\f{1}{2(2\ttt+\aaa+1)}}\leq \sqrt{m}|D^*|^{-\f{1}{2}}|D|^{\f{\aaa}{2(2\ttt+\aaa+1)}}
	\eea
and
\bea
\adjs\leq2\max\{2\kkk^2\nu,2\kkk\nu\}\sqrt{m}|D^*|^{-\f{1}{2}}|D|^{\f{\aaa}{2(2\ttt+\aaa+1)}},\label{adjs_est}
\eea	
which further imply that
\bea
\f{\adjs}{\sqrt{\la}}\leq2\max\{2\kkk^2\nu,2\kkk\nu\}\sqrt{m}|D^*|^{-\f{1}{2}}|D|^{\f{\aaa+1}{2(2\ttt+\aaa+1)}}.  \label{adjsla_est}
\eea	
It is also easy to see from  $m\leq |D^*|^{\f{1}{4}}|D|^{-\f{\aaa+1}{4(2\ttt+\aaa+1)}}$ in \eqref{semi1m} and the fact $|D|\leq|D^*|$, we know $\sqrt{m}|D^*|^{-\f{1}{2}}|D|^{\f{\aaa+1}{2(2\ttt+\aaa+1)}}\leq1$, and hence 
	\bea
	\left(\f{\adjs}{\sqrt{\la}}+1\right)^2\leq\left(2\max\{2\kkk^2\nu,2\kkk\nu\}+1 \right)^2 \label{adjslasq_est}.
	\eea
Also recall
\bea
\nono\adj&\leq&\max\{2\kkk^2\nu,2\kkk\nu\}\left(\f{1}{|D_j|\sqrt{\la}}+\f{\sqrt{\huaN(\la)}}{\sqrt{|D_j|}}\right)\\
&\leq&\max\{2\kkk^2\nu,2\kkk\nu\}\left(m|D|^{\f{-4\ttt-2\aaa-1}{2(2\ttt+\aaa+1)}}+\sqrt{m}|D|^{\f{-2\ttt-1}{2(2\ttt+\aaa+1)}}\right).
\eea	
	Then we have
	\bea
	\eds\leq \left(2\max\{2\kkk^2\nu,2\kkk\nu\}+1 \right)^3\left(m\sqrt{m}|D^*|^{-\f{1}{2}}|D|^{\f{-4\ttt-\aaa}{2(2\ttt+\aaa+1)}}+m|D^*|^{-\f{1}{2}}|D|^{\f{\aaa-2\ttt}{2(2\ttt+\aaa+1)}}\right) \label{ed_est}.
	\eea
Then we can return to inequality \eqref{semiedsbdd}. After using the size condition \eqref{semi1m} on $m$, we  get with probability at least $1-\f{\delta}{2}$,
\bes
&&\f{1}{\sqrt{\la}}(\log t)^{H(\ttt)+1}\sup_{1\leq j \leq m}\bdjs\left(\cdjs^2+\cdjs\ddjs\right)\\
&&\leq\ww{C}_{15}\f{1}{\delta}\left(\log \f{64}{\delta}\right)^4(\log |D|)^{H(\ttt)+5}\left(m^{\f{5}{2}}|D^*|^{-\f{1}{2}}|D|^{\f{-4\ttt-\aaa}{2(2\ttt+\aaa+1)}}+m^2|D^*|^{-\f{1}{2}}|D|^{\f{\aaa-2\ttt}{2(2\ttt+\aaa+1)}}\right)\\
&&\leq2\ww{C}_{15}\f{1}{\delta}\left(\log \f{64}{\delta}\right)^4|D|^{-\f{2\ttt+1}{2(2\ttt+\aaa+1)}}
\ees
	where $\ww{C}_{15}=8\ww{C}_{12}\left(2\max\{2\kkk^2\nu,2\kkk\nu\}+1 \right)^4(\f{1}{2\ttt+\aaa+1})^{H(\ttt)+1}+(\ww{C}_{11}+1)(\f{1}{(2\ttt+\aaa+1)(1-\mu)})^{H(\ttt)}$.
Finally, combining the above estimates with \eqref{semicd} and \eqref{semidd}, we have with probability at least $1-\delta$,
\bes
\begin{aligned}
	\huaE(\overline{\eta_{t,D^*}})-\huaE(\eta^*)=&\left\|T_{C,K}^{1/2}L_K^{-1/2}(\overline{\beta_{t,D^*}}-\beta^*)\right\|_{L^2(\Omega)}^2\\
	\leq&2\left\|T_{C,K}^{1/2}L_K^{-1/2}(\overline{\beta_{t,D^*}}-\beta_t)\right\|_{L^2(\Omega)}^2+2\left\|T_{C,K}^{1/2}L_K^{-1/2}(\beta_{t}-\beta^*)\right\|_{L^2(\Omega)}^2\\
	\leq&4\ww{C}_{15}^2\f{1}{\delta^2}\left(\log\f{64}{\delta}\right)^8|D|^{-\f{2\ttt+1}{2\ttt+\aaa+1}}(\log|D|)^{2H(\ttt)}+2C_{\theta+\f{1}{2},\gamma}^2\left\|g^*\right\|_{L^2(\Omega)}^2 t^{-(2\theta+1)(1-\mu)}\\
	\leq&C_6^*\f{1}{\delta^2}\left(\log\f{64}{\delta}\right)^8|D|^{-\f{2\ttt+1}{2\ttt+\aaa+1}}(\log|D|)^{2H(\ttt)}
\end{aligned}
\ees	
with $C_6^*=4\ww{C}_{15}^2+2^{(2\ttt+1)(1-\mu)+1}C_{\theta+\f{1}{2},\gamma}^2\left\|g^*\right\|_{L^2(\Omega)}^2$.	

We turn to handle the case of the noise condition \eqref{pmomc}. Following similar procedures with the above calculations of $\adj$ and $\adjs$,   we can derive \eqref{adjs_est}, \eqref{adjsla_est}, \eqref{adjslasq_est}, \eqref{ed_est} under our size condition \eqref{semi2m}. Return to inequality \eqref{semiedsbdd} and use the size condition \eqref{semi2m}. We  obtain that, with probability at least $1-\f{\delta}{2}$,
\bes
&&\f{1}{\sqrt{\la}}(\log t)^{H(\ttt)+1}\sup_{1\leq j \leq m}\bdjs\left(\cdjs^2+\cdjs\ddjs\right)\\
&&\leq\ww{C}_{15}\left(\log \f{64}{\delta}\right)^4(\log |D|)^{H(\ttt)+5}\left(m^{\f{3}{2}}|D^*|^{-\f{1}{2}}|D|^{\f{-4\ttt-\aaa}{2(2\ttt+\aaa+1)}}+m|D^*|^{-\f{1}{2}}|D|^{\f{\aaa-2\ttt}{2(2\ttt+\aaa+1)}}\right)\\
&&\leq2\ww{C}_{15}\f{1}{\delta}\left(\log \f{64}{\delta}\right)^4|D|^{-\f{2\ttt+1}{2(2\ttt+\aaa+1)}}
\ees
where $\ww{C}_{15}=8\ww{C}_{12}\left(2\max\{2\kkk^2\nu,2\kkk\nu\}+1 \right)^3(\f{1}{2\ttt+\aaa+1})^{H(\ttt)+1}$.
We finally conclude that, 
  with probability at least $1-\delta$,
\bes
\begin{aligned}
	\huaE(\overline{\eta_{t,D^*}})-\huaE(\eta^*)=&\left\|T_{C,K}^{1/2}L_K^{-1/2}(\overline{\beta_{t,D^*}}-\beta^*)\right\|_{L^2(\Omega)}^2\\
	\leq&4\ww{C}_{15}^2\left(\log\f{64}{\delta}\right)^8|D|^{-\f{2\ttt+1}{2\ttt+\aaa+1}}(\log|D|)^{2H(\ttt)}+2C_{\theta+\f{1}{2},\gamma}^2\left\|g^*\right\|_{L^2(\Omega)}^2 t^{-(2\theta+1)(1-\mu)}\\
	\leq&C_6^*\left(\log\f{64}{\delta}\right)^8|D|^{-\f{2\ttt+1}{2\ttt+\aaa+1}}.
\end{aligned}
\ees	
The proof of Theorem \ref{semithm1} is complete.
\end{proof}

{\section{Numerical experiments}
{In this section, we conduct some  numerical experiments with simulated data to verify the effectiveness of our proposed algorithms, and compare the results with the previous methodologies for the functional linear model \cite{yc2010,ls2022}.}

{\subsection{A simulation of the DGDFL algorithm}
In this subsection, we conduct a numerical simulation to verify the effectiveness of our proposed algorithms and the corresponding theoretical results, with the assumptions described in the paper being satisfied. We use the similar numerical settings as the previous papers \cite{cy2012},\cite{ggs2022} for the functional linear regression. }

{We consider the domain $\Omega=[0,1]$, the functional predictors $X$ are generated through the process
	\begin{equation}
		X(x)= \sum_{k=1}^N \frac{(-1)^{k+1} Z_k}{k^\alpha} \sqrt{2}\cos(k\pi x), \quad x \in[0,1],
	\end{equation}
	where $N=50, \alpha=0.5$ are utilized in our experiments, and $Z_k \sim U(-\sqrt{3},\sqrt{3})$ are independent uniform random variables. Then, the covariance function is
	\begin{equation}
		C(x,y) = \sum_{k=1}^N \frac{2}{k^{2\alpha}} \cos(k\pi x) \cos(k\pi y) , \quad x,y \in [0,1].
	\end{equation}
	Moreover, we consider the RKHS $\mathcal{H}_K$ induced by the Mercer kernel $K$ as
	\begin{equation}
		\begin{aligned}
			K(x,y) &= \sum_{k=1}^\infty \frac{2}{(k\pi)^4} \cos(k\pi x)   \cos(k\pi y)  \\
			&= -\frac{1}{3} B_4\left( \frac{|x-y|}{2} \right)  -\frac{1}{3} B_4\left( \frac{x+y}{2} \right) , \quad x,y \in [0,1],
		\end{aligned}
	\end{equation}
	where $B_k$ is the $k$-th Bernoulli polynomial, with the fact that
	\begin{equation*}
		B_{2m}(x)= (-1)^{m-1} 2(2m)! \sum_{k\geq 1} \frac{\cos(2\pi kx)}{(2\pi k)^{2m}}, \quad x\in [0,1].
	\end{equation*}
	Furthermore, we set the slope function $\beta^*= L_K^{1/2} T_{C,K}^\theta g^*$ to make the regularity assumption \eqref{regularity} being satisfied, where we choose $\theta=0.1$, and
	\begin{equation*}
		g^*(x)=  4\pi^2 \sum_{k=1}^N \sqrt{2} \cos(k\pi x), \quad x \in[0,1].
	\end{equation*}
	The random noise $\epsilon \sim \mathcal{N}(0,\sigma^2)$ is assumed to be independent of $X$ and follows the normal distribution. This makes the noise assumptions \eqref{2momc} and \eqref{pmomc} being satisfied. Moreover, since $\{Z_k\}_{k=1}^N$ are bounded random variables, the assumption \eqref{lk1/2} is also satisfied with some absolute constant $\kkk$.}

\begin{figure} [t]
	\centering
	\includegraphics[width=1\textwidth]{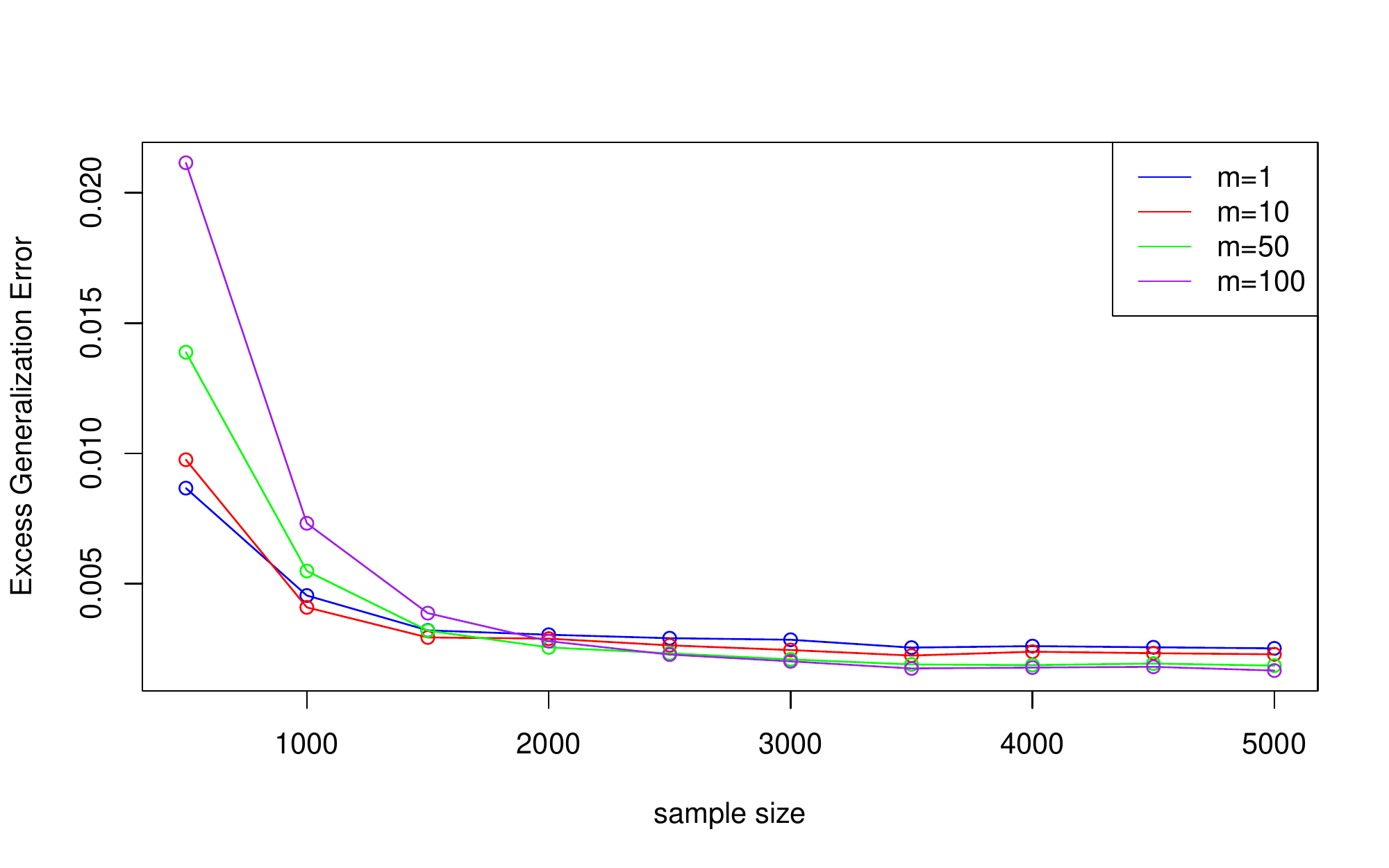}
	\caption{{The excess risk w.r.t. the sample size for the DGDFL algorithm, with the number of local machines being $m=1, 10, 50,100$ respectively, and $\sigma=1$. The experiments are repeated for 20 times.}}
	\label{figure1}
\end{figure}

{We then conduct the numerical experiments to examine the empirical performance of our proposed algorithms. For all the experiments, the stepsizes are selected as $\gamma_k= O(k^{-\mu})$ with  $\mu=0.1$. For each local machine, the iteration stops when $\left\| \beta_{t+1,D_j}- \beta_{t,D_j} \right\|_{L^2(\Omega)} \leq 10^{-4}$. The excess generalization error of the final estimator $\overline{\beta_{t,D}}$ is calculated using a testing sample with size 5000.}



{Figure \ref{figure1} and Figure \ref{figure2} exhibit the excess risk w.r.t. the sample size for our proposed DGDFL algorithm with $\sigma=1$ and $\sigma=1.5$ respectively. We conduct several experiments with the choice of different {numbers} of local machines. When $m=1$, this is in fact the GDFL algorithm. Firstly, we can observe that for both algorithms, the excess risk decreases quite fast with the increase of the sample size. This corresponds to our theoretical results that both algorithms can achieve the almost optimal learning rates $|D|^{-\beta}$ for some $\beta>0$. Secondly, when the sample size is small (e.g., $|D|=500$), the DGDFL algorithm performs worse when the number of local machines increases, this corresponds to our theoretical result that the restriction on the maximal number of local machines is strict when $|D|$ is small.  Finally, when the sample size is large (e.g., $|D|=5000$), the restriction on the maximal number of local machines is lenient, and the performances of the DGDFL algorithm are similar with {the} usage of whatever number of local machines satisfying such restriction. Therefore, for a large sample size, we might use more local machines to achieve unimpaired performance with even less computational cost on each local machine. This embodies the effectiveness of our proposed DGDFL algorithm.  }

\begin{figure} [t]
	\centering
	\includegraphics[width=1\textwidth]{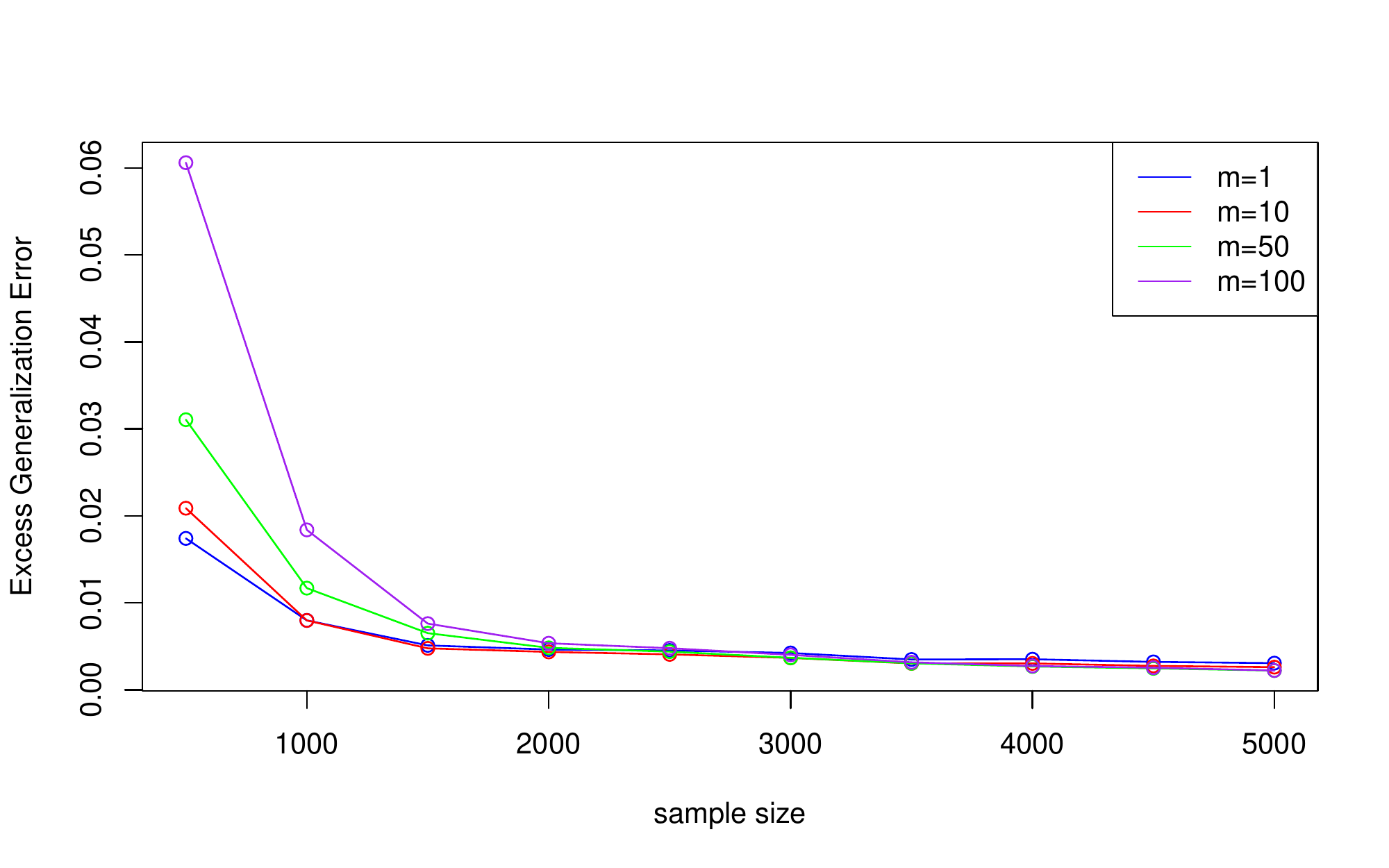}
	\caption{{The excess risk w.r.t. the sample size for the DGDFL algorithm, with the number of local machines being $m=1, 10, 50,100$ respectively, and $\sigma=1.5$. The experiments are repeated for 20 times.}}
	\label{figure2}
\end{figure}



{\subsection{Comparison with previous methods}}
{
	In this subsection, we compare our proposed GDFL and DGDFL algorithms with the previous methods for functional linear model, i.e., the classical reproducing kernel (RK) approach \cite{yc2010}, and a subsequently proposed  divide-and-conquer version of it called the divide-and-conquer reproducing kernel (DRK) approach \cite{ls2022}, to further verify the effectiveness of our proposed algorithms. }

{We consider the simulation setting of \cite{hh2007,yc2010} where the domain $\Omega=[0,1]$. The true slope function $\beta^*$ is given by
	\begin{equation}
		\beta^*(x)= \sum_{k=1}^{50} 4(-1)^{k+1}k^{-2} \phi_k(x),
	\end{equation}
	where $\phi_1(x)=1$ and $\phi_{k+1}(x)= \sqrt{2} cos(k\pi x)$ for $k\geq 1$. The functional predictors $X$ are generated through the process
	\begin{equation}
		X(x)= \sum_{k=1}^{50} (-1)^{k+1} k^{-\frac{\nu}{2}} Z_k \phi_k(x), \quad x \in[0,1],
	\end{equation}
	where $\nu=1.1$, and $Z_k \sim U(-\sqrt{3},\sqrt{3})$ are independently sampled from the uniform distribution on $[-\sqrt{3},\sqrt{3}]$.
	The random noise of the functional linear model is $\epsilon \sim \mathcal{N}(0,\sigma^2)$ with $\sigma=1.5$.}

\begin{table}[t] 
	{
		\centering
		\begin{tabular}{|c|c|c|c|c|} 
			\hline
			Data & Method & Estimation Error & Prediction Error & Computation Time [s] \\
			\hline
			\multirow{6}*{$|D|=100$} & GDFL & 0.1936 (0.1628) & 0.0788 (0.0703) & 1.0650 (0.2991) \\
			& DGDFL [2] & 0.1948 (0.1581) & 0.0774 (0.0678) & 0.5016 (0.1531) \\
			& DGDFL [5] & 0.2252 (0.1467) & 0.0888 (0.0698) & 0.1509 (0.0442) \\
			& RK & 0.2179 (0.1656)  & 0.0851 (0.0694) & 3.3786 (0.1084) \\
			& DRK [2] & 0.2086 (0.1587)  & 0.0820 (0.0681) & 0.8256 (0.0288) \\
			& DRK [5] & 0.2205 (0.1761)  & 0.0914 (0.0762) & 0.1996 (0.0087) \\
			\hline
			\multirow{6}*{$|D|=200$} & GDFL  & 0.1140 (0.0630) & 0.0403 (0.0256) & 3.749 (0.4958) \\
			& DGDFL [2] & 0.1180 (0.0681) & 0.0407 (0.0261) & 1.1659 (0.2815) \\
			& DGDFL [5] & 0.1204 (0.0703) & 0.0431 (0.0274) & 0.4562 (0.1049) \\
			& RK & 0.1189 (0.0905)  & 0.0420 (0.0305) & 17.0356 (0.4466) \\
			& DRK [2] & 0.1162 (0.0760)  & 0.0416 (0.0278) & 3.3752 (0.0767) \\
			& DRK [5] & 0.1294 (0.0840)  & 0.0458 (0.0303) & 0.5570 (0.0135) \\
			\hline
			\multirow{6}*{$|D|=500$} & GDFL & 0.0678 (0.0325) & 0.0214 (0.0117) & 16.1347 (1.0137) \\
			& DGDFL [2] & 0.0732 (0.0317) & 0.0220 (0.0113) & 5.1245 (0.5934) \\
			& DGDFL [5] & 0.0792 (0.0342) & 0.0238 (0.0124) & 1.1748 (0.1719) \\
			& RK & 0.0625 (0.0452)  & 0.0206 (0.0150) & 315.816 (11.457) \\
			& DRK [2] & 0.0724 (0.0492)  & 0.0230 (0.0152) & 33.5947 (0.8937) \\
			& DRK [5] & 0.0795 (0.0465)  & 0.0250 (0.0148) & 3.3646 (0.0436) \\
			\hline
		\end{tabular}	
		\caption{{Comparison of the estimation error, prediction error, and computation time of different algorithms for the simulation, the number inside the square brackets indicates the number of local machines. We repeat the experiments for 100 times, with the mean and standard deviation displayed.}} \label{table1}
	}
\end{table}

{We then conduct the numerical experiments to examine the empirical performance of our proposed algorithms and compare with the previous methods. 
	For all the experiments, we use the Gaussian kernel with bandwidth $0.33$, and the stepsizes are selected as $\gamma_k= O(k^{-\mu})$ with  $\mu=0$. For each local machine, the iteration stops when $\left\| \beta_{t+1,D_j}- \beta_{t,D_j} \right\|_{L^2(\Omega)} \leq \tau$, with the tolerance $\tau$ chosen based on the training sample size in the local machines. The estimation error of the final estimator $\overline{\beta_{t,D}}$ is calculated based on the true slope function $\beta^*$, and the prediction error (excess risk) is calculated by a testing sample with size 1000. The computation time represents the average running time of the local machines.}

{We present the performance of different algorithms in Table \ref{table1}, with the training sample size $|D|$ chosen as 100, 200, and 500 respectively. The GDFL algorithm and the RK algorithm utilize the whole training sample for the training in one machine, while the DGDFL algorithm and the DRK algorithm are the divide-and-conquer algorithms based on them, and the number inside the square brackets indicates the number of local machines. }

{It can be observed that compared with the classical RK algorithm, which requires $O(|D|^3)$ computational cost due to the computation of the inverse of the kernel matrix, the GDFL algorithm can achieve comparable estimation error and prediction error, with much less computational cost due to the avoidance of the calculation of the inverse matrix, especially when the sample size $|D|$ is quite large. For example, the RK algorithm would be very slow when $|D|=500$, while the GDFL algorithm only needs one fifteenth of the running time of the RK algorithm.} 

{The DGDFL algorithm we proposed is a divide-and-conquer approach of the GDFL algorithm, when $m$ local machines are utilized, each contains $|D|/m$ training samples, thus making the computational cost of the DGDFL algorithm much smaller than that of the original GDFL algorithm due to a smaller training sample size in each local machine. The DRK algorithm is a divide-and-conquer approach of the classical RK approach, and it can approximately diminish the computational cost to $1/m^2$ of the initial requirements \cite{ls2022}. These are also verified by our numerical simulation: while increasing the number of local machines, the estimation error and the prediction error are almost unchanged or only getting slightly worse, but the mean and variance of the computation time are largely improved.}

{Moreover, it can be noticed in Table \ref{table1} that our proposed DGDFL algorithm can achieve similar accuracy as the classical RK approach, while largely reducing the computational cost, especially when the sample size $|D|$ is quite large or more local machines are utilized. Furthermore, compared with the DRK algorithm with the same local machines, the DGDFL algorithm can also achieve comparable accuracy with a smaller computational cost, especially when the sample size is larger.}

\begin{figure} [t]
\centering
\includegraphics[width=1.02\textwidth]{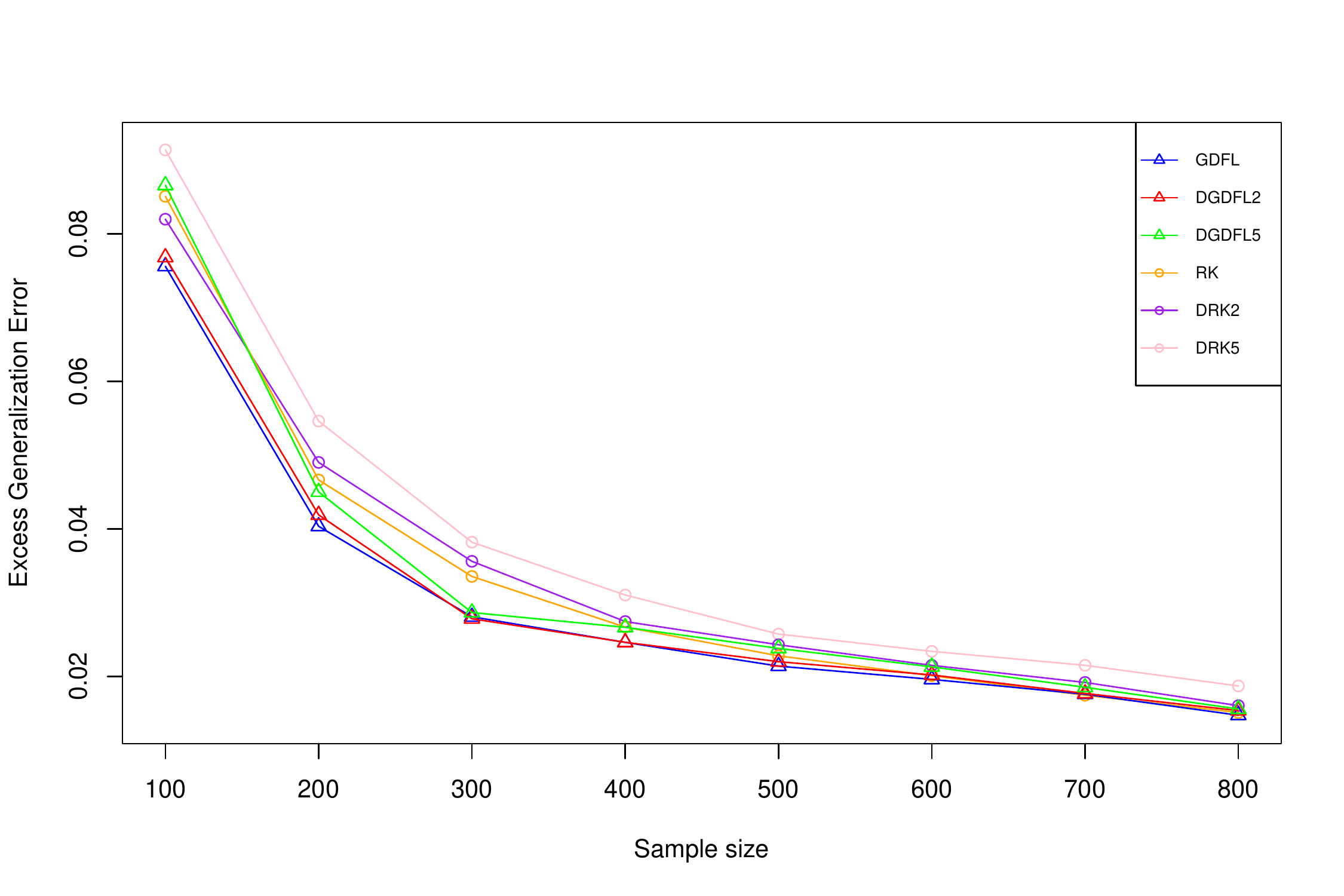}
\caption{{The excess risk w.r.t. the sample size for the GDFL, DGDFL, RK, and DRK algorithms, with the the number in the label indicating the number of local machines. The experiments are repeated for 50 times.}}
\label{figure3}
\end{figure}

\begin{figure} [t]
\centering
\includegraphics[width=1.02\textwidth]{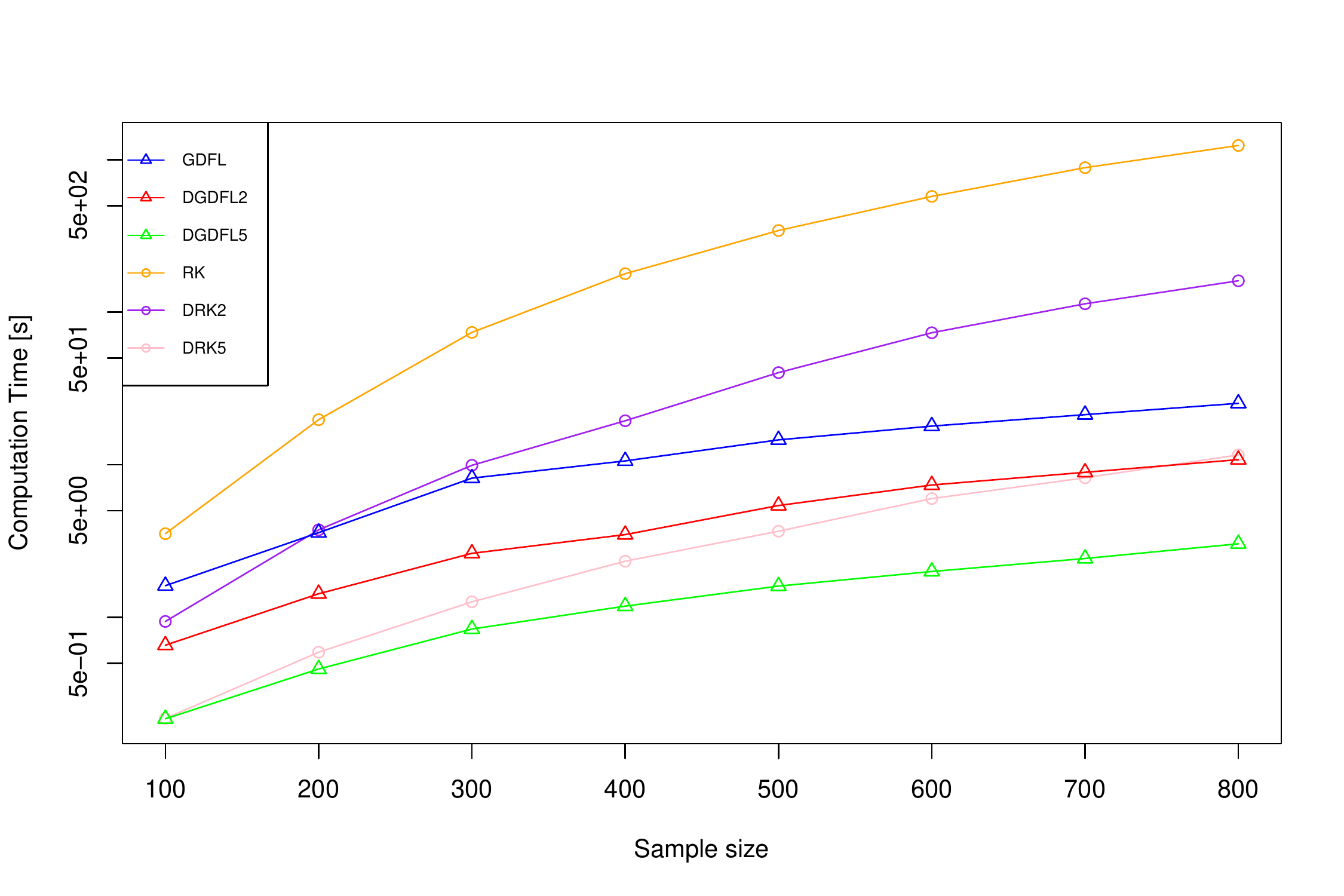}
\caption{{The computation time w.r.t. the sample size for the GDFL, DGDFL, RK, and DRK algorithms, with the the number in the label indicating the number of local machines. The y-axis is in the log scale. The experiments are repeated for 50 times.}}
\label{figure4}
\end{figure}

{We further plot the excess risk and the computation time w.r.t. the sample size for our proposed GDFL and DGDFL algorithms and the previously proposed RK and DRK algorithms in Figure \ref{figure3} and Figure \ref{figure4} respectively. It can be observed that when the sample size becomes larger, the DGDFL algorithm can achieve almost the same excess risk as the GDFL algorithm, while the computation time can be  largely improved. However, for the DRK algorithm, even though it can also largely improve the computation time compared with the RK algorithm, with the number of local machines increasing such as $m=5$, the excess risk of the DRK algorithm would become slightly worse than that of the RK algorithm.}

{As for the computation time of different algorithms shown in Figure \ref{figure4}, comparing the GDFL algorithm with the RK algorithm, or comparing the DGDFL algorithm with the DRK algorithm that utilize the same number of local machines, the running time of our proposed GDFL and DGDFL algorithms is always better. This advantage of the computation time is more remarkable when the sample size becomes larger, since our proposed algorithms have a lower order of the computational cost. Moreover, we can also notice that, when the sample size becomes quite large such as $|D|=800$, the DGDFL algorithm with only two local machines can even be slightly faster than the DRK algorithm with five local machines, and obtains a slightly better excess risk in the meantime. These numerical simulations further demonstrate the effectiveness and advantage of our proposed GDFL and DGDFL algorithms.}

\section*{Acknowledgements}
{The authors would like to thank AE, three anonymous referees for their nice and constructive comments that help improve the quality of the paper}. The work described in this paper was partially supported by InnoHK initiative, The Government of the HKSAR, Laboratory for AI-Powered Financial Technologies, the Research Grants Council
of Hong Kong [Project No. \# CityU 11308121 and No. N\_CityU102/20], and National Natural Science Foundation of China under [Project No. 11461161006]. The work by Jun Fan is partially supported by the Research Grants Council of Hong Kong [Project No. HKBU 12302819] and [Project No. HKBU 12301619].



\bibliography{ref}

\end{document}